\documentclass{article}

\usepackage{microtype}
\usepackage{graphicx}
\usepackage{subcaption}
\usepackage{booktabs} 

\usepackage{xurl}
\usepackage{hyperref}
\usepackage{multirow}

\usepackage[accepted]{icml2026}

\usepackage{amsmath}
\usepackage{amssymb}
\usepackage{mathtools}
\usepackage{amsthm}

\usepackage[capitalize,noabbrev]{cleveref}

\theoremstyle{plain}
\newtheorem{theorem}{Theorem}
\newtheorem{proposition}{Proposition}
\newtheorem{lemma}{Lemma}
\newtheorem{corollary}{Corollary}
\theoremstyle{definition}
\newtheorem{definition}{Definition}

\theoremstyle{remark}

\crefname{definition}{definition}{definitions}
\Crefname{definition}{Definition}{Definitions}

\usepackage[textsize=tiny]{todonotes}

\icmltitlerunning{Why are Linear RNNs More Parallelizable?}

\usepackage{graphicx} 
\usepackage{tikz}
\usepackage{pgfplots}
\pgfplotsset{compat=1.18} 
\usepgfplotslibrary{groupplots}

\usepackage{amsmath}
\usepackage{amssymb}
\usepackage{amsthm}
\usepackage{thm-restate}
\usepackage{mathtools}
\usepackage{bbm}
\usepackage{xcolor}
\usepackage{url}
\usepackage{paralist}
\usepackage{wrapfig}
\usepackage{hyperref}

\usepackage{cleveref}
\crefname{definition}{Definition}{Definitions}

\newcommand{\draftcomment}[3]{{\textcolor{#3}{[#2: \emph{#1}]}}}
\newcommand{\WM}[1]{\draftcomment{#1}{\textsc{wm}}{orange}}
\newcommand{\AS}[1]{\draftcomment{#1}{\textsc{as}}{purple}}

\newcommand{\AL}[1]{\draftcomment{#1}{\textsc{al}}{blue}}

\DeclarePairedDelimiter\abs{\lvert}{\rvert}%

\DeclarePairedDelimiter\floor{\lfloor}{\rfloor}

\newcommand\NC{\mathsf{NC}}
\newcommand\GapNC{\mathsf{GapNC}}
\newcommand\PNC{\mathsf{PNC}}
\newcommand\CeqNC{\mathsf{ENC}}
\newcommand\CeqNCorig{\ensuremath{\mathsf{C}_{=}\mathsf{NC}}}
\newcommand\AC{\mathsf{AC}}
\newcommand\TC{\mathsf{TC}}
\renewcommand\L{\mathsf{L}}

\renewcommand\P{\mathsf{P}}
\newcommand\NP{\mathsf{NP}}

\newcommand\FO{\mathsf{FO}}

\providecommand{\N}{\mathbb{N}}
\providecommand{\Z}{\mathbb{Z}}
\providecommand{\Q}{\mathbb{Q}}
\providecommand{\K}{\mathbb{K}}

\DeclareMathOperator{\diag}{diag}

\providecommand{\e}{\mathbf{e}}

\providecommand{\0}{\mathbf{0}}

\usepackage{tikz}
\usetikzlibrary{arrows.meta, positioning, matrix}

\usepackage{stmaryrd}

\newcommand{\ialphabet}{\Sigma}
\newcommand{\Aut}{A}
\newcommand{\circuitClass}{\mathcal{C}}

\newcommand\poly{\mathrm{poly}}

\newcommand\ReLU{\mathrm{ReLU}}
\newcommand\lnorm{\mathrm{lnorm}}

\newcommand{\OMIT}[1]{}
\newcommand{\Lang}{L}

\newcommand\head{\mathsf{head}}
\newcommand\pop{\mathsf{pop}}
\newcommand\push{\mathsf{push}}
\newcommand\noop{\mathsf{noop}}

\begin{document}
\sloppy

\twocolumn[
  \icmltitle{Why Are \emph{Linear} RNNs More Parallelizable?}



  \icmlsetsymbol{equal}{*}

  \begin{icmlauthorlist}
    \icmlauthor{William Merrill}{ai2}
    \icmlauthor{Hongjian Jiang}{rptu}
    \icmlauthor{Yanhong Li}{ai2}
    \icmlauthor{Anthony Lin}{rptu,mpi}
    \icmlauthor{Ashish Sabharwal}{ai2}
  \end{icmlauthorlist}

  \icmlaffiliation{ai2}{Allen Institute for AI}
  \icmlaffiliation{rptu}{Rheinland-Pfälzische Technische Universität}
  \icmlaffiliation{mpi}{Max-Planck Institute for Software Systems}

  \icmlcorrespondingauthor{William Merrill}{willm@allenai.org}

  \icmlkeywords{Machine Learning, ICML}

  \vskip 0.3in
]



\printAffiliationsAndNotice{}  

\begin{abstract}
    The community is increasingly exploring linear RNNs (LRNNs) as language models, motivated by their expressive power and parallelizability. While prior work establishes the expressivity benefits of LRNNs over transformers, it is unclear what makes LRNNs---but not traditional, \emph{nonlinear} RNNs---as easy to parallelize in practice as transformers. We answer this question by providing a tight connection between types of RNNs and standard complexity classes. We show that LRNNs can be viewed as log-depth (bounded fan-in) arithmetic circuits, which represents only a slight depth overhead relative to log-depth boolean circuits that transformers admit. Furthermore, we show that nonlinear RNNs can solve $\L$-complete problems (and even $\P$-complete ones, under polynomial precision), revealing a fundamental barrier to parallelizing them as efficiently as transformers. Our theory also identifies fine-grained expressivity differences between recent popular LRNN variants: permutation-diagonal LRNNs are $\NC^1$-complete whereas diagonal-plus-low-rank LRNNs are more expressive ($\PNC^1$-complete). We provide further insight by associating each type of RNN with a corresponding automata-theoretic model that it can simulate. Together, our results reveal fundamental tradeoffs between nonlinear RNNs and different variants of LRNNs, providing a foundation for designing LLM architectures that achieve an optimal balance between expressivity and parallelism.

\end{abstract}

\section{Introduction}

Parallelism and expressive power are both desirable properties in an LLM architecture that are, unfortunately, at odds \citep{merrill-sabharwal-2023-parallelism}.
RNN architecture design reflects this tradeoff: while older RNNs were nonlinear and highly sequential \citep{elman1990finding,hochreiter1997lstm}, more recent RNN architectures prefer a \emph{linear} state update \citep[inter alia]{bradbury2017qrnn,katharopoulos2020transformers,gu2024mamba} to enable parallel processing of long sequences \citep{blelloch-1990-prefix-sums}.
Linear RNNs (LRNNs) have been shown to be surprisingly theoretically expressive relative to transformers \citep{merrill2024illusion,grazzi2025unlocking} and practically performant \citep{beck2024xlstm,yang2025gateddeltanetworksimproving,peng2025rwkv}, but it remains unclear how LRNNs and nonlinear RNNs compare in expressive power, or, conversely, whether fundamental barriers prevent parallelizing nonlinear RNNs to the same degree as LRNNs.

In this work, we theoretically compare the parallelizability and expressive power of nonlinear RNNs and LRNNs through the lens of computational complexity theory.
A core result is a conditional separation in expressive power between the two classes: whereas LRNNs can only solve tasks that are in the circuit complexity class $\PNC^1$ (\Cref{thm:general-ub-pnc1}), nonlinear RNNs can express $\P$-complete problems with polynomial precision~(\Cref{cor:rnn-p-complete}) or $\L$-complete problems when restricted to log precision~(\Cref{thm:rnn-l-complete}).
In either regime, this suggests nonlinear RNNs can represent 
substantially less parallelizable
computation that's beyond the capabilities of LRNNs, assuming $\PNC^1 \neq \P$ with poly precision, or $\PNC^1 \neq \L$ with log precision.
\OMIT{
\WM{Could merge this old text in}
The best expressivity that could be expected from log-precision nonlinear RNNs is $\L$. Indeed, we show that they can solve certain $\L$-complete problems. In contrast, we show linear RNNs (even with polynomial precision) are contained in $\GapNC^1$, establishing a separation from nonlinear RNNs assuming $\GapNC^1 \neq \L$.
}

Conversely, these expressivity results have implications for the parallelizability of different types of RNNs. It follows that LRNNs---regardless of precision---can be simulated by $\NC$ circuits of depth $O(\log n \log^* n)$, i.e., with a negligible depth overhead of $O(\log^* n)$ beyond the
wall-clock time
we would expect for simulating a transformer.
In contrast, our results imply that nonlinear RNNs with log precision likely require $\Omega(\log^2 n)$ depth, constituting a $O(\log n)$ overhead relative to a transformer.
Moreover, with polynomial precision, nonlinear RNNs require more than polylog depth to simulate with bounded fan-in circuits, assuming $\NC \neq \P$.

\definecolor{darkgreen}{rgb}{0.0, 0.5, 0.0}

\newcommand{\complxclass}[1]{\textcolor{black}{\textit{#1}}}
\newcommand{\modelclass}[1]{\textcolor{purple}{\textrm{#1}}}
\newcommand{\modelname}[1]{\textcolor{blue}{\textit{#1}}}
\newcommand{\automatonclass}[1]{\textcolor{darkgreen}{\textrm{#1}}}

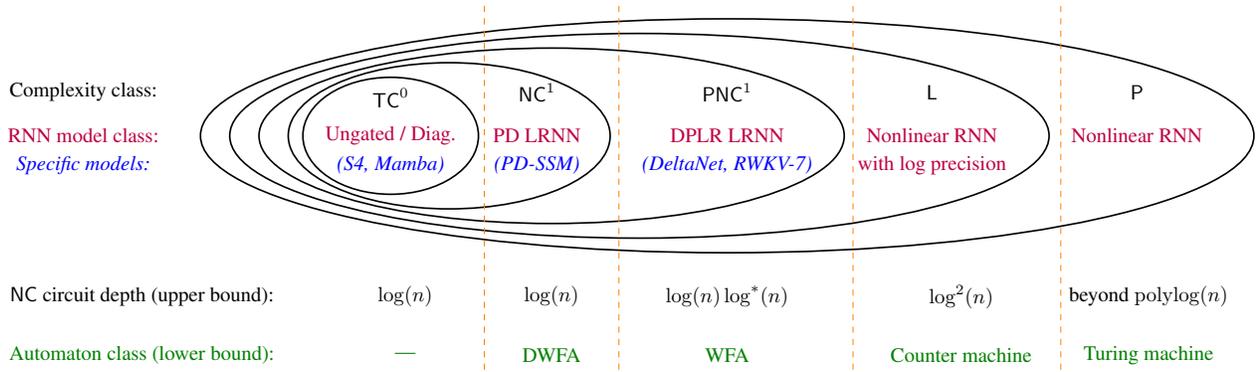
\begin{figure*}
    \centering
    \resizebox{0.98\textwidth}{!}{

    \begin{tikzpicture}[scale=1]
        
    \draw[thick] (0.25,0) ellipse (1.5cm and 1cm);
    \node at (0.25, 0.65) {$\TC^0$};
    \node at (0.25, 0) {\modelclass{Ungated / Diag.}};
    \node at (0.25, -0.5) {\modelname{(S4, Mamba)}};

    \draw[thick] (1.25, 0) ellipse (2.75cm and 1.25cm);
    \node at (2.75, 0.75) {$\NC^1$};
    \node at (2.75, 0) {\modelclass{PD LRNN}};
    \node at (2.75, -0.5) {\modelname{(PD-SSM)}};

    \draw[thick] (3,0) ellipse (5cm and 1.5cm);
    \node at (6,0.75) {$\PNC^1$};
    \node at (6,0) {\modelclass{DPLR LRNN}};
    \node at (6,-0.5) {\modelname{(DeltaNet, RWKV-7)}};

    \draw[thick] (4.5, 0) ellipse (7cm and 1.75cm);
    \node at (9.5, 0.75) {$\L$};
    \node at (9.5, 0) {\modelclass{Nonlinear RNN}};
    \node at (9.5, -0.5) {\modelclass{with log precision}};
    
    \draw[thick] (6, 0) ellipse (9cm and 2cm);
    \node at (13, 0.75) {$\P$};
    \node at (13, 0) {\modelclass{Nonlinear RNN}};
    
    \node at (-5, 0.75) {{Complexity class:}};
    \node at (-5, 0) {\modelclass{RNN model class:}};
    \node at (-5, -0.5) {\modelname{Specific models:}};

    \node at (-4, -2.75) {$\NC$ circuit depth (upper bound):};
    \node at (0.5, -2.75) {$\log(n)$};
    \node at (3, -2.75) {$\log(n)$};
    \node at (6, -2.75) {$\log(n) \log^*(n)$};
    \node at (10, -2.75) {$\log^2(n)$};
    \node at (13.2, -2.75) {beyond $\mathrm{polylog}(n)$};

    \node at (-4, -3.75) {\automatonclass{Automaton class (lower bound):}};
    \node at (0.5, -3.75) {\automatonclass{---}};
    \node at (3, -3.75) {\automatonclass{DWFA}};
    \node at (6, -3.75) {\automatonclass{WFA}};
    \node at (10, -3.75) {\automatonclass{Counter machine}};
    \node at (13.2, -3.75) {\automatonclass{Turing machine}};

    \draw[orange, dashed] (1.85, -4) -- (1.85, 2.25);
    \draw[orange, dashed] (4.15, -4) -- (4.15, 2.25);
    \draw[orange, dashed] (8.15, -4) -- (8.15, 2.25);
    \draw[orange, dashed] (11.7, -4) -- (11.7, 2.25);

    \end{tikzpicture}
    
    }  

    \caption{Main results, summarized as a hierarchy of increasingly expressive \modelclass{RNN classes} with popular \modelname{models} within each class. Each RNN shown is ``tight'' for the respective complexity class $\mathsf C$ ($\PNC^1$, $\L$, etc.) in the sense that both falls in $\mathsf C$ and can solve a $\mathsf C$-complete problem.
    LRNNs are in $\PNC^1$, implying they can be nearly as efficiently parallelized as transformers, incurring only a small $O(\log^*(n))$ depth overhead in terms of bounded fan-in boolean circuits. Nonlinear RNNs, in contrast, can solve $\L$-complete and even 
    $\P$-complete problems, but this comes at the cost of being less parallelizable, requiring notably deeper circuits.
    Bottom row lists the \automatonclass{classes of automata} that each RNN model class can simulate, where WFA stands for weighted finite automaton and DWFA stands for deterministic WFA.
    }
    \label{fig:hierarchy}
\end{figure*}

Beyond comparing nonlinear RNNs and LRNNs, we also compare the fine-grained expressivity of different LRNN variants within $\PNC^1$.
In particular, we consider differences between several different parameterizations that can express $\NC^1$-complete problems, showing that diagonal-plus-low-rank (DPLR) LRNNs like DeltaNet \citep{yang2024parallelizing,grazzi2025unlocking} and RWKV-7 \citep{peng2025rwkv} can express $\PNC^1$-complete problems, whereas permutation-diagonal (PD) LRNNs \citep{terzic2025structured} are in $\NC^1$.
These results, combined with our other results about nonlinear RNNs, yield a comprehensive hierarchy of the expressivity landscape for RNNs that is summarized in \Cref{fig:hierarchy}.

Beyond our complexity characterization, we also
associate each RNN and LRNN class with a corresponding automata-theoretic model
from prior work \citep{peng2018rational,weiss2018practicalcomputationalpowerfinite,merrill-etal-2020-formal} that this class can simulate.
\OMIT{
    We prove RWKV-7 and DeltaNet can solve natural complete problems within $\GapNC^1$, while PD-SSM is in $\NC^1$, and S4 and Mamba were previously known to be in $\TC^0$.
}

Finally, we find empirically that our results \footnote{Code link: \url{https://arg-git.informatik.uni-kl.de/pub/LinearRNN}} about RNNs' expressivity predict their behavior when trained on synthetic tasks.
On an $\L$-complete graph connectivity task, only nonlinear RNNs achieve good performance, as theory predicts.
In addition, both nonlinear RNNs and DPLR LRNNs learn iterated matrix multiplication problem ($\PNC^1$-complete), but transformers and Mamba cannot, in line with theory.
These tasks provide potential synthetic benchmarks for LRNN architecture development beyond the traditional state tracking and recall tasks that are currently popular \citep[e.g.,][]{arora2024simple,beck2024xlstm,peng2025rwkv}.

Overall, our theoretical results reveal the fundamental tradeoffs in expressivity and parallelism between linear and nonlinear RNNs.
Moreover, we show fine-grained differences in expressivity between LRNNs.
Taken together, we hope our results can help the community continue to refine LRNN architectures to push the frontier between expressivity and parallelism as much as possible up to fundamental barriers.
\OMIT{
For each type of linear RNN, we demonstrate interesting problems generalizing regular language recognition that can be solved. This goes beyond the finite monoid state tracking problem that much prior work \citep{merrill-etal-2020-formal,grazzi2025unlocking,terzic2025structured} has focused on, which could inform future architecture design.
}

\OMIT{
    Most work analyzing the capabilities of linear RNNs focuses on the $\NC^1$-complete state tracking problem, but can these models do more than that?
    We will explore what else they can do, in $\NC^1$ and beyond, as well as their inherent limits.
    
    Fixed-precision RNNs are in $\NC^1$ because they are just automata.
    With polynomial precision, linear RNNs are upper bounded within $\TC^1$.
    The interesting regime is in between: logarithmic precision.
    We will therefore focus here on pinning down the expressive power (both positive results and limitations) of log-precision linear RNNs.

    Main results: going beyond state tracking, linear RNNs can represent some simplified kinds of counter automata tasks. One layer is a WFA.
    But, our analysis also formalizes the fundamental limits of linear RNNs (in the complexity class $\GapNC^1$) relative to nonlinear RNNs (in the class $\L$). We identify a simple $\L$-complete problem that nonlinear RNNs can solve easily, but which linear RNNs cannot represent unless $\L \subseteq \GapNC^1$. Intuitively, this problem is hard to parallelize because it involves counter updates to depend on the counter values, revealing close connections to the theory of counter automata and WFAs.
    More fundamentally, this analysis highlights the kinds of problems on which the parallelism of linear RNNs may limit their expressivity compared to general LLMs, i.e., it reveals a fundamental parallelism-expressivity tradeoff.

    \paragraph{Research Questions.}
    \begin{enumerate}
        \item How do linear and nonlinear RNNs compare in expressive power?
        \item How do different linear RNNs compare in expressive power, and can they express problems beyond state tracking?
    \end{enumerate}

\paragraph{Contributions.}
In this work, we compare the expressive power of linear and nonlinear RNNs, as well as making finer grained comparisons between different linear RNN models.
We establish fundamental connections between (non)linear RNNs, automata-theoretic models, and circuit complexity classes.
As summarized in , the main findings we take away from this are as follows:
}

\section{Preliminaries} 
\label{sec:preliminaries}

We first introduce the datatypes we use to model bounded-precision arithmetic, as well as various LRNN architectures.
We then turn to introducing computational models from circuit complexity and and formal language theory that will be relevant in our analysis of LRNNs.

\subsection{Datatypes} \label{sec:datatypes}

Analyzing LLM architectures via circuit complexity requires making assumptions about the underlying datatype.
Prior work has considered various options while trying to ensure nice numerical properties like associativity and commutativity \citep{merrill2024illusion}.
In this work, guided by the theory of weighted finite automata, we make these desiderata more explicit by asserting that the numeric datatype should be a semiring, denoted $\langle \K, \oplus, \otimes \rangle$. That is, addition and multiplication should be associative with identities, addition should commute, and multiplication should distribute over addition
This makes the output of LRNNs more cleanly defined in an implementation-agnostic way, and also makes WFA and matrix multiplication computations well-defined for these datatypes (which will be important).

In particular, we will work with the standard \emph{rational} semiring used in prior analysis, (i.e., polynomial-precision rational numbers; \citealp{chiang2025transformers}), though other semirings are possible in principle.
%

\begin{definition} 
    Rational numbers, $\Q$, form the \textbf{rational semiring} with the standard arithmetic semantics for $\oplus$ and $\otimes$.
\end{definition}

\OMIT{
\begin{definition}  
    Fix any $m \in \Z^+$. Integers modulo $m$, denoted $\Z_m$, form an \textbf{integer mod semiring} with the standard mod arithmetic semantics for $\oplus$ and $\otimes$.
\end{definition}
Results with an integer datatype naturally extend to a \textbf{fixed-point} datatype (i.e., $\Z_m / C$ for a fixed $C$) as well.
Floating-point datatypes, while practical, are harder to formalize cleanly due to low-level issues around non-associativity and non-commutativity; we do not consider it here.
}

We assume $\Q$ admits an \textbf{order} $>$ in the standard way.
Additionally, we equip $\Q$ with division in order to capture layer normalization
Finally, we define non-linear functions $\sqrt{\cdot}$ and $\exp$ via their Taylor series approximation in terms of $\oplus$ and $\otimes$ to $O(\poly(n))$ terms \citep[cf.][]{chiang2025transformers}.
Each of these operations can be given poly- or log-precision variants, which return an $(n^c)$- or $(c \log n)$-bit approximation using this technique, respectively (see \Cref{def:precision} ).

A natural notion for describing ``simple'' arithmetic functions will be the following:

\begin{definition}[Arithmetic Closed Form] \label{def:closed-form}
    $f : \K^k \to \K$ has an \emph{arithmetic closed form} if it can be written as a finite composition of $\oplus, \otimes, \max, \exp$, and $\sqrt{\cdot}$ as defined for $\K$.
\end{definition}

The following property about closed forms will be useful:

\begin{lemma}[\citealp{chiang2025transformers}] \label{lem:closed-form-tc0}
    If $f : \Q^k \to \Q$ has an arithmetic closed form, then $f$ can be computed in $\FO$-uniform $\TC^0$.
\end{lemma}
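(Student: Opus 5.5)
The plan is structural induction on the finite composition witnessing the arithmetic closed form. This reduces the lemma to two facts. First, each primitive operation $\oplus, \otimes, \max, \exp, \sqrt{\cdot}$ on poly-precision rationals is computable in $\FO$-uniform $\TC^0$. Second, $\FO$-uniform $\TC^0$ is closed under composition with a \emph{constant} number of stages. Since the composition has fixed size independent of $n$, the depth of the resulting circuit is a constant multiple of the maximum primitive depth, and the size stays polynomial.

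For the primitives, I represent each rational as a pair of $\poly(n)$-bit integers (numerator, denominator).
\begin{itemize}
\item \textbf{Addition and multiplication.} These reduce to integer addition and integer multiplication, both in $\FO$-uniform $\TC^0$. Addition is even in $\AC^0$; multiplication is iterated addition, which is in $\TC^0$.
\item \textbf{Max.} This needs a comparison $a/b > c/d$, i.e.\ comparing $ad$ with $cb$, which is multiplication followed by an $\AC^0$ comparison, then a multiplexer.
\item \textbf{Division and the rounding to $n^c$ bits.} These use the Hesse--Allender--Barrington result that integer division is in $\FO$-uniform ($\mathsf{DLOGTIME}$-uniform) $\TC^0$.
\item \textbf{$\exp$ and $\sqrt{\cdot}$.} These are defined as truncated Taylor series with $\poly(n)$ terms. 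Computing one amounts to computing all powers $x^k$ for $k \le \poly(n)$ (iterated multiplication, again in uniform $\TC^0$ by Hesse--Allender--Barrington), dividing by the factorial or binomial coefficients (iterated product plus division), and then one iterated sum of polynomially many rationals. That final sum is done by bringing the terms to a common denominator via iterated product and then doing iterated integer addition, followed by a single rounding.
\end{itemize}

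Uniformity follows because each gadget is $\FO$-uniform and the wiring between a constant number of gadgets is trivially $\FO$-definable.

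The main obstacle is the Taylor-series step. We must check that intermediate bit-lengths stay polynomial: powers $x^k$ with $k=\poly(n)$ have $\poly(n)$ bits, and the common denominators in the sum are products of polynomially many $\poly(n)$-bit numbers. We must also check that rounding only at the end, or after each term, still yields the specified $n^c$-bit approximation. Rather than re-derive this, I would cite the uniform iterated multiplication and division theorem directly and follow the accounting in \citet{chiang2025transformers}, which is exactly where this lemma comes from.
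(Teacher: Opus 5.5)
Your proposal is correct and matches the paper's treatment. The paper gives no argument of its own and simply cites \citet{chiang2025transformers}, and your sketch reconstructs that cited argument: a constant-length composition of $\FO$-uniform $\TC^0$ primitives, with Hesse--Allender--Barrington iterated multiplication and division handling the truncated Taylor series and the polynomial bit-length accounting you describe.
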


\subsection{Nonlinear RNNs} \label{sec:def-nonlinear-rnns}

An RNN over a datatype $\K$ (generally $\Q$) is a neural network component that takes as input a sequence of vectors $x_1, \ldots, x_n \in \K^d$ and outputs a new sequence $y_1, \ldots, y_n \in \K^d$ that sequentially mixes them.
Classical RNNs \citep{elman1990finding,hochreiter1997lstm} were defined by updating a recurrent state vector via the application of some (generally nonlinear) function:

\begin{definition}[Nonlinear RNN] \label{def:rnn}
    Given $\mathbf x_1, \ldots, \mathbf x_n \in \K^d$, an RNN computes a a sequence of recurrent states $\mathbf h_1, \ldots, \mathbf h_n \in \K^d$ via $\mathbf h_t = f(\mathbf h_{t-1}, \mathbf x_t)$, where $f$ is a closed-form arithmetic function (cf.~\Cref{def:closed-form}).
\end{definition}

We consider two nonlinear RNN parameterizations. First, \emph{ReLU RNNs} are nonlinear RNNs with $f(\mathbf h_{t-1}, \mathbf x_t) = \max \{ \mathbf A \mathbf h_{t-1} + \mathbf B \mathbf x_t, 0 \}$, where $\mathbf A$ and $\mathbf B$ are matrices $\K^{d \times d}$.
More generally, we will sometimes consider the more general \emph{MLP RNN} where $f$ can be a multi-layer feedforward network using ReLU activations.

\subsection{Linear RNNs} \label{sec:def-linear-rnns}

Going beyond classical RNNs, there has recently been interest in RNNs with \emph{linear} state updates due to the advantages for parallelizing on the sequence dimension.
These linear RNNs (LRNNs) can be defined as follows:

\begin{definition}[LRNN] \label{def:linear-rnn}
    Given $\mathbf x_1, \ldots, \mathbf x_n \in \K^d$, an LRNN head computes a sequence of matrix valued states $\mathbf S_1, \ldots, \mathbf S_n \in \K^{d \times d}$ via
    \begin{equation*}
        \mathbf S_t = A_t(\mathbf x_t) \mathbf S_{t-1} +  b_t(\mathbf x_t) ,
    \end{equation*}
    where $ A_t(\mathbf x_t) \in \K^{d \times d}$ and $ b_t(\mathbf x_t) \in \K^d$ have arithmetic closed forms independent of $t$.
\end{definition}

\Cref{def:linear-rnn} leaves open the specific parameterization of $ A_t$ and $ b_t$ as a function of $\mathbf x_t$.
The choice of parameterization is crucial for theoretical expressivity \citep{merrill2024illusion} as well practical concerns like memory.
For instance, constrained parameterizations (time-invariant or diagonal) were explored in early LRNNs, but these were shown to limit expressivity to $\TC^0$ \citep{merrill2024illusion}.
Since then, newer LRNNs have adopted richer parameterizations that are non-diagonal but still efficient.
Two broad classes of ``slightly non-diagonal'' parameterizations are diagonal-plus-low-rank (DPLR) and permutation-diagonal (PD) LRNNs.\footnote{Henceforth, for brevity, we will write $\mathbf A_t$, $\mathbf k_t$, etc.\ to mean $A_t(\mathbf x_t)$, $k_t(\mathbf x_t)$, etc., leaving the dependence on $\mathbf x_t$ implicit.}

\begin{definition}[DPLR]
    An LRNN is \emph{DPLR} if it satisfies $\mathbf A_t = \mathbf D_t - \mathbf k_t^\top \mathbf v_t$, where $\mathbf D_t$ is diagonal, $\mathbf k_t, \mathbf v_t$ are vectors, and all have an arithmetic closed form in terms of $\mathbf x_t$.
\end{definition}

In particular, we consider specific DPLR variants:
\begin{compactitem}
    \item \textbf{RWKV-7} \citep{peng2025rwkv}: \label{def:rwkv7-head-main}
    \begin{align*}
        \mathbf A_t &= \diag(\mathbf w_t) - \lambda_t \kappa_t (\mathbf a_t\odot \mathbf \kappa_t)^\top & \ 
        \mathbf b_t &= \mathbf v_t\tilde{\mathbf k}_t^\top ,
    \end{align*}
    for per-token vectors $\mathbf w_t,\mathbf a_t,\kappa_t,\mathbf v_t,\lambda_t,\tilde{\mathbf k}_t\in\K^d$ are produced from $\mathbf x_t$ via linear projection.

    \item \textbf{DeltaNet} \citep{yang2024parallelizing}: \label{def:deltanet-head-main}
    \begin{align*}
        \mathbf A_t &= \mathbf I - \beta_t \mathbf k_t^\top \mathbf k_t & \ 
        \mathbf b_t &= \mathbf v_t \mathbf k_t^\top ,
    \end{align*}
    where $\mathbf k_t, \mathbf v_t, \beta_t$ depend on $\mathbf x_t$ via projection.
\end{compactitem}

A key difference between RWKV-7 and DeltaNet is that the state update is symmetric in DeltaNet.
Related to this, \citet{grazzi2025unlocking} show how, for fixed-precision DeltaNet, the range of $\beta_t$ is crucial for expressivity: restricting $\beta_t \in (0, 1)$ prevents recognizes parity, but $\beta_t \in (0, 2)$ allows it.
Both architectures have recently been trained at large scale \citep{peng2025rwkv,kimi2025kimilinear}.

\begin{definition}[PD; \citealp{terzic2025structured}]
    An LRNN is \emph{PD} if it satisfies $\mathbf A_t = \mathbf P_t \mathbf D_t$ for a column one-hot 0-1 matrix $\mathbf P_t$ and a diagonal matrix $\mathbf D_t$, both with an arithmetic closed form in terms of $\mathbf x_t$.
\end{definition}

$\mathbf P_t$ represents a function $\pi : \{1, 2, \ldots, d\} \to \{1, 2, \ldots, d\}$.
Despite the name, $\mathbf P_t$ need not represent a permutation matrix, though it will be a permutation when $\mathbf P_t$ is full rank.


\subsection{Multilayer RNNs}

In line with current practice \citep{yang2025gateddeltanetworksimproving}, we will imagine that these RNN components (\Cref{sec:def-nonlinear-rnns,sec:def-linear-rnns}, respectively) are used as ``heads'' within a larger architecture resembling the transformer.
A head is computed by passing $\mathbf x_1, \ldots, \mathbf x_n$ to each head; the specific parameterizations mentioned above will compute quantities resembling keys and values in the transformer.
After this, the head output $\mathbf y_t$ is determined as to be $\mathbf h_t$ in a nonlinear RNN and $\mathbf q_t^\top \mathbf S_{t-1}$ in an LRNN, where $\mathbf q_t = \mathbf Q \mathbf x_t$ is analogous to a query.

The overall architecture of a multilayer RNN then works as follows.
The first layer maps each token to an embedding.
This is followed by alternating LRNN sublayers, which mix information across tokens, and feedforward sublayers, which apply local computation at each token.
More formally, these sublayers have the form:

\begin{definition}[Multihead RNN Sublayer]
    Given $\mathbf x_1, \ldots, \mathbf x_n \in \K^d$, we first apply layer-norm, compute heads $\mathbf y_1, \ldots, \mathbf y_h$ in parallel, and then aggregate the outputs. Formally, the sublayer computes the mapping
        $\mathbf x_t \mapsto \mathbf x_t + \mathbf O \cdot \langle \mathbf y_1, \ldots, \mathbf y_h \rangle$.
\end{definition}

\begin{definition}[Feedforward Sublayer]
    Given $\mathbf x_1, \ldots, \mathbf x_n \in \K^d$, we first apply layer-norm and then apply a standard two-layer ReLU network. Letting $\tilde{\mathbf x} = \lnorm(\mathbf x)$, the feedforward sublayer computes
        $\mathbf x_t \mapsto \mathbf x_t + \mathbf W \cdot \ReLU (\mathbf U \tilde{\mathbf x}_t )$.
\end{definition}

A full RNN network consists of serial composition of multihead RNN and feedforward sublayers.

\begin{definition}[Language Recognition]
    Given a datatype $\K$, we define the language recognized by an RNN as follows.
    Let $\$ \in \Sigma$ be a beginning-of-sequence symbol.
    For every string $w \in \Sigma^n$, we pass $\$w$ through the RNN and apply a linear transformation $\mathbf o$ to the final layer output $\mathbf y^\ell_n$ at the last token and accept if and only if $\mathbf o^\top \mathbf y^\ell_n > 0$.
\end{definition}

\paragraph{Precision.}
Let $f : \Sigma^* \to \K^d$ be a representation in a neural sequence model (e.g., a transformer or RNN).
We can define the precision of this representation as follows:

\begin{definition} \label{def:precision}
    Let $\mathrm{encode}$ be the function that serializes each value in a vector $\mathbf x \in \K^d$ in binary.
    The precision of $\mathbf h : \Sigma^* \to \K^d$ is $p_{\mathbf h}(n) = \max_{w \in \Sigma^n} \; \abs{ \mathrm{encode}(\mathbf h(w)) }$.
\end{definition}

We define the precision of an RNN as the maximum precision over all components in its computation graph.
By definition, all RNN over $\Q$ have a poly-size computation graph and hence at most \emph{poly-precision}, i.e., $O(n^c)$ for some $c$.
Some RNNs attain \emph{log precision}, i.e., satisfy $O(\log n)$.
When talking about log-precision networks, we will assume that log-precision variants of $\exp$ and $\sqrt{\cdot}$ are used in the computation graph, so that extra precision is not added to intermediate values needlessly.

\subsection{Circuit Complexity}
\label{subsec:circuits}

We will leverage circuit complexity \cite{Vollmer-book} to formalize the
tradeoff between parallelizability and expressivity of RNNs. We first recall
standard circuit complexity classes.
A \emph{circuit} is a directed acyclic graph where leaf nodes represent input variables and internal nodes represent logic gates (e.g., AND, OR, NOT) applied to their children; a circuit $C_n$ define a boolean function $\{0, 1\}^n \to \{0, 1\}$, or, more generally, a function $\Sigma^n \to \{0, 1\}$ for some finite vocabulary $\Sigma$, in the standard way.
A \emph{circuit family} $\mathcal C = \{C_n\}_{n=0}^\infty$ is then a collection of circuits indexed by input length that defines a formal language $L \subseteq \Sigma^*$ where $w \in L$ if and only if $C_{\abs{w}}(w) = 1$.
$\NC^d$ is the class of languages recognized by bounded-fan-in boolean circuits
of polynomial size and depth $O(\log^d n)$. Write $\NC$ for $\bigcup_{d \geq 0}
\NC^d$.
$\AC^d$ is the extension of $\NC^d$ that allows unbounded fan in, and $\TC^d$
extends $\AC^d$ by allowing majority (MAJ) gates in addition to AND, OR, and 
NOT.

In general, circuit families are a non-uniform model of computation: since there is a different circuit for each input length, there is no finite description of the overall computation.
To rectify this issue, we can define \emph{uniform variants} of circuit classes where the structure of $C_n$ is determined as as a function of $n$ via some kind of computable procedure.
There are different natural variants of circuit classes with different levels of 
uniformity; unless otherwise specified, we work with first-order ($\FO$) 
uniformity. See \citet{strobl2024formal} for more background on these classes.

We will focus on the \emph{bounded fan-in} classes, which are more aligned with real hardware. The
expressivity of transformers \citep{HAF22,merrill-sabharwal-2023-parallelism} and 
simple LRNNs like Mamba \citep{merrill2024illusion} is upper bounded by $\TC^0$, which is an unbounded fan-in class but known to be contained in the log-depth bounded fan-in class $\NC^1$. This latter class is \textbf{efficiently parallelizable} in the sense 
that problems in it can be solved by an appropriate hardware implementation in 
time proportional to the depth of the corresponding circuits, i.e., in logarithmic wall-clock time. With today's LLMs often trained for context lengths of 64K to 1M tokens, log depth is proportional to 16 to 20 whereas the log-squared depth of $\NC^2$ circuits is proportional to 256 to 400.
Thus, log-squared depth increases sequential runtime by a factor of 16 to 20 relative to log depth, rendering the resulting computation significantly less parallelizable. We thus treat log-depth as the \emph{boundary of efficient parallelization}.

A more liberal notion of efficient parallelizability allows for
\emph{arithmetic circuits} instead of boolean ones. For our purpose, an arithmetic 
circuit is defined with respect to a ring $(R,\oplus,\ominus,\otimes,0,1)$. 
In particular, it uses gates that
correspond to $\oplus,\ominus,\otimes: R \times R \to R$. Such a circuit gives
rise to a function $f: \ialphabet^n \to R$, where $n$ is the number of input nodes. 

Central to our results will be the class $\PNC^1$, which corresponds to the class of languages $L$ for which there
is a family $\circuitClass_1 = \{C_n\}_{n \geq 0}$ 
of log-depth arithmetic circuits 
with $C_n$ represents a function with $n$ input nodes such that, for all $w \in
\ialphabet^n$, we have $w \in L$ iff $C_n(w) > 0$ (\emph{positivity} check). If we replace the positivity
check $C_n(w) > 0$ with the zero or \emph{equality} check $C_n(w) = 0$, we obtain the
class $\CeqNC^1$.\footnote{While this class is often denoted as $\CeqNCorig^1$, we use the simplified notation $\CeqNC^1$ for succinctness.}
The following relations are known between these classes (assuming some level of uniformity for the circuit classes when compared with $\L$ or $\P$):
\begin{align*}
    \TC^0 & \subseteq \NC^1 \subseteq \CeqNC^1 \subseteq \PNC^1 \subseteq \L \\
    & \subseteq \NC^2 \subseteq \cdots \subseteq \NC \subseteq \P.
\end{align*}
Whether these relations are proper is a long-standing open question in 
complexity theory (just as the question of $\P$ vs.\ $\NP$). That said, we will
proceed as in complexity theory, namely, associate a neural model with one of
the above complexity class by showing that it \emph{captures a complete 
problem in the class} and is \emph{subsumed by the class}. 
We define complete problems for these circuit classes in the standard way, i.e., under first-order ($\FO$) reductions.
For $\NC^1$, natural complete problems include simulating a DFA (i.e., the word problem for $S_5$), multiplying matrices over the boolean semiring, and evaluating boolean formulas \citep{buss1987alogtime}.
For $\PNC^1$, complete problems can be defined that, informally, extend these $\NC^1$-complete problems with arithmetic values (cf.~\Cref{prop:pnc1-complete}; \citealp{caussinus1998nondeterministic}).

We note that $\PNC^1$ is generally thought to be ``just above'' $\NC^1$ in the sense that it can be simulated by $\NC$ circuits of depth $O(\log n \log^* n)$.\footnote{\citet{Mereghetti_Palano_2000} mention that the iterated matrix multiplication problem is in $\NC^1$. However, they do not provide a proof or reference, and this turns out to be not quite accurate. The best known simulation of $\PNC^1$ by $\NC$ circuits recursively uses the Chinese Remainder Theorem and incurs a small overhead of $O(\log^* n)$, requiring total depth $O(\log n \log^* n)$~\cite{jung85loglogstar}.}
For sequence of length 64K to 1M, the overhead factor in parallel runtime relative to $\NC^1$ is just 3.
We will therefore think of the class $\PNC^1$ as nearly as efficiently parallelizable as $\NC^1$ in general and as transformers in particular.


\section{Parallelizability Limits of Nonlinear RNNs} \label{sec:non-linear}




We start with a folklore upper bound on the expressivity of nonlinear RNNs (proof in \Cref{appendix:proofs} for completeness):
\begin{restatable}[Upper Bound]{proposition}{rnnUpperBound}
\label{prop:rnn-upper-bound}
    Let $M$ be an RNN over $\Q$ with $s = O(\poly(n))$ precision.
    Then the language recognized by $M$ is also recognizable by a Turing machine that uses $O(\max\{s, \log n\})$ space and $\poly(n)$ time.
\end{restatable}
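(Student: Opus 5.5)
The natural approach is a direct streaming simulation: the Turing machine reads $\$w$ left to right and maintains on its work tape only the current layer-wise state of the network at the current position, recomputing everything else on the fly. The complication is that the network is multilayer. The input to the recurrent head at layer $\ell$ and position $t$ depends on layer $\ell-1$ at position $t$, and recurrent states are carried forward in $t$. I will therefore process positions $t = 1, \ldots, n$ in order. For each position I sweep through the sublayers bottom to top, keeping on the tape the recurrent states $\mathbf h^{(\ell)}_{t-1}$ (or $\mathbf S^{(\ell)}_{t-1}$ if some heads are linear) for every head in every layer. I also keep the residual-stream vector at position $t$ for the current layer.

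Step by step, the argument goes as follows.

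\textbf{Step 1.} The number of layers, heads and the dimension $d$ are constants. So the stored information is a constant number of vectors of constant dimension, each encoded in at most $s$ bits by the precision assumption (\Cref{def:precision}), for $O(s)$ bits in total. I also keep a position counter and an index into the input, costing $O(\log n)$ bits.

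\textbf{Step 2.} For each $t$, I compute the embedding of $w_t$ and then apply, in order, each multihead sublayer and feedforward sublayer. Computing a head means evaluating $f(\mathbf h_{t-1}, \mathbf x_t)$, which overwrites the stored state. The residual update, the layer-norm and the ReLU network are all arithmetic closed-form functions of constantly many $s$-bit rationals.

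\textbf{Step 3.} At $t=n$ I evaluate $\mathbf o^\top \mathbf y^\ell_n > 0$ and accept or reject accordingly.

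The main obstacle is arguing that each closed-form evaluation uses only $O(\max\{s,\log n\})$ space and $\poly(n)$ time. This concerns the intermediate values inside the evaluation, not just its inputs and outputs. In particular, $\exp$ and $\sqrt{\cdot}$ are defined by Taylor series with $\poly(n)$ terms, and division is needed for layer-norm.

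I would handle this in two ways. First, I invoke \Cref{lem:closed-form-tc0}: each such $f$ is computable in $\FO$-uniform $\TC^0$. Since $\TC^0 \subseteq \L$, $f$ is computable in space logarithmic in its input length, which is $O(s)$, and therefore in $O(\log s)$ space. Second, because the output is written onto the work tape, the total workspace per step is $O(s + \log n)$. Composing a constant number of log-space computations preserves both the space bound and polynomial time, using the standard recompute-on-demand composition trick. Here I must be careful that the intermediate outputs, of size $O(s)$, may simply be stored, since the budget allows it. Semantically, the precision definition already guarantees that every component of the computation graph fits in $s$ bits, so no stored value exceeds the budget.

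For time, note that each of the $n$ positions involves a constant number of $\poly(s)=\poly(n)$-time evaluations. The $\TC^0$ circuits have polynomial size, so they can be evaluated sequentially in polynomial time. This gives $\poly(n)$ total time.
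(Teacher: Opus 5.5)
Your proposal is correct and follows the same route as the paper: simulate left to right, store only the constant number of per-layer recurrent states (each within $s$ bits), compute each update via \Cref{lem:closed-form-tc0} together with $\TC^0 \subseteq \L$, and reuse the workspace across positions. Two minor imprecisions are both absorbed by your $O(s+\log n)$ space and $\poly(n)$ time budget: the $\TC^0$ circuits are indexed by $n$ (the Taylor series use $\poly(n)$ terms), so evaluating them takes $O(\log n)$ rather than $O(\log s)$ space; and the polynomial running time comes from the halting log-space evaluation itself, not from a gate-by-gate sequential evaluation that would exceed the space bound.
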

\begin{corollary}
\label{cor:nonlinear-upper-bounds}
    Poly-precision RNNs can be simulated in $\P$ and log-precision RNNs can be simulated in $\L$.
\end{corollary}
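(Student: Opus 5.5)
The statement is Corollary~\ref{cor:nonlinear-upper-bounds}, and we derive it directly from \Cref{prop:rnn-upper-bound} by instantiating the precision parameter $s$ in its two regimes.

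\textbf{Poly-precision case.} Here $s = O(n^c)$ for some constant $c$. \Cref{prop:rnn-upper-bound} then gives a Turing machine that decides the language recognized by $M$ in $\poly(n)$ time, using $O(\max\{n^c, \log n\}) = O(n^c)$ space. The time bound alone places the language in $\P$; the space bound plays no role.

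\textbf{Log-precision case.} Here $s = O(\log n)$. The same proposition yields a machine using $O(\max\{c\log n, \log n\}) = O(\log n)$ work space, with the input on a read-only tape as usual. By definition, the language is therefore in $\L$, and the polynomial time bound is automatic since log-space machines run in polynomial time anyway.

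\textbf{Checks that make the specialization valid.} The hypotheses of \Cref{prop:rnn-upper-bound} must be met in each regime.
\begin{itemize}
\item In the log-precision case, the paper stipulates that the log-precision variants of $\exp$ and $\sqrt{\cdot}$ are used. So every intermediate value in the computation graph, not just the stored hidden states, has $O(\log n)$ bits. This is what keeps the proposition's $s$ equal to $O(\log n)$ rather than something larger.
\item In the poly-precision case, the paper notes that any RNN over $\Q$ has a polynomial-size computation graph, so $s = \poly(n)$ holds automatically.
\end{itemize}

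No step here is a real obstacle: all substantive work (simulating the recurrence $\mathbf h_t = f(\mathbf h_{t-1}, \mathbf x_t)$ step by step, overwriting the state, and evaluating the closed-form $f$ in space linear in precision) is contained in \Cref{prop:rnn-upper-bound}. The only point needing care is the one above: the precision assumption must cover every component of the computation graph, as the paper's definition of an RNN's precision guarantees.
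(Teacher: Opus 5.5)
Your proposal is correct and follows the paper's own route: the corollary is an immediate instantiation of \Cref{prop:rnn-upper-bound}. Taking $s=\poly(n)$, the polynomial time bound gives $\P$; taking $s=O(\log n)$, the $O(\max\{s,\log n\})$ space bound gives $\L$. Your extra checks on the hypotheses are consistent with the paper's definition of precision as the maximum over all components of the computation graph.
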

This result gives a \emph{fully sequential} simulation for poly-precision RNNs in $\P$.
On the other hand, since $\L \subseteq \NC^2$, log-precision nonlinear RNNs can be parallelized to depth $O(\log^2 n)$, incurring a factor of $O(\log n)$ in circuit depth compared to transformers.
We will next show that both of these upper bounds are essentially tight, barring major 
breakthroughs in complexity theory.
That is, poly-precision RNNs likely cannot be parallelized at all to poly-logarithmic depth, and, for log-precision RNNs, the $O(\log n)$ penalty relative to transformers likely cannot be removed.

\OMIT{
In the rest of this section, we will show that these upper bounds are essentially tight, implying limits on the parallelizability of nonlinear RNNs.
First, we show that poly-precision RNNs are $\P$-complete, implying they cannot be efficiently parallelized if $\NC \neq \P$.
But in practice, the precision of RNNs is limited, motivating the study of \emph{log-precision} RNNs.
We show that nonlinear RNNs with log-precision are in fact $\L$-complete, implying they cannot be parallelized to to the same degree as linear RNNs (near-log depth) under standard conjectures.

\AS{revisit this text:}
We identify parallelizability limits of nonlinear RNN by showing that they can solve certain problems for which no efficient parallel algorithm is known. Specifically, we first consider log-precision nonlinear RNNs and construct a simple, single layer model that solves an $\L$-complete problem (\cref{subsec:nonlinear-L-complete}). We also prove $\L$ as a matching upper bound for log-precision nonlinear RNNs, establishing their $\L$-completeness.

Next, we consider poly-precision nonlinear RNNs, and prove an even stronger result---that they are $\P$-complete (\cref{subsec:nonlinear-P-complete}) and thus highly non-parallelizable.
}

\subsection{Poly-Precision Nonlinear RNNs Are $\P$-Complete}
\label{subsec:nonlinear-P-complete}

\OMIT{
\WM{Backed up CWA construction here}

\begin{theorem}
    There exists a nonlinear RNN over $\mathbb Q$ (a poly-precision datatype) that recognizes a $\P$-complete problem.
    \label{th:RNN-P}
\end{theorem}
Our reduction goes from a fixed one-state Cost Register Automaton (CRA) $\Aut$ 
over 
the semiring $(\N,+,\max)$, which takes an input string $w \in \ialphabet^*$ and
accepts/rejects it. \cite{AKM17} constructed such an automaton such that
checking whether $w$ is accepted by $\Aut$ is $\P$-complete.

Since we do not need the full machinery of CRA, we will only give the definition
for a one-state CRA. A \emph{stateless Cost-Register Automaton (sCRA)} with
variables $X = \{x_1,\ldots,x_r\}$ is a tuple $(\ialphabet,\rho)$ such that:
\begin{itemize}
    \item $\rho: \ialphabet \times \to E$, where $E$ is a statement of the form
        $x:=e$, where $e$ an \emph{expression}
        of the form $\max(\sigma,\gamma)$ or $\sigma + \gamma$,
        where $\sigma, \gamma \in \{0,1\} \cup X$.
\end{itemize}
The transition function $\rho$ yields a one-step variable update function 
$\delta: \ialphabet \times \N^r \to \N^r$ in a natural way: 
$\delta(a,p_1,\ldots,p_r) :=
(q_1,\ldots,q_r)$ such that $q_i = e[p_1/x_1,\ldots,p_r/x_r]$ if $\rho(a)$ is of
the form $q_i = e$; otherwise, $q_j = p_j$ for each $j \neq i$.
That is, each $q_i$ is obtained by ``executing'' the expression
$E$ with the valuation $x_i \mapsto p_i$. This function can be extended
to $\delta: \ialphabet^* \N^r \to \N^r$ by ``executing'' $\delta$ on an input
string $w \in \ialphabet^*$. That is, if $a \in \ialphabet$ and $w \in
\ialphabet^*$, we define
\[
    \delta(aw,\bar p) := \delta(w,\delta(a,\bar p)).
\]
The language
$\Lang(\Aut)$ of $\Aut$ is defined to be the set of $w \in \ialphabet^*$ such 
that $\delta(w,\bar 0) = (q_1,\ldots,q_r)$ with $q_1 > 0$. That is, starting
by initializing all variables to 0, we execute instructions as dictated by $w$
and accept if the final value of $x_1$ is positive.

\begin{proof}[Proof of Theorem \ref{th:RNN-P}]
    Given an sCRA $\Aut = (\ialphabet,\rho)$ with variables $X =
    \{x_1,\ldots,x_r\}$, we will construct a nonlinear RNN simulating $\Aut$. 
    More precisely,
    the $i$th step of the RNN simulates the $i$th step of $\Aut$.
    The RNN uses as states $(y_1,\ldots,y_r) \in \Q^r$. It starts with initial
    value $(0,\ldots,0)$. Given an input $w = a_1\cdots a_n \in \ialphabet^*$,
    we assume that we have obtained a sequence $(A_1,b_1),\ldots,(A_n,b_n)$ of 
    matrices 
    $A_i: \N^r \to \N^r$ and vectors $b_i \in \N^r$. In particular, if 
    $\rho(a_i)$ is of the form $x_j := e$, where $e$ is either $x+y$, $x-y$
    then 
    $A_i(\bar p) = \delta(a,\bar p)$ and $b_i = \bar 0$.
    If $e$ is of the for $x-c$ or $x+c$, then $A_i$ is the identity matrix
    and $b_i = (0,\ldots,1,\ldots,0)$, where 1 occurs at position $j$. If
    $e$ is $\max(x_s,x_t)$, then we $A_i: \bar x \mapsto \bar y$, where 
    $y_j = x_s-x_t$ and $y_k = x_k$ ($k \neq j$).

    Thus, the nonlinear RNN has updates of the form $h_t = ReLU(A_t h_{t-1} +
    b_t)$. 
    At the end, we accept the 
    string if it ends up at a state $(q_1,\ldots,q_r) \in \N^r$ satisfying
    $q_1 > 0$.
\end{proof}

\OMIT{
\begin{proof}
    The complete proof is in the appendix \AL{TBD}. The idea is to simulate a Cost Register Automaton (CRA) $\Aut$ over tropical semiring, i.e., over the structure $(\N,+,\min)$. Intuitively, such a CRA $\Aut$ is a deterministic finite automaton equipped with finitely many $\N$-valued registers $r_1,\ldots,r_k$. The input is a string $w = a_1\cdots a_n \in \ialphabet^*$. At a state $q$ and upon reading a letter $a_j$, a CRA $\Aut$ executes a sequence of assignments of the form $r_i := f(\bar r)$, where $f$ is a term built from $c \in \N$, $r \in \bar r$, $+$, and $\min$. It was shown in
    \cite{AKM17} that there is a CRA such that, given an input $w$, checking whether the register $r_k$ contains the value $0$ after parsing the entire input is PTIME-complete. 

    Any CRA $\Aut$ can be modeled by Nonlinear RNN. Here, we use two layers, where the first layer simulates the finite control of a CRA $\Aut$ (which is a DFA) using a linear RNN layer. The nonlinear RNN layer has updates of the form $h_t = A_t h_{t-1}$, where $A_t$ is an FFNN (i.e. a piecewise linear function). This is done as follows:
    \begin{description}
        \item[Layer 1] Use linear RNN layer that simulates the finite control of a CRA and maps each $(q_t,a_t)$ [state/letter pair] to the correct $A_t$. Essentially, the FFNN does ReLU to simulate $\min$ and affine transformation to simulate $+/-$. 
        \item[Layer 2] This is the nonlinear RNN layer that does the computation $h_t = A_t h_{t-1}$.
    \end{description}
    At the end, we accept the string if one of the entries in the last state is 0.
\end{proof}
}

}

We first show that nonlinear RNNs with polynomial precision can simulate Turing machines, taking inspiration from the classical construction of \citet{siegelmann1995computational}.
It will follow that they can solve $\P$-complete problems, which means they cannot be parallelized (to poly-log depth) unless $\NC = \P$.
Towards showing this, we prove that RNNs whose gating function is an MLP can simulate a multi-stack machine, which is Turing-complete (proof in \Cref{appendix:proofs}):

\begin{restatable}[Turing-Completeness]{theorem}{turingCompleteness} \label{thm:rnn-turing-completeness}
    For any multi-stack machine $M$, there exists a one-layer MLP RNN that recognizes the same language as $M$.
\end{restatable}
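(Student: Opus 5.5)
We follow the Siegelmann--Sontag construction: encode each stack of $M$ as a rational number in the recurrent state, and realize one machine step as one application of an MLP with ReLU activations. Concretely, take $M$ with finite control $Q$, $k$ stacks over a binary alphabet $\{0,1\}$, and a transition function that reads the current input symbol, the state, and the top of each stack. Its action is then a finite-state change plus, for each stack, one of $\push_0$, $\push_1$, $\pop$, $\noop$. We assume $M$ is real-time, consuming one input symbol per RNN step. If $M$ needs extra $\varepsilon$-steps, we either pad the input, or we bound their number by a constant per symbol and unroll that many steps inside the MLP (depth is allowed to be any constant). A stack content $s_1 s_2\cdots s_m$, with $s_1$ on top, is encoded as $c = \sum_{i=1}^m (2 s_i + 1)4^{-i} \in [0,1)$ using the Cantor-like base-4 encoding. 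With this encoding: the empty stack is $c = 0$; the top symbol is $1$ iff $c \ge 3/4$; the stack is nonempty iff $c \ge 1/4$; $\push_b$ is $c \mapsto c/4 + (2b+1)/4$; and $\pop$ is $c \mapsto 4c - (2\,\head(c)+1)$. All of these are exact over $\Q$. The gaps in the encoding, where $c$ never lies in $(1/2,3/4)$ or in $(0,1/4)$, are exactly what let us extract $\head$ with ReLUs. We take $\head(c) = \ReLU(4c-2) - \ReLU(4c-3)$, which equals $1$ when $c\ge 3/4$ and $0$ when $c \le 1/2$, and similarly $\mathrm{nonempty}(c) = \ReLU(4c) - \ReLU(4c-1)$.

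The recurrent state $\mathbf h_t$ will hold a one-hot vector for the control state $q \in Q$ together with the $k$ stack encodings $c^{(1)},\dots,c^{(k)}$. The input $\mathbf x_t$ is the one-hot embedding of the current symbol. The MLP $f(\mathbf h_{t-1},\mathbf x_t)$ is built in layers. First, compute the bits $\head(c^{(j)})$ and $\mathrm{nonempty}(c^{(j)})$ as above. Next, compute for every configuration tuple $\tau = (q, a, \text{tops}, \text{emptiness flags})$ an indicator $g_\tau \in \{0,1\}$ as a ReLU-AND, namely $\ReLU(\sum \text{literals} - (\#\text{literals} - 1))$. This is valid because all inputs are $0/1$. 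Then the new one-hot state is a linear combination of the $g_\tau$. The new stack value is formed by the standard gating trick: for each of the four candidate updates $u \in \{\push_0,\push_1,\pop,\noop\}$, let $G_u$ be the (linear) OR of the $g_\tau$ selecting $u$ for stack $j$, and compute $\ReLU\bigl(u(c^{(j)}) - (1-G_u)\cdot B\bigr)$ with a constant $B \ge 4$. Since every candidate $u(c)$ lies in $[0,4)$, this term equals $u(c)$ when $G_u = 1$ and $0$ otherwise, so summing over $u$ gives the correct update. The multiplications $G_u \cdot c$ are avoided entirely by this subtraction form, so $f$ is a genuine ReLU MLP with rational weights.

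For acceptance, note that the language-recognition definition reads $\mathbf o^\top \mathbf y_n$, and $\mathbf y_n = \mathbf h_n$. We want this to be positive iff $M$ ends in an accepting state, so we set $\mathbf o$ to weight accepting states by $1$ and rejecting states by $-1$. The stack coordinates get weight $0$; since the control is one-hot, $\mathbf o^\top\mathbf h_n = \pm 1$. The $\$$ symbol is used to initialize the state: on reading $\$$, the MLP writes the start-state one-hot and $c^{(j)} = 0$, independent of $\mathbf h_0$, again via a gate. Correctness then follows by induction on $t$: $\mathbf h_t$ exactly encodes $M$'s configuration after $t$ steps.

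The main obstacle is the middle step: making the conditional stack updates exact with ReLU only, without multiplying state by state. The bounded-range gating with constant $B$ resolves this, and it relies on the encoding keeping every $c$ in $[0,1)$. One point to double-check is that $\pop$ on an empty stack does not occur, or is mapped to $\noop$ through the emptiness flag in $\tau$.
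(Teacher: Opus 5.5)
Your proof is correct, and its skeleton matches the paper's. Each stack is one rational coordinate of the state. The top symbol is read off by a ReLU threshold. The next control state and stack action come from finite lookup tables. The new stack value is chosen among parallel candidate updates by a selector, and your gate $\ReLU(u(c)-(1-G_u)B)$ is a concrete instance of the selector gadget the paper cites.

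The real difference is the stack encoding, and there your choice matters. The paper uses a plain binary encoding: empty stack $s=1$, $\push_v(s)=v+s/2$, $\head(s)=\mathbbm{1}[s\ge 1]$. It computes $\head$ and the emptiness test $s\neq 1$ by a ReLU threshold of tolerance $1/3$. That encoding has no gap at the threshold. A height-$m$ stack with top $0$ and all other symbols $1$ encodes $1-2^{-m}$, while top $1$ with all others $0$ encodes $1+2^{-m}$. So the values to be classified accumulate at $1$ from both sides with different heads, and no continuous function of $s$ (in particular no ReLU network with fixed weights) computes $\head$ exactly on them; the same problem affects the test $s\neq 1$. Your base-$4$ encoding with digits $\{1,3\}$, which is what Siegelmann and Sontag actually use, confines $c$ to $\{0\}\cup[1/4,1/2)\cup[3/4,1)$. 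That separation is exactly what makes $\head$ and $\mathrm{nonempty}$ exact two-ReLU expressions. So your route closes a hole in the paper's argument rather than merely offering an alternative.

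Two minor remarks. First, the pop candidate equals $-1$ on the empty stack, so ``every candidate lies in $[0,4)$'' is not literally true. This is harmless: a closed gate gives $\ReLU(-1-B)=0$ and an open gate gives $\ReLU(-1)=0$, so popping an empty stack acts as a no-op automatically.

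Second, of your two options for $\varepsilon$-moves, padding changes the language, so what you establish is the real-time (or constant-delay) version of the statement. That is also all the paper's proof gives, since it performs one transition per token and defers padding to the $\P$-completeness corollary. Nothing stronger is possible. With fixed rational weights and no products of state with state, the bit length of the state grows by $O(1)$ per step, so a one-layer ReLU-MLP RNN decides a language in $\P$. Unrestricted multi-stack machines, by contrast, recognize undecidable languages.
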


\begin{proof}[Proof sketch]
    The idea follows \citet{siegelmann1995computational}.
    We use a poly-precision scalar in the RNN state to represent each stack and leverage the fact that a ReLU network (used in the recurrence) can update the stack by pushing and popping.
\end{proof}

Since a multi-stack machine is Turing complete, it follows from \Cref{thm:rnn-turing-completeness} that poly-precision nonlinear RNNs can solve a $\P$-complete problem (proof in \Cref{appendix:proofs}):

\begin{restatable}[$\P$-Completeness]{corollary}{pComplete} \label{cor:rnn-p-complete}
    There is a one-layer MLP RNN
    whose language is $\P$-complete under $\FO$ reductions.
\end{restatable}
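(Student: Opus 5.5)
We will pick a fixed $\P$-complete language that is decided by a fixed deterministic multi-stack machine in polynomial time. We will then apply \Cref{thm:rnn-turing-completeness} to that machine, taking care that the resulting RNN runs in a polynomial number of steps with polynomial precision.

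\textbf{Step 1: choose a padded $\P$-complete problem.} We cannot directly use a problem such as the circuit value problem (CVP), because an RNN performs only one recurrent step per input token. A machine deciding CVP may need $n^k$ steps on an input of length $n$. We therefore use a padded version of a standard $\P$-complete problem:
\[
L' = \{\, w\,\#^{p(\abs{w})} : w \in \mathrm{CVP} \,\},
\]
where $p$ is a polynomial bounding the running time of a fixed multi-stack machine $M$ that decides CVP. Two-stack machines simulate Turing machines with only constant overhead per step, so such a $p$ exists.

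\textbf{Step 2: $L'$ is $\P$-complete.} Membership in $\P$ is immediate, since checking the padding takes polynomial time and then $M$ is run. For hardness, the map $w \mapsto w\#^{p(\abs{w})}$ is an $\FO$ reduction from CVP. The output length is polynomial in $\abs{w}$, and each output position is $\FO$-definable: position $i$ carries $w_i$ if $i \le \abs{w}$ and $\#$ otherwise. This uses $\FO$ with arithmetic predicates, which is standard for $\FO$-uniform reductions.

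\textbf{Step 3: build the RNN and bound its precision.} We construct a multi-stack machine $M'$ that works in real time. It consumes one symbol per step, pushing the $w$-part onto a stack while also verifying the $w\#^*$ format. On each $\#$ it then performs one step of $M$, and it accepts iff the padding is long enough and $M$ accepts. Applying \Cref{thm:rnn-turing-completeness} to $M'$ gives a one-layer MLP RNN recognizing $L'$.

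The main obstacle is keeping this step within the paper's model. In the Siegelmann-style encoding used by \Cref{thm:rnn-turing-completeness}, each push adds a constant number of bits to the scalar encoding a stack. After $\poly(n)$ steps, every stack scalar therefore has $\poly(n)$ bits, so the network stays within the poly-precision regime over $\Q$. I expect the delicate points to be the following.
\begin{itemize}
\item \emph{One-step-per-token timing.} If the construction of \Cref{thm:rnn-turing-completeness} needs a constant number $c$ of RNN steps per machine step, we repeat each padding symbol $c$ times, i.e.\ take $p$ scaled by $c$.
\item \emph{Extra $\#$s.} Once $M$ has halted, the machine must idle and keep its accept/reject state on every remaining $\#$, so that acceptance is read off correctly at the final token.
\end{itemize}
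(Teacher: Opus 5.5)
Your proposal is correct and follows the same route as the paper: pad a $\P$-complete language with polynomially many dummy symbols, so that the one-step-per-token multi-stack simulation of \Cref{thm:rnn-turing-completeness} has enough time to run a polynomial-time machine, with precision growing by $O(1)$ bits per push. Your explicit real-time machine $M'$ (buffer $w$ on a stack, then advance $M$ one step per padding symbol) spells out a detail the paper leaves implicit; the only wrinkle is that accepting whenever the padding is ``long enough'' recognizes a superset of your exact-padding $L'$, which is harmless because that superset is still $\P$-complete via the same reduction $w \mapsto w\#^{p(|w|)}$.
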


It follows that, under standard complexity conjectures, poly-precision nonlinear RNNs cannot be parallelized effectively:

\begin{corollary}[Poly-Precision Depth Lower Bound]
    If $\NC \neq \P$, there is a nonlinear RNN that, for any $k$, cannot be simulated by an $\NC$ circuit family of depth $O(\log^k n)$.
\end{corollary}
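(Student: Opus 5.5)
The plan is a contrapositive argument built directly on \Cref{cor:rnn-p-complete}. Let $M$ be the one-layer MLP RNN given there, whose language $L_M$ is $\P$-complete under $\FO$ reductions; this $M$ is the nonlinear RNN claimed in the statement. Suppose, for contradiction, that for some fixed $k$ there is an $\NC$ circuit family of depth $O(\log^k n)$, with polynomial size and bounded fan-in, that simulates $M$, i.e., decides $L_M$. Then $L_M \in \NC^k \subseteq \NC$.

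The key step is to show that $\NC$ is closed under $\FO$ reductions, so that every problem in $\P$ lands in $\NC$. Take any $L' \in \P$ and let $r$ be the $\FO$ reduction with $w \in L' \iff r(w) \in L_M$. Each output bit of an $\FO$ reduction is computable in $\FO$-uniform $\AC^0$, and the output length is polynomial. Simulating unbounded fan-in AND/OR gates of polynomial fan-in by balanced binary trees costs depth $O(\log n)$, so $r$ is computable in $\NC^1$. We then compose this with the circuit for $L_M$ on inputs of length $\abs{r(w)} = \poly(n)$, whose depth is $O(\log^k \poly(n)) = O(\log^k n)$. The resulting family has polynomial size and depth $O(\log n + \log^k n)$, so $L' \in \NC^{\max(k,1)} \subseteq \NC$. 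This gives $\P \subseteq \NC$. Combined with the known inclusion $\NC \subseteq \P$, it yields $\NC = \P$, contradicting the hypothesis.

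The main subtlety is the closure step. First, the reduction's output length may vary with the input; standard $\FO$ reductions have fixed polynomial output length (possibly after padding), so we can feed it to a single circuit $C_{m}$. Second, the uniformity of the composed family should be checked. If the hypothesised simulating family is assumed uniform (e.g.\ $\FO$- or $\L$-uniform), composing two uniform families preserves that uniformity. If it is non-uniform, the conclusion becomes $\P \subseteq$ non-uniform $\NC$, and I would read ``$\NC \neq \P$'' in the matching sense. Finally, since $k$ was arbitrary, no depth $O(\log^k n)$ simulation exists for any $k$, as claimed.
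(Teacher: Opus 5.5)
Your proposal is correct and follows the paper's route: the paper states this corollary as an immediate consequence of \Cref{cor:rnn-p-complete}, where a depth-$O(\log^k n)$ simulation of the $\P$-complete MLP RNN, composed with an $\FO$ (hence $\NC^1$) reduction, would put all of $\P$ in $\NC$. Your treatment of closure under $\FO$ reductions and of uniformity spells out what the paper leaves implicit.
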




\subsection{Log-Precision Nonlinear RNNs Are $\L$-Complete}
\label{subsec:nonlinear-L-complete}

We have established that poly-precision nonlinear RNNs cannot be parallelized effectively assuming $\NC \neq \P$.
However, in practice, LLMs use bounded precision, which motivates similarly analyzing nonlinear RNNs with log precision.
\Cref{cor:nonlinear-upper-bounds} shows such RNNs are in $\L$; we now show they can also solve an $\L$-complete problem, establishing that they likely cannot be parallelized as effectively as transformers or linear RNNs under standard conjectures.

Prior work~\citep{merrill2025littledepth} has used the \emph{graph connectivity}
problem as a prototypical $\L$-complete problem when analyzing models such as
log-depth transformers. This problem, however, does not naturally lend to a 
left-to-right single-pass solution that would be suitable for formalizing the 
additional expressive power of nonlinear RNNs. To work around this hurdle, we 
consider the following \emph{sorted} and \emph{deterministic} variant of the 
problem where each graph node has exactly one outgoing edge and the nodes are presented in a topologically sorted order:

\begin{definition}
    \emph{Sorted deterministic graph connectivity} is the problem that takes 
    as input a source node $s$, sequence of directed 
    edges $(i_1, j_1), \ldots (i_n, j_n)$, and target node $t$, all encoded in 
    unary. 
    The edges are \emph{deterministic} --- meaning that each $i$ has at most
    one outgoing edge $(i,j)$ --- 
    and \emph{topologically sorted} --- meaning $i_1 < \cdots < i_n$.
    The output is 1 if and only if there is a path of edges from $s$ to $t$.
\end{definition}
Perhaps surprisingly, this simpler variant of deterministic graph connectivity remains $\L$-complete:
\begin{restatable}{proposition}{detConn} \label{prop:det-conn-l-complete}
    Sorted deterministic graph connectivity is $\L$-complete under 
   $\FO$-reductions.
\end{restatable}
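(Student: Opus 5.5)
Membership in $\L$ is immediate: a log-space machine keeps a pointer $c$ to the current node, initialized to $s$, and scans the edge list left to right. Whenever it meets an edge $(i,j)$ with $i = c$, it sets $c \gets j$. At the end it accepts iff $c = t$ (or iff $c$ ever equaled $t$). A single pass suffices because the sources are strictly increasing and, as argued below, we may assume every edge goes forward, i.e.\ $i<j$; then the successor of $c$ can only appear later in the list. Without this assumption the machine simply rescans from the start each time it follows an edge, at most $n$ times, still using $O(\log n)$ space. Only a constant number of $O(\log n)$-bit counters are needed.

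For hardness, I reduce from a standard $\L$-complete problem under $\FO$ reductions: deterministic (outdegree-one) $s$--$t$ reachability, which is $\L$-complete by Cook--McKenzie. The input is a graph on $N$ nodes with each node having at most one out-edge; WLOG every node has exactly one out-edge and $t$ has a self-loop. The obstacle is topological sortedness, since the given graph may contain cycles. I remove it with the standard \emph{layering} (time-unrolling) trick. Build the layered graph on nodes $(v,k)$ for $v\in[N]$, $k\in\{0,\ldots,N\}$, numbered $kN+v$. Add an edge $(v,k)\to(u,k+1)$ whenever $v\to u$ is the out-edge of $v$, and an extra rule sending $(t,k)$ to $(t,k+1)$. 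Every node then has exactly one outgoing edge, all edges go from layer $k$ to layer $k+1$, and so the numbering is a topological order. Listing the edges by increasing source index yields $i_1<\cdots<i_n$. The source is $(s,0)$ and the target is $(t,N)$.

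Correctness: $t$ is reachable from $s$ in the original graph iff it is reachable within $N-1$ steps. Because $t$ has a self-loop, this happens iff the unique walk of length exactly $N$ from $s$ ends at $t$, which is exactly the condition that $(t,N)$ is reachable from $(s,0)$ in the layered graph.

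The step needing the most care is checking that this map is an $\FO$ reduction, including the unary encoding. Each output position is indexed by a tuple $(k,v,\text{bit position})$ of input-length-bounded variables. Whether it holds a $1$ is a first-order condition: "$E(v,u)$ holds and the bit position is less than $kN+v$ (resp.\ $(k+1)N+u$)." Arithmetic on indices like $kN+v$ is available in $\FO$ with $\mathrm{BIT}$/$+,\times$, and padding unused slots is handled as usual by using a separator symbol. The sortedness of the output follows from ordering the tuples lexicographically by $(k,v)$, which the $\FO$ interpretation does natively.
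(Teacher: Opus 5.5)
Your proposal is correct and follows essentially the same route as the paper: you prove hardness by reducing from unsorted deterministic reachability, unrolling the graph into layered copies with edges from copy $k$ to copy $k+1$, which makes the instance topologically sorted. The differences are cosmetic. You handle the target with a self-loop at $t$ and fixed walk length $N$ (target $(t,N)$), where the paper adds a fresh sink $v_F$ reached from every copy of $t$. You also spell out membership in $\L$ and the direct first-order definability of the map, which the paper leaves implicit or argues only via a logspace-uniform $\AC^0$ reduction.
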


We defer the proof to \cref{app:subsec:nonlinear-log-precision}. Intuitively, this is achieved by reduction from deterministic
(but not necessarily sorted) graph connectivity. The latter is well-known to be
$\L$-complete \citep[cf.][]{jones1975-logspace-reductions,cookmckenzie1987-L-complete,Sipser-book}, and is in fact also complete under $\FO$-reductions just like the general graph connectivity problem~\citep{immerman-1998-descriptive}. The reduction makes $O(n)$ copies of 
the deterministic graph and connects the $i$-th copy to $(i+1)$-st copy. In this way,
the resulting deterministic graph is polynomial in the size of the original
graph and at the same time topologically sorted.


The topological sortedness condition allows a logspace machine to solve the
sorted deterministic graph connectivity problem by reading the graph in one
direction (from left to right). This is also the reason that a single-layer 
nonlinear RNN can solve the sorted deterministic graph connectivity problem.
Since the indices of the vertices (given in unary) can be encoded in binary with
$O(\log n)$ bits, nonlinear RNNs can solve this problem with only log precision.


\begin{restatable}[$\L$-Complete]{theorem}{lComplete} \label{thm:rnn-l-complete}
    There exists a one-layer log-precision MLP RNN that solves sorted deterministic graph connectivity, an $\L$-complete problem.
\end{restatable}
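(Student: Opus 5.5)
The $\L$-completeness half of the statement is \Cref{prop:det-conn-l-complete}, so the work is to build a single-layer MLP RNN that decides sorted deterministic graph connectivity while keeping every state entry at $O(\log n)$ bits. The idea is a one-pass left-to-right walk. Because the edges are topologically sorted and deterministic, the path from $s$ is unique. At any time the machine only needs to remember the current endpoint $c$ of that path, initialized to $s$. When the edge $(i,j)$ has been read, it sets $c \gets j$ if $i = c$ and otherwise leaves $c$ unchanged. Since $i_1<\cdots<i_n$ and every path edge $(c, j)$ has $j > c$ (sortedness implies edges go forward in the order), the edge leaving $c$ always appears after the edge that produced $c$. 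A single scan therefore follows the entire path. At the end we accept iff $c = t$, or iff some intermediate $c$ equaled $t$; to cover the latter, we keep a flag bit that latches once $c=t$.

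The recurrent state will hold a few counters, each an integer in $[0,n]$ and hence $O(\log n)$ bits:
\begin{itemize}
\item $c$, the current path endpoint;
\item $a$, a unary accumulator for the number currently being read;
\item $i$, the most recently completed source index;
\item one-hot phase bits recording which field is being read ($s$, an $i$, a $j$, or $t$);
\item the acceptance flag.
\end{itemize}
I will fix the input encoding as $0^s\#0^{i_1}|0^{j_1}\#\cdots\#0^t$ with a few separator symbols. Each input token is one-hot embedded in $\mathbf x_t$.

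The update $f(\mathbf h_{t-1},\mathbf x_t)$ is a ReLU MLP made of piecewise-linear gadgets.
\begin{itemize}
\item Incrementing $a$ on a $0$ token, and resetting it on a separator, are gated affine maps. They can be written as $\ReLU(a + 1 - M(1-\text{bit}))$ with $M=n$, or more cleanly as $\min/\max$ combinations, because all quantities are bounded integers.
\item On the separator that ends an $i$-field we copy $a$ into $i$.
\item On the separator that ends a $j$-field we compute the equality indicator $e = [i = c]$ via the integer gadget $e=\ReLU(1-\ReLU(i-c)-\ReLU(c-i))$. We then set $c \gets c + e\cdot(a - c)$.
\item At the $t$ field we compare $c$ (or the latched flag) with $a$ in the same way.
\end{itemize}
The final output is a linear readout of the flag, shifted so that acceptance corresponds to $\mathbf o^\top \mathbf y > 0$.

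The main obstacle is the multiplication $e\cdot(a-c)$, because an MLP cannot multiply two state variables directly. I will handle it with the standard big-$M$ trick for multiplying a bit by a bounded integer $z\in[-n,n]$: $e z = \ReLU(z - n(1-e)) - \ReLU(-z - n(1-e))$. Here $n$ is the input length, while the MLP weights must be fixed. To avoid this, I will keep a normalized state instead, storing $c/n$, $a/n$, etc. in $[0,1]$, with increments of $1/n$. I will supply $1/n$ itself, or alternatively carry the counters scaled by a bounded quantity, so that the constant $M=1$ suffices. Gaps of $1/n$ also suffice for the equality gadget after scaling by a counter-derived factor.

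If supplying $1/n$ is awkward, I will fall back to a stored counter-based scale. The $\$$ token can initialize the state, and the count of tokens read so far is itself a log-precision counter. In either case every stored value is a rational with $O(\log n)$-bit numerator and denominator, so the network is log-precision. Correctness then follows by induction over edges: after the $k$-th edge, $c$ equals the endpoint of the walk from $s$ restricted to the first $k$ edges.
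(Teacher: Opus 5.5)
There is a genuine gap at exactly the step you single out as the main obstacle, and your proposed fix does not close it. A ReLU MLP with fixed weights is globally Lipschitz, with some constant $L$ independent of $n$. Your update $c \gets c + e(a-c)$ with $e=[i=c]$ must send the reachable states $(c,\, i=c,\, a)$ and $(c,\, i=c+1,\, a)$, which differ by $1$ in a single coordinate, to values of $c$ that differ by $a-c$. Since $a-c$ can be $\Theta(n)$, this is impossible once $a-c>L$.

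Normalizing by $1/n$ does not help. The two inputs then differ by $1/n$, while the outputs differ by $(a-c)/n$, which can still be $\Theta(1)$. Moreover, $1/n$ is not available to a fixed recurrence reading a stream of unknown length, and rescaling by a running counter needs multiplication or division, which a ReLU MLP lacks.

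The same obstruction hits your token-gated gadgets on unnormalized counters, namely resetting $a$ on a separator and copying $a$ into $i$. Each changes the state by an unbounded amount in response to a bounded change of the input. Your construction would go through as written if $f$ could use $\otimes$, as in the general closed-form nonlinear RNN, but the statement asks for an MLP RNN.

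What is missing is the idea the paper gets from counter machines. Use the unary encoding so that the comparison and the conditional update happen one unit per input symbol, never as a jump:
\begin{itemize}
\item While reading $0^{i_m}$, decrement the endpoint counter $S$ and increment a scratch counter $I$.
\item At the separator, zero-test $S$ and record the bit in the finite control. This test is exact with fixed weights because integers have gap $1$, e.g.\ $\ReLU(1-\ReLU(S)-\ReLU(-S))$.
\item While reading $0^{j_m}$, either rebuild $S$ from $0$ up to $j_m$ (match), or move $I$ back into $S$ one unit per symbol while $I\neq 0$ (no match, using $j_m\ge i_m$).
\end{itemize}
Every update then has the form $c_t=c_{t-1}+\delta_t$, with $\delta_t\in\{-1,0,1\}$ given by a finite lookup on bounded bits. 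A fixed ReLU MLP computes this exactly, and the counters are integers of size $O(n)$, hence $O(\log n)$ bits with no normalization.

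Keep all updates in this form. Rather than resetting $I\gets 0$ after a match, drain it by also decrementing it during the $j$-block, which suffices since $j_m\ge i_m$; a $\times 0$ reset of an unbounded counter runs into the same Lipschitz obstruction.

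Finally, your latching flag cannot be computed, because $t$ arrives only after all the edges. A single pass can only compare the final endpoint with $t$. That is correct on the instances produced by the hardness reduction, where the target is a sink.
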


\begin{proof}[Proof sketch]
    The proof leverages the computational model of counter machines \citep{fischer1968counter,weiss2018practicalcomputationalpowerfinite,merrill2021linguisticcapacityrealtimecounter}, a generalization of finite automata augmented with a fixed number of integer-valued registers that has been heavily used to analyze RNNs \citep{weiss2018practicalcomputationalpowerfinite,merrill-2019-sequential}.
    \Cref{thm:counter-det-conn} shows a counter machine can solve sorted deterministic graph connectivity.
    Moreover, \Cref{lem:simulate-counter-machine} shows a log-precision MLP RNN can simulate a counter machine.\footnote{While counter machines have been leveraged extensively to analyze RNNs in the past \citep{weiss2018practicalcomputationalpowerfinite,merrill-etal-2020-formal}, \Cref{lem:simulate-counter-machine} is novel and interesting in its own right.}
    Thus, we conclude that an MLP RNN can also solve sorted deterministic graph connectivity.
\end{proof}
\OMIT{
\WM{We could align this structure more to the poly-precision version. Say we can simulate counter machines and express L-completeness as a corollary, followed by a corollary about parallelizability. Or change the other to be more like this}
\AL{Invoke lemma converting MLP RNN to ReLU RNN}
\WM{Is the idea here to use the fact that any relu net is piecewise linear? and then show iterating a single ReLU with fixed params can approximate any piecwise linear function?}
}


It is a longstanding open problem \cite{allender2023-logspace-discussion} if any $\L$-complete problem has a boolean 
circuit family of depth $o(\log^2 n)$. Thus, we have:
\begin{corollary}[Log-Precision Parallelism Limit]
\label{cor:nonlinear-logprecision-less-parallelizable}
    Unless every problem in $\L$ admits an $\NC$ circuit 
    family of $o(\log^2 n)$ depth, there exists a log-precision nonlinear RNN that requires $\NC$ 
    circuit families of depth $\Omega(\log^2 n)$ to simulate.
\end{corollary}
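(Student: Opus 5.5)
The plan is to derive the corollary by contraposition from \Cref{thm:rnn-l-complete} together with \Cref{prop:det-conn-l-complete}. Let $M$ be the one-layer log-precision MLP RNN given by \Cref{thm:rnn-l-complete}; it recognizes sorted deterministic graph connectivity (SDGC). Suppose, for contradiction, that $M$ can be simulated by some $\NC$ circuit family $\{C_n\}$ of depth $o(\log^2 n)$, so that $C_n(w) = 1$ iff $M$ accepts $w$. Then SDGC itself has an $\NC$ circuit family of depth $o(\log^2 n)$. The goal is to lift this to every problem in $\L$.

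For the lifting step, take any $A \in \L$. By \Cref{prop:det-conn-l-complete}, there is an $\FO$-reduction $g$ from $A$ to SDGC with output length $m = \poly(n)$. An $\FO$-reduction is computable by $\FO$-uniform $\AC^0$ circuits, and every output bit is a constant-depth, polynomial-size unbounded fan-in formula over the input. Converting these unbounded fan-in gates to bounded fan-in ones gives depth $O(\log n)$ and polynomial size. Composing this with $C_m$ yields a circuit for $A$ of depth
\[
O(\log n) + o(\log^2 m) = O(\log n) + o(\log^2 n) = o(\log^2 n),
\]
which uses $\log m = O(\log n)$. The size stays polynomial. Uniformity is preserved because composing an $\FO$-uniform reduction with a uniform family keeps the result uniform; if the hypothesis is read non-uniformly, nothing further is needed. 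Hence every problem in $\L$ has an $\NC$ circuit family of depth $o(\log^2 n)$, which contradicts the assumption.

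Two edge points need handling. First, SDGC inputs are strings produced by the reduction and are prefixed by $\$$ when fed to $M$, which is a trivial relabeling. Second, the reduction's output length may depend only on $n$ and not on which input of length $n$ is given; this holds for $\FO$-reductions, so a single $C_m$ suffices for each $n$. As a sanity check, \Cref{cor:nonlinear-upper-bounds} and $\L \subseteq \NC^2$ show the $\Omega(\log^2 n)$ bound is matched by $O(\log^2 n)$.

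The main obstacle is the quantifier reading of ``requires depth $\Omega(\log^2 n)$.'' Contraposition only rules out $o(\log^2 n)$ depth, meaning depth is not eventually below $\epsilon\log^2 n$ for every $\epsilon$. It does not directly give a lower bound $\ge c\log^2 n$ for all large $n$. I would state the corollary in the same ``infinitely often / not $o(\cdot)$'' sense that the hypothesis uses, so both sides refer to the same notion of depth bound and the argument goes through symmetrically.
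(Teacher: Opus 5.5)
Your proposal is correct and takes the same route as the paper. The paper derives the corollary in one line from \Cref{thm:rnn-l-complete}: the RNN solves the $\L$-complete sorted deterministic graph connectivity problem, and $o(\log^2 n)$-depth $\NC$ is implicitly closed under $\FO$ reductions. Your accounting of the $O(\log n)$ cost of the $\AC^0$ reduction, the bound $\log m = O(\log n)$, matched uniformity, and the reading of $\Omega(\log^2 n)$ as the negation of $o(\log^2 n)$ fills in details the paper leaves implicit.
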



\OMIT{
In fact, the best known bounded fan-in circuit upper bound for $\L$ is $\NC^2$, suggesting that nonlinear RNNs might require circuits of depth $O(\log^2 n)$ on input size $n$, and are thus significantly less parallelizable than transformers (as well as LRNNs, as we will shortly see).
}

\Cref{cor:nonlinear-logprecision-less-parallelizable} relates to recent work on parallelizing nonlinear RNNs via Newton's method, i.e., exactly computing the nonlinear RNN through iterated linear approximation \citep{danieli2025pararnn,gonzalez2026unifying}.
\citet{gonzalez2025predictability} show that this method runs in time $O(\log^2 n)$ for predictable dynamical systems, matching our theoretical expectation for parallel runtime of nonlinear RNNs.

\OMIT{
Because LRNNs are restricted to $\PNC^1$, we should not expect them to be able to solve deterministic graph connectivity:

\begin{corollary} \label{cor:lrnn-det-conn}
    Assuming $\PNC \neq \L$, there exists no fixed-depth LRNN that solves deterministic graph connectivity.
\end{corollary}

\AS{move this intuition elsewhere}
Informally, the key property that gives nonlinear RNNs this additional power is that state updates can depend on the current state values in a branched (nonlinear) way.

Comparing \Cref{cor:lrnn-det-conn,thm:rnn-l-complete}, we have a separation between linear and nonlinear RNNs under the conjecture that $\PNC^1 \neq \NC^1$.
Both linear and nonlinear RNNs are able to use their states to count \citep{weiss2018practicalcomputationalpowerfinite}, but the additional power of nonlinear RNNs here is coming from their ability to implement \emph{branching logic} in how the state is updated, which would not be possible with a linear transformation.
We further explore this in the appendices, where we show that counter machines, like nonlinear RNNs, can solve deterministic graph connectivity (\Cref{sec:counter-machines-conn}) but WFAs, like LRNNs, cannot (\Cref{sec:single-layer-conn}).
Since single-layer LRNNs are WFAs, it also follows that single-layer LRNNs cannot solve sorted deterministic graph connectivity \emph{unconditionally}, i.e., without assuming $\PNC^1 \neq \L$.

Viewed from a different perspective, \Cref{thm:rnn-l-complete} lets us formalize a fundamental tradeoff between expressivity and parallelism for RNNs: while nonlinear RNNs may give benefits in expressivity, it results in models that cannot be parallelized as efficiently.
More specifically, under standard conjectures, nonlinear RNNs would require roughly $O(\log n)$ overhead to implement with $\NC$ circuits compared to transformers or LRNNs:


In other words the depth overhead required to simulate a LRNN vs. a transformer with a boolean circuit is $O(\log^* n)$, i.e., small.
Assuming $\PNC^1 \neq \L$, nonlinear RNNs require a strictly larger factor, and we conjecture that a factor better than $O(\log n)$ would be hard to achieve.
}

\section{
Simulating LRNNs with Parallel Circuits
} \label{sec:linear}

Having established the limits of parallelizing nonlinear RNNs, we turn to understanding the degree to which linear RNNs can be parallelized and, conversely, their limitations for expressing inherently sequential computation.
Prior theoretical work has also studied the expressivity of linear RNNs from a circuit complexity perspective, revealing expressivity advantages of \emph{some} linear RNNs over transformers \citep{merrill2024illusion}.
In particular, whereas linear RNNs with simple parameterizations like S4 and Mamba fall into $\TC^0$, the same complexity class as transformers, more complex parameterizations like DeltaNet \citep{grazzi2025unlocking} and RWKV \citep{peng2025rwkv} can express $\NC^1$-complete finite state tracking problems.
This past work suggests a key advantage of well-parameterized linear RNN architectures over transformers, but leaves open a precise understanding of whether this additional expressivity implies drawbacks in parallelism, i.e, additional overhead in the sequential runtime required to simulate LRNNs relative to transformers.

Transformers only compute functions in $\TC^0$, which implies they can be simulated by $\NC^1$ circuits of depth $O(\log n)$.
It is known that linear RNNs can be parallelized to near log depth using the parallel SCAN algorithm \citep{blelloch-1990-prefix-sums}, but this does not immediately yield a circuit complexity characterization comparable to that of transformers.
We address this gap by showing that all linear RNNs, independent of parameterization details, can be simulated in the complexity class $\PNC^1$ of bounded fan-in log-depth circuits with arithmetic gates:

\begin{theorem}[LRNN Upper Bound] 
\label{thm:general-ub-pnc1}
    The language recognized by any LRNN over $\Q$ is in
    $\PNC^1$. 
\end{theorem}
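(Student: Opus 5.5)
The plan is to show that every quantity computed by a multilayer LRNN can be produced by a log-depth arithmetic circuit over $\Z$, with the final acceptance test becoming a positivity check. I proceed layer by layer, with the invariant that each coordinate of each activation $\mathbf x^\ell_t$ is represented as a fraction $p/q$. Here $p$ and $q$ are outputs of a $\PNC^1$-style arithmetic circuit (or, in the boolean view, $\GapNC^1$-computable), and $q>0$ is known. Representing rationals as numerator/denominator pairs is standard, since the semiring operations on $\Q$ reduce to ring operations on pairs: $(p_1/q_1)\otimes(p_2/q_2)=p_1p_2/(q_1q_2)$ and $(p_1/q_1)\oplus(p_2/q_2)=(p_1q_2+p_2q_1)/(q_1q_2)$.

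The heart of the proof is the recurrence itself. Unrolling $\mathbf S_t = \mathbf A_t\mathbf S_{t-1}+\mathbf b_t$ gives
\[
\mathbf S_t=\sum_{i\le t}\Bigl(\prod_{j=i+1}^{t}\mathbf A_j\Bigr)\mathbf b_i .
\]
Equivalently, $\mathbf S_t$ is read off from a product of $(d+1)\times(d+1)$ affine-embedding matrices $\begin{pmatrix}\mathbf A_j&\mathbf b_j\\0&1\end{pmatrix}$, applied columnwise for the matrix-valued state. Iterated multiplication of constant-size integer matrices is computable by a balanced binary tree of matrix products, which has depth $O(\log n)$ and bounded fan-in arithmetic gates. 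Numerators and denominators can be handled by clearing denominators: scale each factor by the common denominator of its entries, compute the product, and divide by the product of the scalars, which is itself an $O(\log n)$-depth product. The head output $\mathbf q_t^\top\mathbf S_{t-1}$ is then a constant-size bilinear combination.

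The main obstacle is the non-recurrent glue: the closed-form maps producing $\mathbf A_t,\mathbf b_t$, layer norm (division and $\sqrt\cdot$), ReLU/$\max$, and $\exp$. None of these is a ring operation. My first plan is to use \Cref{lem:closed-form-tc0}: each such map is in $\TC^0\subseteq\NC^1$ on the binary encodings. If the intermediate values are recomputed as explicit binary numbers between layers, the pieces compose as follows:
\begin{itemize}
\item (i) From binary inputs, the $\TC^0$ circuits produce binary encodings of $\mathbf A_t,\mathbf b_t$.
\item (ii) Converting binary to arithmetic is a depth-$O(\log\text{bits})$ arithmetic circuit $\sum 2^k\beta_k$.
\item (iii) The iterated product yields $\mathbf S_t$ as an arithmetic value.
\item (iv) Converting back to binary for the next layer is the delicate step.
\end{itemize}
Extracting bits from an arithmetic circuit value is not known to be possible within $\PNC^1$ in general.

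To avoid step (iv), I will instead argue via a known closure property, namely that $\PNC^1$ is closed under $\NC^1$ (even $\TC^0$) postprocessing of constantly many $\GapNC^1$ values. The boolean characterization matters here: $\PNC^1$ is the class of languages whose acceptance is the sign of a $\GapNC^1$ function, and $\GapNC^1$ is known to equal counting paths in bounded-width branching programs, i.e., iterated matrix products. Because the network has constant depth, the composition involves only a constant number of alternations between $\GapNC^1$ computations and $\TC^0$ glue. If this closure is unavailable, I would fall back on the paper's convention that $\exp,\sqrt\cdot$ are truncated Taylor polynomials and division and $\max$ are rational/sign operations. I would then try to push these through as arithmetic expressions with tracked denominators, using positivity tests only at the end. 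I expect this precise composition argument across layers, especially $\max$ and normalization, to require the most care.
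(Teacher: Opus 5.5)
Your single-layer pipeline is the paper's proof. It unrolls the recurrence, obtains $\mathbf A_t,\mathbf b_t$ via \Cref{lem:closed-form-tc0}, converts bits to arithmetic values by $\sum_k 2^k\beta_k$, computes the constant-size iterated products with log-depth $\Z$-circuits on numerator/denominator pairs, and ends with one positivity test. The gap is in stacking layers, and neither of your fixes closes it.

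The closure property you cite as known, that $\PNC^1$ absorbs $\NC^1$ or $\TC^0$ postprocessing of the binary encodings of $\GapNC^1$ values, is not an available theorem. With the postprocessing ``output bit $i$'' it would put every bit of every $\GapNC^1$ function into $\PNC^1$, which is not known. It is the $\NC^1$-level analogue of deciding middle bits of $\#\mathsf{P}$ functions in $\mathsf{PP}$. Even granting it, you state it for constantly many values, but each layer boundary hands polynomially many $\GapNC^1$ values (one per position and coordinate) to the next layer. Constant network depth bounds the number of alternations, not their width.

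The fallback breaks at $\max$/$\ReLU$. These occur inside the network: in the feedforward sublayers, and possibly in the closed forms producing $\mathbf A_t,\mathbf b_t$ in later layers. Their outputs $z\cdot[z>0]$ feed later recurrences, so the comparison bits are needed mid-computation. Since sign is not a polynomial, ``positivity tests only at the end'' cannot express them. For the same reason, your invariant that every activation is $p/q$ with $p,q$ given by log-depth arithmetic circuits is not preserved through a ReLU.

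The paper's own proof does not supply this piece either. It appeals to closure of $\FO$-uniform circuit families under finite composition and then feeds each layer's inputs to $\TC^0$ circuits as bit strings, which glosses over your step (iv). What is needed is a closure result for nested sign tests, not bit extraction.

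Take the paper's conventions literally: $\exp$ and $\sqrt{\cdot}$ are truncated Taylor polynomials with no intermediate rounding, and division acts on numerator/denominator pairs. Then all glue is arithmetic except the comparisons inside $\max$/$\ReLU$. So keep every activation arithmetic, and treat each intermediate comparison as a $\PNC^1$ bit supplied as an extra $0/1$ input to the next log-depth arithmetic circuit. The whole network then becomes a constant-depth nesting of $\PNC^1$ queries on inputs of the form ($x$, earlier sign bits). Such a nesting lies in $\PNC^1$ by the known collapse of the constant-depth oracle hierarchy over $\PNC^1$ to its first level (Datta, Mahajan, Rao, Thomas, and Vollmer). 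That is the missing ingredient for the multi-layer case.
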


\begin{proof}
    Since $\FO$-uniform polynomial-size circuit families are closed under finite composition~\citep[][Proposition 3]{merrill2025exact}, the proof proceeds by 
    simulating each component in the network in $\FO$-uniform 
    $\PNC^1$.

    As a building block, we need gates that take as input bit representations for $(s,t)$ and 
    output the integer-pair representation $(s,t)$ for the rational number $s/t$.
    This can be done by an $\FO$-uniform poly-sized log-depth arithmetic $\Z$-circuit (here, the
    measurement is with respect to the number of bits of $s$ and $t$), as follows: if $s$ is represented by $b_{n-1}\ldots b_0 \in \{0,1\}^*$,
    then $s = \sum_{i=0}^{n-1} b_i\cdot 2^i$. Iterated sum, fast exponentiation,
    and multiplication by a constant can all be done by a uniform arithmetic $\Z$-circuit
    with depth $O(\log n)$ and size polynomial in $n$.

    Next, we write the output in a \emph{convolutional form}
    \citep{merrill-sabharwal-2023-parallelism,merrill2024illusion}:
    \begin{equation}\label{eq:lrnn-convolutional-form}
        \mathbf y_t = \mathbf x_t + \sum_{j=0}^i \mathbf q_t^{\top}\left(\prod_{k=0}^j \mathbf A_{j-k}
        \right) \mathbf b_{i-j}.
    \end{equation}
    By \cref{lem:closed-form-tc0}, there is an $\FO$-uniform $\TC^0$, and hence $\NC^1$, circuit family that maps the inputs $\bar x$ to each $\mathbf A_t$ and 
    $\mathbf b_t$. Using the previously described bits-to-rational conversion gates, we turn these into rational representations. Each of these matrices has bounded dimension $d$. 
    Iterated matrix multiplication, as well as iterated matrix addition, for
    matrices of a bounded dimension over any semiring $\K$ (here $\K = \Q$) 
    can be done using a log depth poly-sized $\K$-arithmetic circuit
    \cite{Vollmer-book}. Since $\Q$-arithmetic circuits can be simulated using
    $\Z$-circuits incurring only a constant number of operations for each gate
    (see Proposition \ref{prop:arithQ}), we obtain a $\GapNC^1$ circuit 
    for outputting $\mathbf w^{\top} {\mathbf y_t}$. The last step is to perform a positivity check, $\mathbf w^{\top} {\mathbf y_t} > 0$, which puts the problem in $\PNC^1$.
\end{proof}

\Cref{thm:general-ub-pnc1} applies generally to LRNNs over $\Q$ (i.e., poly-precision LRNNs).
For the subclass of log-precision LRNNs, it is in fact possible to obtain a tighter characterization. Let $\AC^0[\CeqNC^1]$ denote languages that can be solved by boolean $\AC^0$ circuits that can make use (as a blackbox) of any boolean circuit that comes from an $\CeqNC^1$-arithmetic circuit. It is known \citep[cf.][]{fine} that 
    $\CeqNC^1 \subseteq \AC^0[\CeqNC^1] \subseteq \PNC^1$.

\begin{theorem}[Log-Precision LRNN Upper Bound] 
\label{thm:general-ub-enc1}
    The language recognized by any log-precision LRNN over $\Q$ is in $\AC^0[\CeqNC^1]$.
\end{theorem}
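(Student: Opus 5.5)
Because every intermediate value has only $O(\log n)$ bits, it ranges over a polynomial-size set, so it can be recovered exactly from a polynomial number of equality tests. I would therefore proceed layer by layer, as in the proof of \Cref{thm:general-ub-pnc1}, using closure of $\FO$-uniform circuit families under composition. Each sublayer's output at each position is represented in binary by a boolean circuit. The local parts (embeddings, layer-norm, feedforward sublayers, the closed-form maps $\mathbf x_t \mapsto \mathbf A_t, \mathbf b_t, \mathbf q_t$) are in $\FO$-uniform $\TC^0 \subseteq \AC^0[\CeqNC^1]$ by \Cref{lem:closed-form-tc0}. Majority itself is in $\CeqNC^1$: compare $\sum_i x_i$ with each of polynomially many thresholds by an equality test and OR the results in $\AC^0$. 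So the only nontrivial component is the recurrent mixing $\mathbf y_t = \mathbf q_t^\top \mathbf S_{t-1}$.

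For the mixing step, I would fix a position $t$. The log-precision assumption says every coordinate of $\mathbf y_t$ (indeed of $\mathbf S_{t-1}$) is a rational $s/r$ with $|s|, |r| \le n^{c}$. So there are only polynomially many candidate values $v = s/r$. For each coordinate $i$ and each candidate $v$, I would build an arithmetic $\Z$-circuit computing
\[
r\cdot \mathrm{num}(y_{t,i}) - s\cdot \mathrm{den}(y_{t,i}).
\]
Here $\mathrm{num}$ and $\mathrm{den}$ are numerator and denominator polynomials, obtained from the convolutional form \eqref{eq:lrnn-convolutional-form} through log-depth iterated matrix products and sums over $\Q$, simulated over $\Z$ via Proposition~\ref{prop:arithQ}. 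The inputs are the bits of $\mathbf A_j, \mathbf b_j, \mathbf q_t$, converted with the bits-to-integer gates from the previous proof. This quantity is zero iff $y_{t,i} = v$, which is exactly a $\CeqNC^1$ query. An $\AC^0$ layer then outputs the bits of the unique $v$ whose query succeeds: bit $k$ is the OR over candidates $v$ with $k$-th bit $1$ of [query$_v$ true]. This yields the binary encoding of $\mathbf y_t$, which feeds the next layer.

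There is a subtlety in reading off ``the'' value. Equality over $\Q$ identifies $v$ exactly only if we compare against the exact rational computed, not a rounded one. The log-precision assumption guarantees the exact value lies in the candidate set, so the plan treats it as exact. If the semantics instead round, I would also use $\CeqNC^1$ queries of the form ``$\mathrm{num}-v\,\mathrm{den}$ equals $u$'' over polynomially many $u$ to bracket the value. The final acceptance test $\mathbf o^\top \mathbf y^\ell_n > 0$ then is a comparison on $O(\log n)$-bit numbers, which is in $\AC^0$.

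The main obstacle is composition across layers. Naively each layer adds an $\AC^0[\CeqNC^1]$ stage, and the composite becomes $\AC^0$ with oracle gates nested a constant number of times. I would argue this stays in $\AC^0[\CeqNC^1]$: there are constantly many layers, and $\AC^0[\CeqNC^1]$ is closed under composition. This closure needs the known fact (cf.~\citealp{fine}) that the $\AC^0$ hierarchy over $\CeqNC^1$ collapses, or at least that constant nesting of $\CeqNC^1$ oracles with $\AC^0$ glue is what $\AC^0[\CeqNC^1]$ denotes by definition. I would invoke that result rather than reprove it, and take care that uniformity ($\FO$) is preserved at each step.
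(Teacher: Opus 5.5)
Your proposal is correct. Its core trick is the same as the paper's: log precision leaves only polynomially many candidate values, each candidate is tested by a $\CeqNC^1$ equality circuit, and an unbounded fan-in OR sits on top.

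The difference is where you apply it.
\begin{itemize}
\item The paper applies it exactly once. It takes the single $\GapNC^1$ circuit $C$ for $\mathbf w^\top\mathbf y_t$ from the proof of Theorem~\ref{thm:general-ub-pnc1}, and replaces the final positivity test by an OR of equality circuits $C_v$ over $v>0$. So only one level of oracle gates appears.
\item You apply the enumeration at every layer to recover the exact binary encoding of each intermediate $\mathbf y_t$. You feed those bits into the next layer's $\TC^0$ local computations. You then need $\AC^0[\CeqNC^1]$ to absorb a constant nesting depth of oracle gates. That holds by definition if the class denotes $\AC^0$ closure, and the result stays inside $\PNC^1$ by the collapse results of the cited work.
\end{itemize}

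What your route buys is an explicit treatment of multilayer networks. Between layers sit ReLU, layer norm and closed-form gating maps, which are not arithmetic gates. A positivity bit alone does not supply the bits the next layer needs. Your use of log precision for \emph{intermediate} values, which the paper's precision definition grants, is exactly what handles this. The paper's shorter argument instead relies on Theorem~\ref{thm:general-ub-pnc1} having already produced one arithmetic circuit for the whole network.

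Three small repairs are needed.
\begin{itemize}
\item Your threshold--equality--OR argument places majority in $\AC^0[\CeqNC^1]$, not in $\CeqNC^1$. This is harmless, since $\TC^0\subseteq\NC^1\subseteq\CeqNC^1$ anyway.
\item Restrict the candidates $s/r$ to canonical (reduced) encodings. Otherwise several queries can fire for the same value, and your bitwise-OR decoding of ``the unique $v$'' is ill-defined.
\item Drop the rounding hedge. Testing ``$\mathrm{num}-v\,\mathrm{den}=u$'' for polynomially many small $u$ cannot bracket a value whose exact numerator and denominator are large; bracketing would need sign tests, i.e.\ $\PNC^1$. The hedge is also unnecessary: the recurrence is exact over $\Q$, and only $\exp$ and $\sqrt{\cdot}$ are approximated, inside the $\TC^0$ local parts.
\end{itemize}
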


\begin{proof}
    Let $C$ denote the $\FO$-uniform $\GapNC^1$ circuit constructed in the proof of \cref{thm:general-ub-pnc1} that computes $\mathbf w^{\top} \mathbf y_t > 0$.
    Owing to log precision of the LRNN, this output can take only polynomially many possible values $v$. Thus, instead of performing a positivity check on $\mathbf w^{\top} \mathbf y_t$ (as in construction in \cref{thm:general-ub-pnc1}), we leverage log-precision to simulate the positivity check with polynomially many \emph{equality} checks. Specifically, for each possible $v$,
    we extend $C$ to build an $\FO$-uniform $\CeqNC^1$-circuit $C_v$ that checks
    whether the output $\mathbf w^{\top} \mathbf y_t$ equals $v$. The positivity check is then simply an unbounded fan-in OR, and hence an $\AC^0$ circuit, whose inputs are the outputs
    of $C_v$ for $v > 0$. 
\end{proof}

As noted in \cref{subsec:circuits}, $\PNC^1$ can be simulated by $O(\log n \log^* n)$ depth $\NC$ circuits. We therefore have:
\begin{corollary}
\label{cor:general-ub-log-logstar}
    The language recognized by any LRNN over $\Q$ is also recognized by an $\NC$ circuit family of depth $O(\log n \log^* n)$.
\end{corollary}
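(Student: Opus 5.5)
The plan is to compose \Cref{thm:general-ub-pnc1} with the known $\NC$ simulation of $\PNC^1$ cited in \cref{subsec:circuits} \citep{jung85loglogstar}. First, \Cref{thm:general-ub-pnc1} places the language $L$ of any LRNN over $\Q$ in ($\FO$-uniform) $\PNC^1$. Concretely, there is a family of log-depth, polynomial-size arithmetic $\Z$-circuits $C_n$ with $w \in L$ iff $C_n(w) > 0$; the $\Q$-valued computation has already been reduced to $\Z$ via Proposition~\ref{prop:arithQ}. It then suffices to show that any such $\GapNC^1$ circuit followed by a sign test can be turned into a bounded fan-in boolean circuit of depth $O(\log n \log^* n)$ and polynomial size.

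The key step is simulating the arithmetic circuit bitwise. Since $C_n$ has depth $O(\log n)$ and fan-in two, every gate value is an integer of magnitude at most $2^{\poly(n)}$, so it fits in polynomially many bits. The naive approach replaces each $\oplus,\ominus,\otimes$ gate by a boolean adder/multiplier of depth $O(\log n)$, but this gives total depth $O(\log^2 n)$ and is too weak.

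Instead I would follow Jung's approach and work in Chinese Remainder representation modulo polynomially many small primes. There, $\oplus$ and $\otimes$ act componentwise on residues of $O(\log n)$ bits. The whole circuit is thereby reduced to polynomially many independent evaluations of the same log-depth arithmetic circuit modulo small primes $p$, together with a final conversion from CRR back to binary to perform the sign test. Each modular evaluation is itself a log-depth arithmetic circuit over a ring whose elements have $O(\log n)$ bits. One then recurses: each mod-$p$ gate is simulated with a further CRR level on $O(\log\log n)$-bit numbers, and so on. The bit length shrinks logarithmically at each level, so there are $O(\log^* n)$ levels. Each level is arranged to contribute only $O(\log n)$ depth overall (the per-gate overhead is a constant factor, plus an additive reconstruction cost), which gives $O(\log n \log^* n)$ in total.

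The main obstacle is the CRR-to-binary conversion needed for the final positivity test, which is where the $\log^*$ factor rather than a constant arises. Here I would not reprove anything: I would invoke \citet{jung85loglogstar} as a black box, which states that $\PNC^1$ (equivalently, sign of $\GapNC^1$) is contained in $\NC$ of depth $O(\log n \log^* n)$, as already noted in \cref{subsec:circuits}. I would also check that composing it with the uniform family from \Cref{thm:general-ub-pnc1} preserves polynomial size. Since the corollary is stated for $\NC$ circuit families without a finer uniformity claim, the uniformity of Jung's construction need not be tracked carefully.
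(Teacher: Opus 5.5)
Your proposal is correct and is essentially the paper's argument: the paper obtains this corollary in one line by composing \Cref{thm:general-ub-pnc1} with the $O(\log n \log^* n)$-depth $\NC$ simulation of $\PNC^1$ due to \citet{jung85loglogstar}, which is the same black box you invoke. Your sketch of Jung's method is not needed. One remark in it is inaccurate: the $\log^*$ factor does not come from the final CRR-to-binary conversion, which (together with the sign test) is known to be computable in logarithmic depth. It comes instead from simulating the log-depth circuit of modular gates, as your own recursive sketch suggests. Since you use Jung's result as a black box, this does not affect the argument.
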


It follows that LRNNs are nearly as \textbf{efficiently parallelizable} on hardware as transformers, paying only an $O(\log^* n)$ depth penalty beyond the $O(\log n)$ depth needed to implement transformers with $\NC$ circuits.
On the other hand, we will show next that well-parameterized LRNNs significantly
gain (over transformers) on the side of expressivity.

As a by-product of our above results, we also
obtain the following boundary between linear RNNs and nonlinear RNNs. Recall that (sorted, deterministic) graph connectivity problem is $\L$-complete.

\begin{corollary}
    Assuming $\L$ does not admit $\NC$ circuits of depth $O(\log n \log^* n)$,
    LRNNs over $\Q$ cannot solve (sorted, deterministic) graph connectivity,
    but nonlinear RNNs can.
\end{corollary}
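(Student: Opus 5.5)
The corollary combines two earlier results, and I would prove its two halves separately.

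\emph{Negative half (LRNNs cannot).} Argue by contradiction. Suppose some LRNN over $\Q$ recognizes sorted deterministic graph connectivity. By \Cref{cor:general-ub-log-logstar}, that language then has an $\NC$ circuit family of depth $O(\log n \log^* n)$. By \Cref{prop:det-conn-l-complete}, the problem is $\L$-complete under $\FO$-reductions. So every $L' \in \L$ reduces to it via an $\FO$-reduction, which is computable by an $\FO$-uniform $\AC^0$ family of polynomial size.

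Next, compose the reduction circuit with the circuit for graph connectivity. Before composing, convert the $\AC^0$ reduction into bounded fan-in form. Its unbounded fan-in gates have polynomial fan-in, so each can be replaced by a tree of depth $O(\log n)$. The reduction therefore becomes an $\NC^1$ family.

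The reduction maps length-$n$ inputs to outputs of length $m = \poly(n)$. The graph-connectivity circuit on length $m$ then has depth $O(\log m \log^* m) = O(\log n \log^* n)$, since $\log^*$ of a polynomial changes by at most an additive constant. The total depth is
\[
O(\log n) + O(\log n \log^* n) = O(\log n \log^* n),
\]
which contradicts the hypothesis that $\L$ has no such circuits. Uniformity is preserved under composition by the closure property already cited in the proof of \Cref{thm:general-ub-pnc1}.

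\emph{Positive half (nonlinear RNNs can).} This is exactly \Cref{thm:rnn-l-complete}: a one-layer log-precision MLP RNN solves sorted deterministic graph connectivity. For the unsorted or general deterministic variant mentioned parenthetically, I would read the statement as referring to the sorted version, since that is the one the RNN handles. The LRNN impossibility for the unsorted version follows a fortiori, because it is also $\L$-complete.

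\emph{Main obstacle.} The delicate step is the depth bookkeeping in the composition. Specifically, we need that $\FO$-reductions incur only $O(\log n)$ bounded fan-in depth, and that the polynomial blow-up in instance size does not change the $O(\log n \log^* n)$ bound. Both are routine once $\log^*(n^c) = \log^* n + O(1)$ is observed.
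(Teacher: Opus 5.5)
Your proposal is correct and matches the paper's reasoning: the paper presents this corollary as a direct by-product of \Cref{cor:general-ub-log-logstar}, the $\L$-completeness in \Cref{prop:det-conn-l-complete}, and \Cref{thm:rnn-l-complete}, which are exactly your two halves. Your explicit depth accounting fills in a step the paper leaves implicit: the $\FO$ reduction becomes an $O(\log n)$-depth bounded fan-in circuit, and it is composed with the $O(\log m \log^* m)$-depth circuit on an instance of polynomially larger size $m$.
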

It is only known that 
$\L \subseteq \NC^2$.  It is still a long-standing open
problem \cite{allender2023-logspace-discussion} 
whether any $\L$-complete problem admits $\NC$ circuits with
depth $o(\log^2 n)$.

\OMIT{
\AS{UNSURE about this $Z_m$...}
An analogous result holds over the $\Z_m$ datatype as well:
\begin{theorem}
    Let $m = n^c$ for some $c \geq 0$.
    The language recognized by any LRNN over $\mathbb Z_m$
    is in $\AC^0[\CeqNC^1]$.
\end{theorem}

\begin{proof}
    We use the proof of Theorem \ref{thm:general-ub-pnc1} to construct
    $\Q$-circuit family $\circuitClass = \{C_n\}_{n \geq 0}$, but for numbers of the form $a/1$. We 
    take the modulo only afterwards \AS{before doing the mod, we may need poly precision to represent the output. If so, we can't use the ``enumeration'' trick like we did for log-precision $\Q$ to compute modulo---there will be exponentially many equality checks, if done brute force. Need to work out how to do this, if at all possible}. The boolean circuits corresponding to each
    $C_n$ are poly-sized and have depth $O(\log n\log^* n)$. 
    Note that for acceptance we check that $w^{\top} \bar y\neq 0 \pmod{m}$. \AS{change to $> 0$ as now we assume an ordering in $\K$; e.g, $w^{\top} \bar y > \floor{m/2}$} The
    last part can be done by simply computing $w^{\top} \bar y\pmod{m}$, which can
    be done using an $\NC^1$-circuit. 
\end{proof}
}

\OMIT{
Moreover, in the single-layer case, it can be proven \emph{unconditionally} that no single-layer linear RNN can solve deterministic graph connectivity (cf.~\Cref{sec:single-layer-conn}) using techniques from the theory of WFAs.

We note that the upper bound in \Cref{thm:general-ub-pnc1} is not $\NC^1$, but $\PNC^1$. As noted in \cref{sec:preliminaries}, this complexity class thought to be ``just above'' $\NC^1$ in the sense that it can be simulated by $\NC$ circuits of depth $O(\log n \log^* n)$.
Intuitively, the bottleneck to strengthening the upper bound is computing iterated matrix multiplication to simulate the linear RNN forward pass, a problem which is $\PNC^1$-complete in general \citep{caussinus1998nondeterministic}.
Thus, while there is a barrier to strengthening the upper bound in general, a natural follow-up question is whether different restricted variants of linear RNNs achieve this upper bound (i.e., can represent $\PNC^1$-complete problems), or whether they can be simulated by more efficient circuits.
In the next section, we turn to considering this question for different linear RNN variants.
}

\section{Expressivity Differences Between LRNNs}
\label{sec:fine-grained}

In \Cref{sec:linear}, we showed all linear RNNs can only solve problems in $\PNC^1$.
A natural question is whether this upper bound is tight, i.e., whether each linear RNN variant can solve hard problems in $\PNC^1$.
For some variants like S4 and Mamba, we already know a tighter upper bound of $\TC^0$ can be given~\citep{merrill2024illusion}, meaning these architectures do not achieve full expressivity in $\PNC^1$.
However, several newer LRNN parameterizations of LRNNs that can express $\NC^1$-complete problems have been proposed: in particular several flavors of DLPR LRNNs as well as PD LRNNs (\Cref{sec:def-linear-rnns}).
We now analyze the expressivity of these different parameterizations, showing a conditional separation: while DPLR LRNNs attain $\PNC^1$-complete expressivity, we can show that PD LRNNs are contained within $\NC^1$.

\subsection{DPLR LRNNs are $\PNC^1$-Complete}

Two common popular DPLR LRNNs are RWKV-7 \citep{peng2025rwkv} and DeltaNet \citep{deltanet-pmlr-v139-schlag21a}.
We will show both of these architectures can solve $\PNC^1$-complete problems, achieving the upper bound on LRNN expressivity from \Cref{sec:linear}.
We begin by defining a $\PNC^1$-complete problem, namely \emph{iterated matrix multiplication}:

\begin{definition}[Iterated $3\times 3$ Matrix Multiplication]
\label{def:imm-3x3-main}
The input is a token stream of length $9n$ over $\{-1, 0, 1\}$, partitioned into $n$ consecutive blocks of length $9$.
Block $t$ encodes a matrix $\mathbf A_t \in \K^{3\times 3}$.
The iterated matrix multiplication problem is to compute $\mathbf P_n = \prod_{t=1}^n \mathbf A_t \in \Z^{3\times 3}$.
\end{definition}

\begin{proposition}[\citealp{caussinus1998nondeterministic}, Corollary 3.4] \label{prop:pnc1-complete}
    The $3 \times 3$ iterated multiplication problem
    is $\GapNC^1$-complete under $\FO$ reductions, and thus checking if the $(0,0)$ entry of $\mathbf P_n$ is positive is $\PNC^1$ complete under $\FO$ reductions.
\end{proposition}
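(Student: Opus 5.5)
The proposition is cited from \citet{caussinus1998nondeterministic}. The plan is to reconstruct it in two parts: a $\GapNC^1$ upper bound for the iterated product, and $\GapNC^1$-hardness via the characterization of $\GapNC^1$ by polynomial-length products of constant-size integer matrices. The positivity statement then follows by composing with a sign test.

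\textbf{Membership.} We compute $\mathbf P_n$ by a balanced binary tree of $3\times 3$ matrix multiplications. Each node needs constant-size arithmetic: $27$ products and $18$ sums, which is constant depth with bounded fan-in. The tree has depth $O(\log n)$, so each entry of $\mathbf P_n$ is a $\GapNC^1$ function. The tree is regular, so $\FO$-uniformity is routine. Taking the $(0,0)$ entry and testing positivity places the decision version in $\PNC^1$ by definition.

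\textbf{Hardness for $\GapNC^1$.} Take any $f\in\GapNC^1$, computed by a log-depth arithmetic formula over $\Z$ with inputs in $\{-1,0,1\}$ and variables. The standard steps are as follows.
\begin{compactenum}
\item Balance and expand the circuit into a formula of polynomial size; log depth gives polynomial tree size.
\item Apply the arithmetic version of Barrington's theorem (Ben-Or--Cleve) to get a polynomial-length product of $3\times 3$ matrices whose entries are in $\{-1,0,1\}$ or are an input variable. Their product has a designated entry equal to $f$. The gadgets work as follows: addition and multiplication of values $a$ and $b$ are realized by concatenating and commuting products of elementary matrices of the form $\mathbf I + a\,\mathbf E_{ij}$. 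Each gadget multiplies the product length by a constant factor per level, which gives $4^{O(\log n)}=\poly(n)$ length.
\item Show that the map from the input to the sequence of $9$-token blocks is an $\FO$ reduction. Each block position is determined by the path in the formula tree, which is $\FO$-computable from the index bits. Each block's entries are constants or a single input bit, possibly negated.
\item Permute the designated entry to position $(0,0)$ by multiplying by fixed permutation matrices at the ends.
\end{compactenum}

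\textbf{Consequence for $\PNC^1$.} Any $L\in\PNC^1$ has $w\in L$ iff $f(w)>0$ for some $f\in\GapNC^1$. The reduction above maps $w$ to a matrix stream whose $(0,0)$ entry equals $f(w)$, so positivity of that entry is $\PNC^1$-hard.

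\textbf{Main obstacle.} The hard step is Step 2 together with uniformity. Ben-Or--Cleve gadgets need the width-$3$ register trick: registers hold $(r_1,r_2,r_3)$, and the operations are $r_i \leftarrow r_i \pm r_j\cdot x$ and $r_i\leftarrow r_i \pm r_j r_k$. Multiplication is realized via commutator-like sequences that produce $r_i + r_j r_k$ using invertible elementary operations. I would follow their inductive construction over formula depth, tracking the reversal of programs to cancel side effects. I would then verify that the nested program layout admits an $\FO$ description of each block from the block index.
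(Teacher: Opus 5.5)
Your plan is correct and matches how the cited result of Caussinus et al.\ is proved (the paper gives no argument beyond the citation): membership follows from a balanced tree of constant-size $3\times 3$ products, and hardness from unfolding the log-depth circuit into a polynomial-size formula and applying Ben-Or--Cleve's three-register construction, with the $\PNC^1$ claim then obtained by composing with the sign test. One slip in your last paragraph: the multiplication gadget realizes $r_k \leftarrow r_k + r_i\, f g$ for subformula values $f,g$, as the commutator of the programs for $r_j \leftarrow r_j + r_i f$ and $r_k \leftarrow r_k + r_j g$, and there is no operation $r_i \leftarrow r_i \pm r_j r_k$, which is not linear in the registers and so cannot arise from a product of matrices.
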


We show RWKV-7 and DeltaNet can express this iterated matrix multiplication problem, establishing that they can solve $\PNC^1$-complete problems:

\begin{restatable}[DPLR $\PNC^1$-Completeness]{theorem}{dplrComplete}
\label{thm:dplr-3x3-main}
    There exist a $4$-layer RWKV-7 and a $4$-layer DeltaNet
    that solves the iterated $3 \times 3$ matrix multiplication problem and, in particular, checks if the $(0, 0)$ entry of $\mathbf P_n$ is positive.
\end{restatable}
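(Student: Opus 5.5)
Since the input arrives one entry at a time, the plan is to build the $3\times 3$ matrices $\mathbf A_t$ incrementally inside the LRNN state and multiply them in the final layer. The obstacle is that one DPLR transition $\mathbf D_t - \mathbf k_t^\top\mathbf v_t$ is diagonal plus rank one, not an arbitrary $3\times 3$ matrix. The fix is to factor each $\mathbf A_t$ into a product of boundedly many DPLR matrices, one factor per token (or a few per token using several heads). Concretely, $\mathbf A_t = \sum_{i,j} a_{ij}\, \e_i \e_j^\top$ is a sum of nine rank-one terms. For example, the elementary matrix $\mathbf I + a\,\e_i\e_j^\top$ with $i\neq j$ is itself DPLR, with $\mathbf D=\mathbf I$ and rank-one part $-a\,\e_i\e_j^\top$. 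Since every integer matrix factors into elementary and diagonal matrices, we instead embed in dimension $d=O(1)$ (e.g.\ $d = 6$ or $9$). There we show that the linear map $\mathbf S \mapsto \mathbf A_t \mathbf S$ is a product of $9$ DPLR factors $\mathbf M_{t,1}\cdots\mathbf M_{t,9}$, where $\mathbf M_{t,r}$ depends only on the $r$-th entry of block $t$, i.e.\ on the token value in $\{-1,0,1\}$ and the position $r$ within the block. A natural construction uses auxiliary coordinates: factor $r$ adds $a_{ij}$ times coordinate $j$ of the old state into an accumulator for coordinate $i$. A final factor (attached to the ninth token) then copies the accumulators into the main coordinates and clears the old ones; this copy-and-clear is a permutation-like map, and we must check that it is DPLR or a bounded product of DPLR maps absorbed into neighboring tokens.

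The layers are organized as follows. Layers 1--2 compute, at each token, the position $r = t' \bmod 9$ within its block. A DPLR LRNN with a diagonal or rotation-like transition can count mod $9$, since permutation state tracking of a cyclic group is within reach of RWKV-7 and DeltaNet with $\beta\in(0,2)$ (Householder reflections compose to rotations). The feedforward sublayers then turn $r$ and the token value into one-hot features. Layer 3 uses these features to emit $\mathbf w_t,\mathbf a_t,\kappa_t,\lambda_t$ (for RWKV-7) or $\mathbf k_t,\beta_t$ (for DeltaNet) realizing $\mathbf M_{t,r}$. These outputs are linear projections of one-hot features, so any finite lookup table is available. Layer 4 is the head whose state evolves as $\mathbf S_t = \mathbf M_t \mathbf S_{t-1}$, with the initial $\mathbf S_0$ planted through $\mathbf b$ at the $\$$ token. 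Reading out with a query $\mathbf q = \e_0$ and $\mathbf o$ selecting a column gives the $(0,0)$ entry of $\prod_t \mathbf A_t$. The product order must be reversed relative to left multiplication; we handle this by using transposes, since checking positivity of the $(0,0)$ entry is invariant under transposition. The sign test is then exactly the acceptance condition. Layer-norm interference is handled in the standard way, by keeping a constant coordinate and reading unnormalized values through the residual stream.

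For DeltaNet, the transition $\mathbf I-\beta\mathbf k^\top\mathbf k$ is symmetric, so we cannot directly realize the asymmetric $\mathbf I + a\,\e_i\e_j^\top$. I would instead express each needed map as a product of generalized Householder matrices: with unnormalized $\mathbf k$ and arbitrary $\beta$, these give $\mathbf I - c\,\mathbf u\mathbf u^\top$ for any $\mathbf u$ and $c$. A shear is a product of a bounded number of such symmetric rank-one updates; for example, $\mathbf I + a\,\e_i\e_j^\top$ can be written using $\mathbf u = \e_i+\e_j$, $\e_i$, and $\e_j$ together with an extra scratch dimension. This is the step I expect to be the main obstacle: finding an explicit bounded factorization of the shear (or of $\mathbf A_t$ as a whole) into DeltaNet-admissible symmetric factors. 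Each factor must be triggered by a single token, or spread over the nine tokens of a block, with rational closed-form $\beta$. If a pure symmetric factorization fails, I would fall back on conjugation by a reflection-based permutation $\mathbf P$, since $\mathbf P(\mathbf I - c\,\mathbf u\mathbf u^\top)\mathbf P$ with $\mathbf P$ a product of two reflections can break the symmetry, and schedule those extra factors on the intra-block tokens.
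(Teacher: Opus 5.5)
\textbf{DeltaNet.} The DeltaNet half is not established, and the schedule you propose for it cannot work. Every DeltaNet transition $\mathbf I-\beta\mathbf k\mathbf k^\top$ is symmetric, so no shear $\mathbf I+c\,\e_i\e_j^\top$ with $i\neq j$, $c\neq 0$ is a single step. The paper's explicit factorization $H(2,\e_s+\e_d)\,H(\tfrac12,\e_s)\,H(\tfrac13,\e_s+2\e_d)=\mathbf I+\e_s\e_d^\top$ uses three steps, and a scaled add through a temp coordinate uses eight. Each of the nine tokens of a block carries one entry and gets exactly one transition; separate heads do not compose, so extra heads do not help. That leaves no spare intra-block tokens for the extra factors, and conjugating by reflections only adds more.

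The missing idea is to decouple the arithmetic schedule from the token that carries each entry. Layer~3 buffers the last $2m$ tokens exactly, using a column overwrite with $\beta=1$ and $\mathbf k=\e_{\mathrm{slot}}$. A lookup keyed on $t\bmod 2m$ and this window then emits, at offset $\tau$, the $\tau$-th factor of a precomputed factorization of the \emph{previous} superblock's product. Superblocks are made long enough for that factorization to fit: $8n^2+5n+1=694$ symmetric steps for the $9$-dimensional vectorized state, against $9\cdot 78=702$ tokens. The resulting one-superblock lag is removed at $t=9N$ by a completion readout vector, which applies the remaining factors together with the final, partial superblock.

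\textbf{RWKV-7.} Here your online accumulator scheme is a genuinely different route from the paper's and can be made to work, but two steps fail as written.

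First, the copy-and-clear $(\mathbf x,\mathbf y)\mapsto(\mathbf y,\mathbf 0)$ is not DPLR and cannot be absorbed. A product of $m$ DPLR factors is diagonal plus rank at most $m$, while this map differs from every diagonal matrix by a matrix of rank at least three, and all nine token slots are already taken by shears. Drop it and instead alternate the roles of the two halves from block to block. Use a parity bit (count $t\bmod 18$ rather than $t\bmod 9$), and make the first contribution to each accumulator an overwrite, which discards the stale contents. This is exactly the paper's active/inactive-half device.

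Second, in RWKV-7's tied form the vector $\mathbf a\odot\kappa$ is supported inside the support of $\kappa$. So with $\diag(\mathbf w)=\mathbf I$ the rank-one term can never equal $c\,\e_i\e_j^\top$ for $i\neq j$. Take $\kappa=\e_i+\e_j$ and $\mathbf a=\e_j$, so that $\kappa(\mathbf a\odot\kappa)^\top=\e_i\e_j^\top+\e_j\e_j^\top$, and cancel the extra diagonal term through $\mathbf w$. This realizes any $\mathbf D+c\,\e_i\e_j^\top$, and hence both your adds and the overwrites.

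So repaired, your RWKV-7 construction needs no delay, window buffer, or completion readout. The paper instead uses $\mathbf w=\mathbf 1$ overwrites that each write a full column dot product $\sum_k x_{\pi(i,k)}A_{k,j}$. That is why it lags one block behind and needs both the finite-window router and the completion readout.
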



\begin{proof}[Proof sketch]
For a full proof, see \Cref{sec:rwkv7-3x3-mm} for RWKV-7 and \Cref{sec:deltanet-3x3-mm} for DeltaNet.
The goal is to multiply a sequence of arbitrary matrices. If the transition matrices were allowed to have an arbitrary form, the construction would be easy with a single RNN layer. However, since the transition matrices are constrained to have DPLR form, we need a few extra layers to factor blocks of the given $3 \times 3$ matrices into blocks of DPLR matrices.
To this end, we generalize an idea from \citet{peng2025rwkv}: use layers~1--3 to factor blocks of
arbitrary matrices into DPLR matrices with the same product, and then multiply all of these DPLR matrices to get the overall product $\mathbf P_n$. \qedhere
%
%
\end{proof}

\Cref{thm:dplr-3x3-main} shows that RWKV-7 and DeltaNet, two DPLR LRNNs, can indeed solve a $\PNC^1$-complete problem, providing a lower bound on their expressivity that matches the upper bound from \Cref{thm:general-ub-pnc1}. This however leaves open the possibility that even such LRNNs may not be able to solve much simpler problems within $\PNC^1$. To address this gap, we extend the argument to also derive a more \emph{general lower bound}, stating that RWKV-7 can, in fact, solve \emph{every} problem solvable by a \emph{Weighted Finite Automaton (WFA)} over the semiring $\Z$ (WFA; see \Cref{def:wfa} for a definition). WFA subsume the problem of iterated 3-by-3 matrix multiplications with matrices over $\{-1,0,1\}$, which we already remarked (see Exercise 5.10 of \citealp{Vollmer-book} and \citealp{caussinus1998nondeterministic}) to be $\GapNC^1$-complete. In other words, the set of functions $f: \ialphabet^* \to \Z$ definable by WFAs contains a $\GapNC^1$-complete problem. Thus, considering ``acceptance'' by a WFA of an input string $w$ with the condition $f(w) > 0$, there is a $\PNC^1$-complete WFA language.
\begin{theorem}[DPLR WFA Simulation]
\label{thm:dplr-wfa-main}
For any $n$-state WFA $\mathcal{A}$ over $\Q$, there exist a $4$-layer
RWKV-7 and a 4-layer DeltaNet
that both compute $f_{\mathcal{A}}(w)$.
\end{theorem}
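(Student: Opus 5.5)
Our plan is to reduce the statement to the construction behind \Cref{thm:dplr-3x3-main}, generalized from $3\times 3$ matrices to $n\times n$ matrices, where $n$ is the (fixed) number of states of $\mathcal A$. A WFA $\mathcal A=(\alpha,\{\mathbf M_\sigma\}_{\sigma\in\ialphabet},\beta)$ computes
\[
f_{\mathcal A}(w)=\alpha^\top \mathbf M_{w_1}\cdots \mathbf M_{w_m}\beta .
\]
This is an iterated product of arbitrary matrices from a fixed finite set, followed by a bilinear readout. The last LRNN layer will carry state $\mathbf S_t$ equal to a (transposed or reordered) prefix product applied to $\beta$, or to the identity. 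The query $\mathbf q_t$ and the output projection then extract $\alpha^\top(\cdot)\beta$. Because the dimension $n$ is fixed independent of input length, the head dimension $d$ can be chosen as a constant depending only on $\mathcal A$.

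The key algebraic step is a factorization lemma. Every matrix $\mathbf M\in\Q^{n\times n}$ can be written as a product of a bounded number $k=k(n)$ of DPLR matrices of the form allowed by the architecture. For RWKV-7 these are $\diag(\mathbf w)-\lambda\kappa(\mathbf a\odot\kappa)^\top$; for DeltaNet they are $\mathbf I-\beta\mathbf k^\top\mathbf k$, i.e.\ generalized Householder reflections and scalings along one direction. The DeltaNet case is the harder one.
\begin{itemize}
\item For DeltaNet, we would use the fact that any matrix has an SVD-like or LU-like decomposition into elementary factors. Rational orthogonal matrices are products of at most $n$ Householder reflections (Cartan--Dieudonn\'e, with rational vectors). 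A rational diagonal scaling $\diag(c_1,\dots,c_n)$ is a product of $n$ factors $\mathbf I-(1-c_i)\e_i\e_i^\top$.
\item SVD is not rational in general, so instead we factor $\mathbf M$ via Gaussian elimination into elementary operations. A rank-one non-symmetric shear cannot be produced directly by symmetric factors, but it can be obtained as a product of a few symmetric factors $\mathbf I-\beta\mathbf k\mathbf k^\top$ with suitable (possibly large) $\beta$.
\item Singular $\mathbf M$ is handled by allowing $\beta$ with $1-\beta\|\mathbf k\|^2=0$, which gives projections.
\end{itemize}
Since the alphabet is finite, these factorizations are computed once, offline, for each $\mathbf M_\sigma$. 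All the resulting vectors become constants that the network only needs to select.

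Layers 1--3 will turn each input token $w_t$ into a sequence of $k$ consecutive positions, each carrying the parameters of one DPLR factor, in the right order. Since the input gives only one token per letter, the cleanest approach is probably \emph{not} to spread factors across positions. Instead we use a block-diagonal or tensor trick: the state dimension is enlarged to $kn$, and a single DPLR transition per token simulates the product of $k$ factors. If that is impossible, we fall back to the mechanism of \Cref{thm:dplr-3x3-main}. There, the early layers compute positional information (position mod $k$, and the letter at the block start, via a counter/modular layer as in \citet{peng2025rwkv}), so that each position in a block of $k$ tokens applies the factor indexed by its offset.

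The main obstacle is this mismatch between one letter per position and $k$ factors per letter. We would first try to reuse exactly the layer-1--3 machinery from the proof of \Cref{thm:dplr-3x3-main}, treating each WFA letter as if it were a block. If the alignment does not work, we would pad conceptually by interpreting the input through a homomorphism to blocks.

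Finally, layer 4 multiplies the factors, and a feedforward sublayer with the linear readout $\mathbf o$ outputs $\alpha^\top\mathbf P\beta=f_{\mathcal A}(w)$. The $\$$ token initializes the state to $\beta$ (or to $\mathbf I$) via $\mathbf b_t$.
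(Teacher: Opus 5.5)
\textbf{There is a genuine gap.} Your factorization step is sound in outline. Unit transvections from three symmetric steps $\mathbf I-\beta\mathbf k\mathbf k^\top$, plus coordinate scalings, are exactly the paper's DeltaNet primitives, and RWKV-7 contains that form. The gap is at the step you yourself flag as the main obstacle. The input supplies one position per letter, while each letter matrix needs $k>1$ DPLR factors, and neither of your proposed fixes resolves this.

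\emph{The block-diagonal/tensor trick fails.} A block-diagonal matrix with $k$ DPLR blocks is diagonal plus rank $k$, not a single DPLR transition. In fact no fixed linear encoding works. Suppose $\mathbf x\mapsto\mathbf x\mathbf B$ with full-row-rank $\mathbf B\in\Q^{n\times N}$ (columns $\mathbf b_j$) and the state is updated by right multiplication. Then you would need $\mathbf B\mathbf A_\sigma=\mathbf M_\sigma\mathbf B$ with $\mathbf A_\sigma=\mathbf D_\sigma-\mathbf u_\sigma\mathbf v_\sigma^\top$. Comparing $j$-th columns gives $\mathbf M_\sigma\mathbf b_j=d_{\sigma,j}\mathbf b_j-v_{\sigma,j}\,\mathbf B\mathbf u_\sigma$. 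So every $\mathbf M_\sigma$ would be diagonal-plus-rank-one in the single common basis formed by any $n$ independent columns of $\mathbf B$. For $n\ge 3$ and a large enough alphabet, a generic tuple of letter matrices violates this by a dimension count.

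\emph{The fallback also fails.} In \Cref{thm:dplr-3x3-main} each matrix arrives as $9$ tokens, and that is what provides $9$ positions for its $9$ overwrites. A WFA letter is a single token, and the network cannot insert positions; a ``homomorphism to blocks'' changes the input. Spending a block of $k$ positions on the factors of the letter at the block start simply discards the other $k-1$ letters. You also do not account for a lag that exists even in the $3\times 3$ construction. The factors applied during block $\ell$ belong to block $\ell-1$, because a block's matrix is unknown until the block has been read, so the lag must be compensated at readout.

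\textbf{The missing idea is to amortize over blocks of letters instead of expanding letters.} Let $m$ be the number of factors needed to apply an arbitrary $n\times n$ matrix to a state with scratch coordinates. This is $m=2n$ dot-product overwrites for RWKV-7 and $m=8n^2+5n+1$ steps $\mathbf I-\beta\mathbf k\mathbf k^\top$ for DeltaNet.
\begin{itemize}
\item The product of $m$ consecutive letter matrices is still one $n\times n$ matrix from a finite set, so it can be factored offline into $m$ DPLR factors.
\item During block $\ell$, layer 4 applies the factors of block $\ell-1$, one per position.
\item Layers 1--3 compute $t\bmod 2m$ and buffer the last $2m$ tokens, which always contain the whole previous block, and look up the current factor.
\item The lag is repaired by reading the first row $p_t$ of the layer-4 state with a router-supplied query, the completion vector $\hat\omega_t=\mathbf G_{\tau+1}\cdots\mathbf G_m\,(R_t\omega;\mathbf 0)$.
\end{itemize}
Here $\mathbf G_1,\dots,\mathbf G_m$ are the previous block's factors, $\tau$ is the offset of $t$ in its block, and $R_t$ is the product of the current partial block. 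The final vector $\omega$ (your $\beta$) is padded with zeros on the scratch coordinates. This gives $p_t\hat\omega_t=\alpha^\top\mathbf M_{w_1}\cdots\mathbf M_{w_t}\omega$. A fixed readout $\mathbf o$ of $\alpha^\top\mathbf P\beta$, as in your plan, cannot absorb such a lag. A minor point: the rank-one additive term cannot set the state to $\mathbf I$ at the $\$$ token. Instead, initialize a single row to $\alpha$ padded with scratch, as the paper does.
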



A proof sketch and details are deferred to \cref{sec:rwkv7-wfa} for RWKV-7 and to \cref{sec:deltanet-wfa} for DeltaNet.

\subsection{PD LRNNs are $\NC^1$-Complete}

PD LRNNs are another recently proposed LRNN variant that can succinctly represent the $\NC^1$-complete state tracking problem \citep{terzic2025structured}.
We analyze the expressive power, showing that, in contrast to DPLR LRNNs, PD LRNNs are constrained to $\NC^1$:

\begin{restatable}{theorem}{pdssm} \label{thm:pdssm-nc1}
    Let $M$ be a multi-layer PD LRNN over $\Q$.
    Then the language recognized by $M$ is in $\FO$-uniform $\NC^1$.
\end{restatable}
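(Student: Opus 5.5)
We prove \Cref{thm:pdssm-nc1} layer by layer. As in the proof of \Cref{thm:general-ub-pnc1}, $\FO$-uniform $\NC^1$ circuit families are closed under finite composition, so it suffices to show one thing: given $\mathbf x_1,\ldots,\mathbf x_n$ represented in binary, each PD head's outputs $\mathbf y_t=\mathbf q_t^\top \mathbf S_{t-1}$ can be computed in $\FO$-uniform $\NC^1$. Feedforward sublayers, layer-norm, and the computation of $\mathbf P_t,\mathbf D_t,\mathbf b_t,\mathbf q_t$ from $\mathbf x_t$ are already in $\FO$-uniform $\TC^0\subseteq\NC^1$ by \Cref{lem:closed-form-tc0}, so only the recurrence needs work.

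\textbf{Step 1: split the product into a combinatorial part and a numeric part.} Unrolling the recurrence gives $\mathbf S_t=\sum_{j\le t}\big(\prod_{k=j+1}^{t}\mathbf P_k\mathbf D_k\big)\mathbf b_j$. The matrix $\mathbf P_k$ is column one-hot and encodes a function $\pi_k:[d]\to[d]$. Therefore the product $\mathbf M_{j,t}=\mathbf P_t\mathbf D_t\cdots\mathbf P_{j+1}\mathbf D_{j+1}$ is also column one-hot up to scaling. Its column $c$ has a single nonzero entry, located at row $\pi_t\circ\cdots\circ\pi_{j+1}(c)$, and the value of that entry is the scalar product $\prod_{k=j+1}^{t}(\mathbf D_k)_{c_k c_k}$. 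Here $c_{j+1}=c$ and $c_{k+1}=\pi_k(c_k)$ is the trajectory of $c$. The combinatorial part is iterated composition of functions on the fixed finite set $[d]$. That is the word problem for the finite transformation monoid $[d]^{[d]}$, which is in $\FO$-uniform $\NC^1$ (by Barrington's theorem / \citealp{buss1987alogtime}, or simply by a balanced tree of constant-size compositions). Doing this for all pairs $(j,t)$ in parallel gives every trajectory index $c_k$ for every $(j,t,c)$ at polynomial size.

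\textbf{Step 2: iterated products and sums of rationals.} Once the trajectory is known, each entry of $\mathbf M_{j,t}$ is an iterated product of $O(n)$ poly-bit rationals, selected by the $c_k$ through a constant-size multiplexer. We take numerators and denominators separately, so this is iterated integer multiplication, which lies in $\FO$-uniform $\TC^0\subseteq\NC^1$ (Hesse--Allender--Barrington). Next, $\mathbf S_t$ is an iterated sum of $O(n)$ such terms times $\mathbf b_j$ entries. Bringing the terms to a common denominator again uses iterated multiplication, and iterated addition is in $\TC^0$. Finally, $\mathbf q_t^\top\mathbf S_{t-1}$ is a constant-size combination. The numbers stay polynomial-precision since the model is over $\Q$ with poly precision, and the positivity check at the end is a $\TC^0$ comparison.

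\textbf{Main obstacle.} The key point is that, unlike general or DPLR matrices, the PD structure keeps each product entry a \emph{single monomial} instead of a sum over exponentially many paths. This is what avoids $\GapNC^1$: we never need arithmetic circuits, only Boolean $\TC^0$ iterated multiplication. The delicate step is making this precise when $\pi_k$ is non-injective. Several columns can then merge into the same row, so the row-view of $\mathbf M_{j,t}$ may have multiple nonzeros; but the column view remains one-hot, and $\mathbf S_t$ only needs $\mathbf M\mathbf b$, which sums at most $d$ terms per row. The other delicate step is checking the uniformity ($\FO$-uniform $\TC^0$ iterated multiplication and the $\NC^1$ monoid word problem) and the precision bookkeeping. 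I expect the argument to go through by citing these known uniform results.
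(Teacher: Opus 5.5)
Your proposal is correct and takes essentially the same approach as the paper. Both split the PD product into a finite-monoid composition of the relaxed permutations $\pi_k$, computed in $\FO$-uniform $\NC^1$, and an iterated product of poly-bit rationals along the induced index trajectory, using the fact that each entry of a PD product is a single monomial. The paper packages this as the closed form $\tilde{\mathbf P}_n\tilde{\mathbf D}_n$ with $\tilde{\mathbf D}_n=\prod_j\tilde\pi_{n,j}(\mathbf D_j)$, proved by induction from the swap identity $\tilde{\mathbf D}_{n-1}\mathbf P_n=\mathbf P_n\,\pi_n(\tilde{\mathbf D}_{n-1})$. You instead read the same monomial off column by column, and you also spell out the sum over $j$ and the common-denominator bookkeeping that the paper leaves implicit.
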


\begin{proof}[Proof Sketch]
    The theorem reduces to showing that the product of a sequence of matrices $\mathbf P_1 \mathbf D_1, \ldots, \mathbf P_n \mathbf D_n$ can be computed in $\FO$-uniform $\NC^1$.
    \citet{terzic2025structured} observe PD matrices are closed under multiplication, so this product also has PD form.
    Let $\prod_{i=1}^n \mathbf P_i \mathbf D_i = \tilde {\mathbf P}_n \tilde {\mathbf D}_n$.
    We derive a closed form for $\mathbf {\tilde P}_i$ and $\tilde {\mathbf D}_i$, and show these can be computed in uniform $\NC^1$; full proof in \Cref{app:lrnn-finegrained}.
\end{proof}

\Cref{thm:pdssm-nc1} establishes a conditional expressivity gap between DPLR and PD LRNNs: while DPLR architectures like DeltaNet and RWKV-7 can express some $\PNC^1$-complete problems, PD LRNNs are restricted to $\NC^1$.
Moreover, since it has previously been shown that PD LRNNs can recognize regular languages \citep{terzic2025structured}, we see that they can solve $\NC^1$-complete problems as well.

In the one-layer case, we can show that deterministic WFAs can be simulated by PD LRNNS in the following sense:

\begin{restatable}{theorem}{pdssmDwfa} \label{thm:pdssm-dwfa}
    Any language recognized by a deterministic WFA with a zero threshold over $\Q$ can also be recognized by a one-layer PD LRNN over $\Q$.
\end{restatable}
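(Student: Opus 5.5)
A deterministic WFA over $\Q$ is given by a state set $Q=\{1,\dots,m\}$, an initial state $q_0$ with initial weight $\lambda\in\Q$, a transition function $\delta : Q\times\Sigma\to Q$ with transition weights $\mu(q,\sigma)\in\Q$, and final weights $\rho(q)\in\Q$. The computed value is
\[
f(w)=\lambda\cdot\prod_{t=1}^n \mu(q_{t-1},\sigma_t)\cdot\rho(q_n),
\]
where $q_t=\delta(q_{t-1},\sigma_t)$, and $w$ is accepted iff $f(w)>0$. The plan is to store in the state of a single PD LRNN head a vector that is zero except in coordinate $q_t$, where it carries the accumulated weight. The initial weight is injected on the beginning-of-sequence symbol $\$$, and the final weights are read out linearly. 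We use the matrix state of \Cref{def:linear-rnn} but only one column of it, so effectively $\mathbf S_t\in\Q^{m}$ with $d\ge m$.

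The transition for symbol $\sigma$ is $\mathbf A(\sigma)=\mathbf P_\sigma\mathbf D_\sigma$.
\begin{itemize}
\item $\mathbf P_\sigma$ is the column one-hot matrix of the map $q\mapsto\delta(q,\sigma)$, i.e.\ $(\mathbf P_\sigma)_{\delta(q,\sigma),q}=1$. It need not be a permutation, which the PD definition allows.
\item $\mathbf D_\sigma=\diag(\mu(1,\sigma),\dots,\mu(m,\sigma))$.
\end{itemize}
Then $\mathbf A(\sigma)(c\,\e_q)=c\,\mu(q,\sigma)\,\e_{\delta(q,\sigma)}$, which is exactly one step of the deterministic run.

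For $\$$ we set $\mathbf A(\$)=\0$ (taking $\mathbf P=\mathbf I$, $\mathbf D=\0$) and $\mathbf b(\$)=\lambda\,\e_{q_0}$. For real symbols we set $\mathbf b(\sigma)=\0$. Since inputs are one-hot token embeddings, all of these quantities are linear functions of $\mathbf x_t$, hence have arithmetic closed forms. An induction on $t$ then gives
\[
\mathbf S_t=\lambda\prod_{i\le t}\mu(q_{i-1},\sigma_i)\,\e_{q_t},
\]
and therefore $\boldsymbol\rho^\top\mathbf S_n=f(w)$, where $\boldsymbol\rho=(\rho(1),\dots,\rho(m))$.

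The readout needs care, and it is the step I expect to be the main obstacle. The head output is defined as $\mathbf q_t^\top\mathbf S_{t-1}$, which is lagged by one step, and the sublayer adds the residual and applies layer norm first. I will handle the lag by feeding the current symbol through the query. Take $\mathbf q_t$ to be the vector of length $m$ with entries $\mu(q,\sigma_t)\rho(\delta(q,\sigma_t))$; this is linear in the one-hot $\mathbf x_t$. Then
\[
\mathbf q_n^\top\mathbf S_{n-1}=f(w).
\]
If the convention is instead $\mathbf q_t^\top\mathbf S_t$, we simply take $\mathbf q=\boldsymbol\rho$.

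Layer norm acts only on the input of the head. Since all quantities above depend only on the identity of the token, we can recover it from the normalized embedding by choosing distinct embeddings, so the normalization causes no problem. For the output, choose the projection $\mathbf O$ to write $f(w)$ into a fresh coordinate on which the embeddings are zero, and let $\mathbf o$ select that coordinate. Then $\mathbf o^\top\mathbf y_n=f(w)$, and acceptance iff $\mathbf o^\top\mathbf y_n>0$ matches the WFA's zero threshold.

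Finally, the network has polynomial size, so it is an LRNN over $\Q$ as required. No precision bound is needed, because the statement is over $\Q$.
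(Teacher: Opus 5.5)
Your construction matches the paper's: $\mathbf P_\sigma$ encodes $q\mapsto\delta(q,\sigma)$ as a column one-hot matrix, $\mathbf D_\sigma$ holds the transition weights indexed by source state, and $\mathbf b$ injects the weighted initial state on $\$$. Your treatment of the final weights and of the lagged readout $\mathbf q_t^\top\mathbf S_{t-1}$ (folding the last transition and $\rho$ into the query) adds detail the paper's sketch omits, with two small loose ends. First, the linear query needs the normalized embeddings to be linearly independent, not merely distinct; one-hot embeddings with a spare zero coordinate suffice. Second, for empty $w$ the head only sees $\mathbf S_0$, so you need something like giving the $\$$ embedding the value $\lambda\rho(q_0)$ in the output coordinate.
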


Proof in \Cref{sec:pdssm-dwfa}.
From \Cref{thm:pdssm-nc1,thm:pdssm-dwfa}, we have:

\begin{corollary}
    Any language defined by a deterministic WFA with a zero threshold can be simulated in $\NC^1$.
    Moreover, there exists an $\NC^1$-complete language that can be recognized by a deterministic WFA with a zero threshold.
\end{corollary}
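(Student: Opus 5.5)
The corollary has two parts, and I would prove each one separately.

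\emph{Upper bound.} Take a deterministic WFA with zero threshold. By \Cref{thm:pdssm-dwfa}, a one-layer PD LRNN over $\Q$ recognizes the same language. By \Cref{thm:pdssm-nc1}, the language of any multi-layer PD LRNN over $\Q$ lies in $\FO$-uniform $\NC^1$; a one-layer network is a special case. Composing these two results gives the first claim.

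The only thing to check here is bookkeeping about conventions. \Cref{thm:pdssm-dwfa} must produce a network in exactly the form \Cref{thm:pdssm-nc1} covers: a PD LRNN whose acceptance is a positivity check on a linear readout. Its proof must also not rely on any gadget outside $\Q$ with closed-form parameters. Both theorems are stated over $\Q$ with the paper's recognition convention, so I expect this step to go through directly.

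\emph{Hardness.} I would use the word problem for $S_5$, which is $\NC^1$-complete under $\FO$ reductions \citep{buss1987alogtime}. Equivalently, one can take any regular language whose syntactic monoid is non-solvable, for instance the language of words over $S_5$ whose product is the identity.

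Let $\mathcal D$ be the minimal complete DFA for this language. I turn $\mathcal D$ into a deterministic WFA by giving every transition weight $1$. The initial weight is $1$ on the start state. The final weight is $1$ on accepting states and $0$ elsewhere. Because $\mathcal D$ is deterministic and complete, each word $w$ has exactly one run, of weight $1$. Hence $f(w) = 1$ when $w$ is accepted and $f(w) = 0$ otherwise, so the zero-threshold condition $f(w) > 0$ recognizes precisely $L(\mathcal D)$.

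The one point to verify is that this object counts as a deterministic WFA under the paper's \Cref{def:wfa}. It should: it has a unique initial state and at most one outgoing transition per state and symbol. If the definition insists on a particular form of final weights, I would encode acceptance through the final-weight vector, or add a dedicated sink state, rather than change the threshold.

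Finally, the upper bound already places every zero-threshold DWFA language in $\NC^1$, including this one. Together with $\NC^1$-hardness, the language is $\NC^1$-complete, which proves the second claim. I expect no real obstacle in this corollary; the substantive work sits in \Cref{thm:pdssm-nc1,thm:pdssm-dwfa}.
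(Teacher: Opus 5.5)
Your proof is correct and follows the paper's route. The upper bound is exactly the composition of \Cref{thm:pdssm-dwfa} with \Cref{thm:pdssm-nc1}, and the hardness part is the observation, left implicit in the paper, that a DFA for an $\NC^1$-complete regular language such as the $S_5$ word problem is a deterministic WFA with $0/1$ weights and zero threshold. This example also settles your bookkeeping worry: its transition matrices are permutation matrices, so it counts as deterministic whether the paper's ``at most one non-zero per column'' condition is read in the row-vector or the column-vector convention.
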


\section{Experiments}
\label{sec:exp}

We have provided a concise theoretical characterization of the expressive strengths and limitations of (non)linear RNNs. We now empirically evaluate the learnable capacity of recurrent models on the Deterministic Graph Connectivity and Iterated $3\times 3$ Matrix Multiplication. Specifically, we compare nonlinear RNNs with linear RNN models (DeltaNet and RWKV-7), the linear RNN variant Mamba, and include Transformers as an additional baseline.

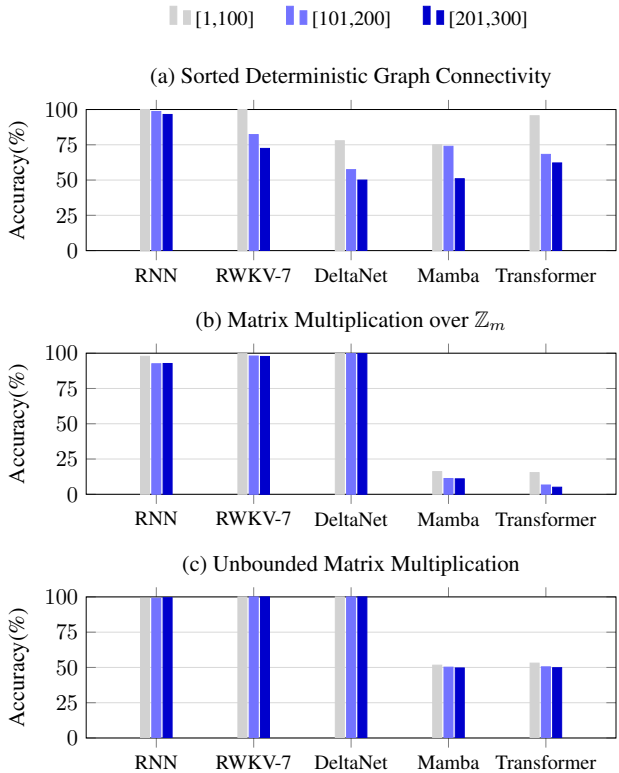
\begin{figure}[t]
\centering
\resizebox{\linewidth}{!}{%
\begin{tikzpicture}
\begin{groupplot}[
    group style={group size=1 by 3, vertical sep=1.6cm},
    ybar,
    width=\linewidth,
    height=2.2cm,
    scale only axis,
    ymin=0, ymax=100,
    ytick={0,25,50,75,100},
    ymajorgrids=true,
    grid style={gray!30},
    axis line style={black},
    enlarge x limits=0.18,
    symbolic x coords={RNN,RWKV7,DeltaNet,Mamba,Transformer},
    xtick=data,
    xticklabels={{RNN},{RWKV-7},{DeltaNet},{Mamba},{Transformer}},
    xticklabel style={font=\small, align=center},
    legend style={
        font=\small,
        at={(0.5,1.05)},
        anchor=south,
        legend columns=3,
        draw=none,
        /tikz/every even column/.append style={column sep=1.2em}
    },
]

\nextgroupplot[
    title={(a) Sorted Deterministic Graph Connectivity},
    ylabel={Accuracy(\%)},
    legend to name=sharedlegend,
]

\addplot[ybar, bar width=4pt, bar shift=-5pt, fill=gray!35, draw=gray!35] coordinates {
    (RNN,100.0) (RWKV7,100.0) (DeltaNet,77.87) (Mamba,74.99) (Transformer,95.60)
};
\addlegendentry{[1,100]}

\addplot[ybar, bar width=4pt, bar shift=0pt, fill=blue!55, draw=blue!55] coordinates {
    (RNN,98.6) (RWKV7,82.23) (DeltaNet,57.39) (Mamba,73.95) (Transformer,68.20)
};
\addlegendentry{[101,200]}

\addplot[ybar, bar width=4pt, bar shift=+5pt, fill=blue!80!black, draw=blue!80!black] coordinates {
    (RNN,96.5) (RWKV7,72.44) (DeltaNet,50.00) (Mamba,50.95) (Transformer,62.15)
};
\addlegendentry{[201,300]}

\nextgroupplot[
    title={(b) Matrix Multiplication over $\mathbb{Z}_m$},
    ylabel={Accuracy(\%)},
]

\addplot[ybar, bar width=4pt, bar shift=-5pt, fill=gray!35, draw=gray!35, forget plot] coordinates {
    (RNN,97.82) (RWKV7,99.71) (DeltaNet,99.98) (Mamba,16.08) (Transformer,15.38)
};
\addplot[ybar, bar width=4pt, bar shift=0pt, fill=blue!55, draw=blue!55, forget plot] coordinates {
    (RNN,92.54) (RWKV7,98.00) (DeltaNet,99.96) (Mamba,11.20) (Transformer,6.59)
};
\addplot[ybar, bar width=4pt, bar shift=+5pt, fill=blue!80!black, draw=blue!80!black, forget plot] coordinates {
    (RNN,92.72) (RWKV7,97.71) (DeltaNet,99.92) (Mamba,10.97) (Transformer,5.03)
};

\nextgroupplot[
    title={(c) Unbounded Matrix Multiplication},
    ylabel={Accuracy(\%)},
]

\addplot[ybar, bar width=4pt, bar shift=-5pt, fill=gray!35, draw=gray!35, forget plot] coordinates {
    (RNN,99.01) (RWKV7,100.00) (DeltaNet,100.00) (Mamba,51.62) (Transformer,53.07)
};
\addplot[ybar, bar width=4pt, bar shift=0pt, fill=blue!55, draw=blue!55, forget plot] coordinates {
    (RNN,99.09) (RWKV7,100.00) (DeltaNet,100.00) (Mamba,50.20) (Transformer,50.43)
};
\addplot[ybar, bar width=4pt, bar shift=+5pt, fill=blue!80!black, draw=blue!80!black, forget plot] coordinates {
    (RNN,99.46) (RWKV7,100.00) (DeltaNet,100.00) (Mamba,49.66) (Transformer,49.83)
};

\end{groupplot}

\node[anchor=south] at ($(group c1r1.north) + (0,1.0cm)$) {%
  \pgfplotslegendfromname{sharedlegend}%
};

\end{tikzpicture}
}
\caption{Accuracy across size/length ranges. (a) Deterministic graph connectivity over graph-size ranges. (b) Iterated \(3\times 3\) matrix multiplication over \(\mathbb{Z}_m\). (c) Iterated \(3\times 3\) matrix multiplication over \(\mathbb{Z}\). Models are evaluated on in-distribution ranges \([1,100]\) and out-of-distribution ranges \([101,200]\) and \([201,300]\).}
\label{fig:dgc_imm_combined}
\end{figure}

\paragraph{Experimental Setup.}
To better evaluate learning capacity, we construct balanced datasets with length generalization. The train set contains 70K samples uniformly drawn from $N\in[1,100]$ and the validation set contains 20K samples drawn from the same range. We evaluated on three test splits, each containing 10K samples, drawn from $[1,100]$, $[101,200]$, and $[201,300]$. Here, $N$ denotes the number of nodes in the graph task and the size of the matrix in the matrix multiplication task.
 
We used the nonlinear RNN and transformer \citep{10.5555/3737916.3739220}, and linear models from \citet{yang2024fla}. All models are trained using the BCEWithLogitsLoss function. All models are trained using AdamW with learning rate $1e^{-4}$ and weight decay $10^{-4}$, batch size 64, gradient clipping at norm 1.0, and early stopping either when the accuracy of the in-distribution data reaches $100\%$ for three consecutive evaluations or when a maximum of 60K training steps are reached. Full benchmark details and training configurations are provided in the Appendix.

\paragraph{Sorted Deterministic Graph Connectivity.}
As shown in \Cref{fig:dgc_imm_combined} (a), all models achieve high in-distribution accuracy, indicating that sorted deterministic graph connectivity is learnable under matched training and test lengths. However, performance diverges markedly under length extrapolation. The nonlinear RNN maintains near-perfect accuracy across all bins, demonstrating strong length generalization. In contrast, Transformer, RWKV-7, Mamba, and DeltaNet exhibit substantial degradation as graph size increases, despite competitive performance. These results highlight the importance of recurrent inductive bias for length-generalizable algorithmic reasoning.
 
\paragraph{Iterated Matrix Multiplication.}
RWKV-7, the nonlinear RNN, and DeltaNet achieve near-perfect accuracy on the in-distribution data and degrade only moderately on the longer out-of-distribution sequences in (b). In contrast, Transformer and Mamba perform poorly across all lengths, suggesting limited capacity to capture the underlying algebraic structure even within the training regime. Architectural differences are more pronounced in the non-modular (c). RWKV-7, the nonlinear RNN, and DeltaNet retain perfect or near-perfect accuracy across both in-distribution and out-of-distribution lengths, despite unbounded integer growth. Mamba improves with increasing sequence length but remains substantially below the top-performing models, while the Transformer degrades sharply beyond training lengths. Overall, these results indicate that architectures with explicit recurrence or state-space structure are substantially better suited for iterated linear-algebraic computations than standard attention-based Transformers.

\section{Conclusion}

We have given a comprehensive characterization of the expressive power and parallelizability of different RNNs and LRNNs.
First, we reveal fundamental expressivity gaps between linear and nonlinear RNNs, highlighting the deterministic graph connectivity problem as a simple separator.
This circuit complexity perspective refines an older line of work on rational recurrences \citep{peng2018rational,merrill-etal-2020-formal}, formalizing a fundamental tradeoff between the expressivity of the state update and the parallelizability of an RNN.
Second, we compare the expressivity of different types of linear RNN.
Whereas both PD and DPLR parameterizations have been shown to be capable of regular language recognition in prior work, our theory reveals finegrained differences between these variants, with PD being $\NC^1$-complete while DPLR LRNNs are $\PNC^1$-complete.
Experiments agree with our theoretical predictions about the expressivity differences between nonlinear RNNs and LRNNs and different LRNN variants.

Hybrid transformer--LRNN models have recently seen wide adoption \citep[e.g.,][]{waleffe2024empiricalstudy,kimi2025kimilinear,qwenteam2026qwen35omnitechnicalreport} informed by emerging understanding of LRNN expressivity~\citep{merrill2024illusion,grazzi2025unlocking,peng2025rwkv,merrill2026olmohybrid}.
Thus, we hope our theory can guide the design of LRNN and hybrid architectures that effectively balance expressivity and parallelism.
Our results also suggest recent methods to parallelize \emph{nonlinear} RNNs~\citep{danieli2025pararnn,gonzalez2025predictability} are one pathway to more expressive hybrid models, though nonlinear RNNs must fundamentally pay a $\Theta(\log n)$ cost in parallel runtime under standard conjectures (cf.~\Cref{cor:nonlinear-logprecision-less-parallelizable}).
It is an open question whether the additional expressivity of nonlinear RNNs justifies this overhead in practice.

\section*{Impact Statement}
This paper presents foundational work whose goal is to advance the field of machine learning. There are many potential societal consequences of our work, none of which we feel must be specifically highlighted here.

\section*{Acknowledgments}
WM thanks Mehryar Mohri for insightful discussions related to RNNs and WFAs as well as Brian Kitano and Aleksandar Terzić for finding issues in early drafts. 

This material is based in part upon work supported by the European Union
\includegraphics[width=0.75cm]{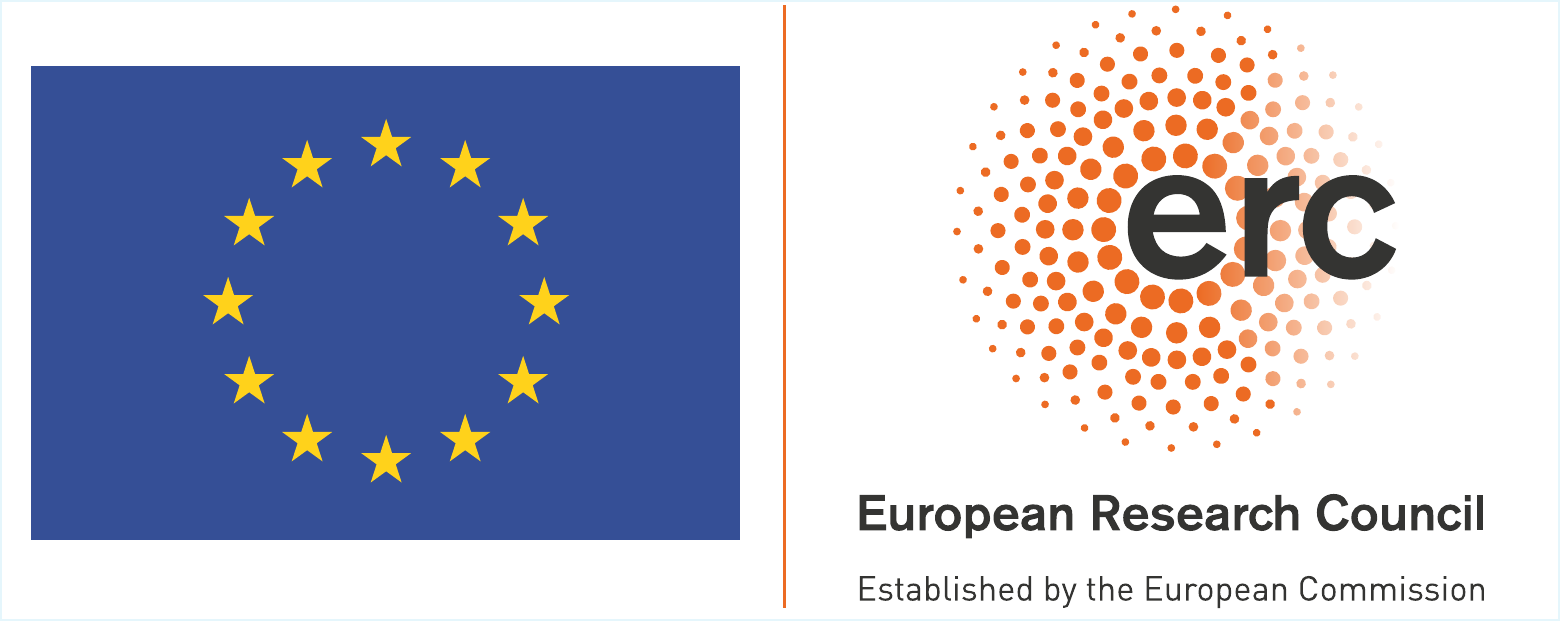}
(ERC, LASD, 101089343, \url{https://doi.org/10.3030/101089343}). 

\bibliography{references}
\bibliographystyle{icml2026}

\newpage
\onecolumn
\appendix

\crefalias{section}{appendix}
\crefalias{subsection}{appendix}
\crefalias{subsubsection}{appendix}

\section{Nonlinear RNNs} \label{appendix:proofs}

\rnnUpperBound*
\begin{proof}
    Given a log-precision RNN of depth $d$, we will simulate it left-to-right using a Turing machine $T$. Let $s' = \max\{s, \log n\}$.
    At each step $t$, for each layer, $T$ stores the hidden state $h_t$ at that layer using $s$ bits.
    Given $h_{t-1}$ and input $x_t$, by \cref{lem:closed-form-tc0}, $h_t$ can be computed in $\FO$-uniform $\TC^0$, which is contained in $\L$. Thus, $T$ can compute $h_t$ using $O(\max\{s, \log n\})$ space\footnote{Here $\log n$ space is for intermediate computation of the $\L$ machine, and $s$ space is for its output.} and $\poly(n)$ time. This space can be reused for computing $h_{t+1}$, $h_{t+2}$, etc.
    Overall, the Turing machine uses $d \cdot O(\max\{s, \log n\})$ bits of storage space and runs in $\poly(n)$ time, as desired.
\end{proof}

\subsection{Polynomial Precision}
\label{app:subsec:nonlinear-poly-precision}

\turingCompleteness*
\begin{proof}
    The idea is the same as the proof of \Cref{lem:simulate-counter-machine}, except that we will simulate an automaton with stacks of binary values rather than counters.
    The stacks allow update operations $\push_0, \push_1, \pop,$ and $\noop$.
    Additionally, it has a $\head$, which is the top element.
    We view each stack $s$ as a positive number $s \in [0, 2]$ that starts off as $s = 1$ and is modified using the following update operations:
    \begin{align*}
        \head(s) &= \mathbbm{1}[s \geq 1] \\
        \push_v(s) &= v + \frac{1}{2} s \\
        \pop(s) &=
        \begin{cases}
            2(s - \head(s)) & \textrm{if} \; s \neq 1 \\
            \noop(s)        &\textrm{otherwise}
        \end{cases} \\
        \noop(s) &= s .
    \end{align*}
    Moreover, we will let $s_t \in [0, 2]^k$ denote the vector of stacks at time $t$.
    The update operation $u_t \in \{\head, \push, \pop, \noop \}$ is determined via a finite lookup table $u$ as a function of $q_t$, $\sigma_t$, and $\head(s_t)$. The state transition $\delta$ depends on the same inputs.

    The implementation of these operations with a ReLU network follows the same idea as the proof of \Cref{lem:simulate-counter-machine}.
    We first construct a ReLU network to compute $\head(s)$ from $s$ as a threshold with $\epsilon = 1/3$ \citep[Section 4.7]{yang2026transformerCookbook}.
    Next, we compute $q_t = \delta(q_{t-1}, \sigma_t, \head(s_t))$ and $u_t = u(q_{t-1}, \sigma_t, \head(s_{t-1}))$ as finite lookup tables \citep[Section 4.8]{yang2026transformerCookbook}.
    At this point, we have the correct next state, and it just remains to be shown that we can update the stacks.
    Let $\pop_1(s) = 2s - 2$ and $\pop_0(s) = 2 s$.
    We compute the possible updated stacks $\push_v(s_{t-1}), \pop_v(s_{t-1}),$ and $\noop(s_{t-1})$ with parallel ReLU networks, using the fact that each is a linear function.
    For each stack $[s_t]_i$, we then compute its new value from these $u(s_{t-1})$'s using a selector gadget based on $[u_t]_i$, $\head(s_{t-1})_i$, and whether $s = 1$ \citep[Section 4.9]{yang2026transformerCookbook}.
    Thus, we can correctly compute $q_t$ and $s_t$ at each step. \qedhere

\end{proof}

\pComplete*
\begin{proof}
    We leverage the same idea as \citet[Section 4.8]{flann-dagstuhl-2026}.
    Let $L$ be any $\P$-complete language under $\FO$ reductions.
    By construction $\L$ can be recognized by a poly-time Turing machine and thus also by a multi-stack machine running in time $O(n^c)$ for some $c$.
    Define a new padded language $L' = \{ w \square^{\abs{w}^c} \mid w \in L \}$.
    Then $L'$ is $\P$-complete and can be solved by a multi-stack machine, and thus also by an MLP RNN by \Cref{thm:rnn-turing-completeness}. \qedhere

    \OMIT{
    To show that even a ReLU RNN is $\P$-complete, we can define $L''$ by adding a constant number of additional padding tokens between every token in $L'$.
    This transformation preserves $\P$-completeness, and enables us to recognize $L''$ with a single-layer ReLU RNN. \AL{verify this?}
    }
\end{proof}

\subsection{Logarithmic Precision}
\label{app:subsec:nonlinear-log-precision}

\detConn*
\begin{proof}
    We describe a simple $\FO$ reduction from deterministic (but not necessarily sorted) graph connectivity.
    The latter is well-known to be $\L$-complete \citep[cf.][]{jones1975-logspace-reductions,cookmckenzie1987-L-complete,Sipser-book},
    and is in fact also complete under $\FO$-reductions just like the general graph connectivity problem~\citep{immerman-1998-descriptive}.
    We are given a deterministic graph $G$
    with source $s = 1$, target $t = n$, and edges
    \[
        (i_1,j_1),\ldots,(i_m,j_m)
    \]
    with $i_k,j_k \in \{1,\ldots,n\}$, for each $k = 1,\ldots,m$, and
    $i_1 = s$ and $j_m = t = n$. We create a new deterministic graph $G'$ with 
    edges
    \[
        (i_1+hn,j_1+(h+1)n),\ldots,(i_m+hn,j_m+(h+1)n)
    \]
    for each $h=0,\ldots,m-1$. We also add an additional node $v_F = nm + 1$ 
    and edges
    $(n+hn,v_F)$. Note that $n$ has no outgoing edge in $G$, so $G'$ remains
    deterministic. That is,
    $v_0,\ldots,v_s$ is a path in $G$ iff $v_0,v_1+n,v_2+2n,\ldots,v_s+sn$
    is a path in $G'$. Thus, $t$ is reachable from $s$ in $G$ iff
    $v_F$ is reachable from $s$ in $G'$.

    The above reduction can easily be seen to be implementable by a
    logspace-uniform $\AC^0$ reduction. The values $hn$ (for each $h =
    1,\ldots,m$) can be precomputed by a logspace machine since this 
    depends only on the input size (more precisely, one can enumerate enough of
    $hn$ for up to $m = O(n)$). The $\AC^0$ circuit simply needs to perform
    simple additions.
\OMIT{
    The sorted version also remains complete: we can construct an $\FO$ reduction from the unsorted version by ``repetition'', i.e., creating $n$ copies of the edges with some potential relabeling \WM{Anthony had a reference for this?}.
    \AL{It's not precisely this statement. I will write this up}
    \AS{I suppose this is ``mostly'' known, we just need recap an argument (which we should cite) and add some more detail, right? Let's move the proof to the appendix -- not a central contribution.}
}
\end{proof}

\lComplete*

Our proof of this theorem will leverage the computational model of counter machines \citep[CMs;][]{fischer1968counter,weiss2018practicalcomputationalpowerfinite,merrill2021linguisticcapacityrealtimecounter}.
Also called counter automata, CMs are a generalization of finite automata augmented with a fixed number of integer-valued registers.
CMs been heavily used to analyze the expressive power of RNNs \citep{weiss2018practicalcomputationalpowerfinite,merrill-2019-sequential}.
\Cref{thm:counter-det-conn} shows a counter machine can solve sorted deterministic graph connectivity.
Moreover, \Cref{lem:simulate-counter-machine} shows a log-precision RNN with a ReLU MLP gating function can simulate a counter machine.
Thus, we conclude that a log-precision RNN with ReLU MLP gating can also solve sorted deterministic graph connectivity.

\begin{lemma} \label{thm:counter-det-conn}
Sorted deterministic graph connectivity can be solved by counter machine.
\end{lemma}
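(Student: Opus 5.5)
We will build an explicit real-time counter machine that reads the unary-encoded input left to right in one pass. It keeps a constant number of counters, each holding an integer at most polynomial in the input length. Because node indices appear in unary, the machine can compare a counter with the index being read by decrementing a scratch copy once per symbol of the index, then testing for zero at the delimiter.

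The invariant is the following. At the start of each edge block $(i_k,j_k)$, a counter $c$ holds the current node $u$ on the unique path from $s$. This is the largest node reached so far along the walk from $s$ through the edges already processed. Initially, the machine reads $s$ and sets $c=s$.

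When reading an edge $(i_k,j_k)$, the machine works as follows.
\begin{compactitem}
    \item It first reads $i_k$ and checks whether $i_k=u$. To do this, it keeps a copy $d$ of $c$ made beforehand, decrements $d$ once per unary symbol of $i_k$, and tests whether $d=0$ exactly at the delimiter.
    \item Making the copy needs some care, since copying a counter destroys it in a plain counter machine. We will maintain two counters $c,c'$ holding the same value. While reading $i_k$, the machine decrements $c'$ only. At the end of the edge it restores $c'$ to equal $c$: if the edge was taken, both are reset to $j_k$; if not, $c'$ is re-incremented. To re-increment $c'$ by $i_k$, it uses a third counter $e$ that counts the length of $i_k$ as it is read.
    \item If $i_k=u$, the machine zeroes $c,c'$ and increments both once per unary symbol of $j_k$, so that $u$ becomes $j_k$.
    \item If $i_k\neq u$, it leaves $u$ unchanged. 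It may also add $e$ back into $c'$, either while reading $j_k$ using a fourth counter or by decrementing $e$ under $\varepsilon$-free steps. Since we want real-time behaviour, we instead store $i_k$'s length in $e$ and restore $c'$ during the $j_k$ block one unit per symbol. This restoration may not finish if $j_k<i_k$, but with the topological order $j_k$ need not exceed $i_k$. To avoid this issue, we allow constantly many counter operations per symbol, or we compare by the difference $c-e$ directly: keep only $c$, increment $e$ along $i_k$, and test $c-e=0$. A counter machine in the sense of \citet{merrill2021linguisticcapacityrealtimecounter} allows such zero tests on linear combinations, or we simulate them with the pair of counters.
\end{compactitem}

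Correctness rests on sortedness. The walk from $s$ follows a unique outgoing edge at each node. Suppose the walk is at $u$ and the edge out of $u$ is $(i_k,j_k)$ with $i_k=u$. Because $i_1<\dots<i_n$, the edge out of $u$ appears after every edge from a smaller source. So if the successor satisfies $j_k>i_k$, which is guaranteed in the instances produced by the reduction in \Cref{prop:det-conn-l-complete}, the edge out of $j_k$ comes later in the stream. The single pass therefore never misses the next step. On termination, the machine reads $t$ and accepts iff $c=t$, compared by the same decrement-and-zero-test trick. It also accepts if the path hit $t$ earlier; we handle this with a finite-state flag that is set whenever $u$ is updated to a value equal to $t$. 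Since $t$ is only read at the end, this would require knowing $t$ in advance. We instead use the fact that in the reduced instances $t=v_F$ is a sink, so a path that reaches $t$ stays at $t$.

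The main obstacle is making the equality test $i_k\stackrel{?}{=}u$ non-destructive within the counter machine's restricted update operations. We resolve it by storing the read length in an auxiliary counter and testing the difference.
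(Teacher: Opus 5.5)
Your plan matches the paper's. You hold the current node of the walk from $s$ in a counter, compare it with each unary $i_k$ by counting down and zero-testing at the separator, and overwrite it with $j_k$ on a match. You then use $i_1<\cdots<i_n$ together with $j_k\ge i_k$ so that one left-to-right pass never misses the next edge.

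The gap is exactly at the step you call the main obstacle. The mechanism you finally settle on, a zero test of $c-e$, is not an operation of the counter machines used here. As in the proof of \Cref{lem:simulate-counter-machine}, the machine only zero-tests individual counters and updates each by $\times 0,+0,+1,-1$. So appealing to tests on linear combinations leaves the comparison unimplemented, and so does ``constantly many operations per symbol'', which would not help anyway if $j_k<i_k$ were possible. Your reason for abandoning the restoration approach is also backwards. A topological numbering gives $j_k\ge i_k$ for every edge; this is the assumption the paper's proof makes, and your own correctness paragraph already uses $j_k>i_k$. That inequality is precisely what guarantees the restoration of $c'$ completes inside the $j_k$-block.

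The repair is the paper's three-counter construction, and it does not need your second copy of the node counter.
\begin{itemize}
\item While reading $i_k$, decrement the node counter $c$ and increment $e$. At the separator this gives $c=u-i_k$ and $e=i_k$; now zero-test $c$.
\item If $c=0$, increment $c$ once per symbol of $j_k$, so it ends at $j_k$, and reset $e$ by $\times 0$.
\item If $c\neq 0$, on each symbol of $j_k$ do $c\gets c+1$ and $e\gets e-1$ as long as $e\neq 0$ (a single-counter zero test), and nothing once $e=0$. Since $j_k\ge i_k$, the block ends with $e=0$ and $c=u$.
\end{itemize}

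Your treatment of acceptance is fine, and it is more explicit than the paper's. Accepting iff the walk's endpoint equals $t$ captures reachability only when $t$ cannot be left. That holds for the reduced instances, where $t=v_F$ is a sink. The paper's machine, which accepts iff $s_n=t$, implicitly relies on the same property.
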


\begin{proof}
Fix a unary encoding of an instance
\[
(s,(i_1,j_1),\dots,(i_n,j_n),t)
\]
with separators between $s$, each $i_m$, each $j_m$, and $t$.
We assume that vertices are numbered in a topological order and encoded in unary by their numbers, so for every edge $(i_m,j_m)$ we have $j_m \ge i_m$.
The machine uses three counters $S,I,T$ and $O(1)$ states to maintain the invariant that, after $k$ edges, $S$ encodes the furthest state reachable from $s$.

The first step is to process the initial block $0^{s}$.
The machine starts in a state $q_s$ and increments $S$ on each symbol (leaving $I,T$ at $0$), so after this block $S=s$.

We next enter a state $q_{\mathsf{edge}}$ and process each edge $(i_m,j_m)$ in turn, maintaining the invariant that after $m$ edges we have $S=s_m$ (the node reached from $s$ after the first $m$ edges) and $I=0$.
For edge $(i_m,j_m)$, in a state $q_i$ the machine reads the unary block $0^{i_m}$ and on each symbol performs
\[
S \gets S-1,\qquad I \gets I+1.
\]
At the following separator it tests whether $S=0$ using the zero-mask.
If $S=0$ (i.e.\ $s_{m-1}=i_m$), it moves to a state $q_j^=$ and on each symbol of the $j_m$-block simply does $S\gets S+1$ (leaving $I$ unchanged); since $S$ starts at $0$, after $0^{j_m}$ we have $S=j_m$, and at the next separator it resets $I\gets 0$ and returns to $q_{\mathsf{edge}}$, so $s_m=j_m$.
If instead $S\neq 0$, the machine moves to a state $q_j^{\neq}$.
On each symbol of the $j_m$-block it now performs
\[
S\gets S+1,\ I\gets I-1 \quad\text{if } I\neq 0,\qquad
\text{and does nothing if } I=0,
\]
where the choice again uses only the zero-mask bit of $I$.
Initially $I=i_m$, so after $\min\{i_m,j_m\}$ symbols we have added that many units back to $S$ and subtracted them from $I$.
Because $j_m\ge i_m$, by the end of the $j_m$-block we have $I=0$ and
\[
S=(s_{m-1}-i_m)+i_m = s_{m-1},
\]
and at the next separator we return to $q_{\mathsf{edge}}$ with $S=s_m=s_{m-1}$ and $I=0$.
Thus the invariant is preserved across every edge.

After the last edge the machine enters a state $q_t$ with $S=s_n$ and $I=0$.
On the target block $0^{t}$ it updates
\[
S \gets S-1,\qquad T \gets T+1
\]
on each symbol, so at the end $S=s_n-t$.
On the end-of-input marker it moves to an accepting state iff the zero-mask bit of $S$ is $0$, i.e.\ iff $S=0$, which is equivalent to $s_n=t$.
\end{proof}

\begin{lemma} \label{lem:simulate-counter-machine}
    Let $M$ be a real-time counter machine. There exists a one-layer MLP RNN with log precision that recognizes the same language as $M$.
\end{lemma}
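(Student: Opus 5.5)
We construct the RNN so that its recurrent state stores the finite control of $M$ in one-hot form together with the raw integer values of its $k$ counters. Concretely, the state is $\mathbf h_t = \langle \mathbf e_{q_t}, c^1_t, \ldots, c^k_t \rangle \in \Q^{\abs{Q} + k}$. The input $\mathbf x_t$ is the one-hot embedding of $\sigma_t$.

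Since $M$ is real-time, each counter changes by at most a constant per step. Hence $\abs{c^i_t} = O(n)$, and each counter fits in $O(\log n)$ bits. The one-hot entries are in $\{0,1\}$. So every value appearing in the computation graph has $O(\log n)$ precision. The MLP operates only on integers of magnitude $O(n)$ with fixed integer weights, and uses no $\exp$ or $\sqrt{\cdot}$, so no intermediate value needs extra precision.

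The transition $f(\mathbf h_{t-1}, \mathbf x_t)$ is implemented by a constant-depth ReLU MLP, built in four steps.

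\textbf{Step 1 (zero masks).} For each counter we compute $z^i = \mathbbm 1[c^i = 0]$. Because counters are integers, this is exactly
\[
z^i = \ReLU(1 - c^i) + \ReLU(1 + c^i) - 1 - \ReLU(-c^i) - \ReLU(c^i)\ \text{(or any equivalent integer-exact gadget)},
\]
for example $z^i = 1 - \min(1, \abs{c^i})$ with $\abs{c} = \ReLU(c) + \ReLU(-c)$ and $\min(1,a) = a - \ReLU(a-1)$. Integrality is what makes this threshold exact rather than approximate, and it plays the role of the $\epsilon$-threshold used in \Cref{thm:rnn-turing-completeness}.

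\textbf{Step 2 (lookup).} The triple $(q_{t-1}, \sigma_t, \mathbf z)$ ranges over a finite set. We use a lookup-table gadget \citep[Section 4.8]{yang2026transformerCookbook} to produce the one-hot next state $\mathbf e_{q_t}$ and, for each counter, a one-hot choice of update $u^i \in \{-1, 0, +1\}$, or more generally any finite set of allowed updates. Each table entry can be realized as $\ReLU(\text{sum of indicator bits} - (\text{arity} - 1))$, an AND of one-hot bits, followed by a linear readout.

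\textbf{Step 3 (counter update).} We set $c^i_t = c^i_{t-1} + u^i$, which is linear. If $M$ also allows reset-to-zero updates, we use the selector gadget \citep[Section 4.9]{yang2026transformerCookbook}. It multiplies a bounded integer by a selector bit via $\ReLU(c - B(1-b)) - \ReLU(-c - B(1-b))$ with $B = O(n)$. Since $B$ depends on $n$, we instead avoid it by encoding a reset as subtraction of the current value: $c \mapsto c - c \cdot b$, using $\ReLU(c - Bb')$-style gadgets only where a constant bound suffices. We expect to handle this detail by restricting to the increment/decrement/noop counter machines actually used in \Cref{thm:counter-det-conn}, which need no reset beyond $I \gets 0$. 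That reset is guaranteed to occur when $I$ is already $0$ in that construction, or it can be replaced by a decrement loop.

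\textbf{Step 4 (acceptance).} At the final token a linear readout $\mathbf o$ places positive weight on accepting states and negative weight otherwise. Acceptance may also depend on $\mathbf z$; in that case we first fold it into an extra "accepting" bit computed in Step 2.

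The main obstacle is keeping all gadgets exact and parameter-independent of $n$, especially any multiplication of an unbounded counter by a control bit. Our first attempt is to express every counter update as adding a constant chosen by the finite control, so that only bounded quantities are gated. This suffices for \Cref{thm:counter-det-conn}.
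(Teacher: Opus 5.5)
Your construction agrees with the paper's wherever both are concrete. Both use one-hot control and an exact zero mask; your integer hat gadget does the job of the paper's $\epsilon=1/3$ equality check. Both compute $\delta$ and the update by finite lookup tables. Your counter step adds a bounded, control-selected increment $b_+-b_-$ to a counter carried through the hidden layers as $\ReLU(c)-\ReLU(-c)$. Storing signed counters directly is a harmless simplification of the paper's $c^+/c^-$ split.

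The gap is the reset update $\times 0$, which the paper's counter machines include. You never implement it. Your patch $c\mapsto c-c\cdot b$ does not help, since it is again the product of an unbounded counter with a control bit. So what you actually prove is the lemma for reset-free machines only.

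Your argument that this suffices for \Cref{thm:counter-det-conn} is also wrong as stated. In the $S=0$ branch, $I$ is incremented $i_m$ times during the $i$-block and left unchanged during the $j$-block. The reset $I\gets 0$ therefore fires with $I=i_m$, not $I=0$. The working repair is to change that machine. In state $q_j^=$, on each symbol of the $j_m$-block, do $S\gets S+1$ and also $I\gets I-1$ if $I\neq 0$, exactly as in $q_j^{\neq}$. Since $j_m\ge i_m$, $I=0$ at the separator. The machine then uses only $\{+1,-1,+0\}$ updates guarded by zero masks, and your restricted lemma yields \Cref{thm:rnn-l-complete}.

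Your worry about resets is well-founded, and it applies to the paper's own proof, which implements $\times 0$ via the cookbook's conditional selector. A fixed ReLU MLP $f$ is globally $K$-Lipschitz for some $K$ independent of $n$, so $\|f(\mathbf h,\mathbf x)-f(\mathbf h,\mathbf x')\|\le K\|\mathbf x-\mathbf x'\|$. Hence two input symbols read from the same hidden state cannot send a counter coordinate to $c$ and to $0$ respectively once $\abs{c}$ exceeds a constant.

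Concretely, take the machine of \Cref{thm:counter-det-conn} at the end of a $j$-block in state $q_j^=$. The same hidden state must map $I=i_m$ to $0$ on the separator, but leave it at $i_m$ on a further $0$ symbol. That is impossible for large $i_m$ when the counter is stored as a coordinate of the state, as both you and the paper do. So neither argument handles resets. The reset-free lemma you prove, combined with the modified machine above, is the sound route to the theorem.
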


\begin{proof}
    We divide partition the hidden state $h_t$ into three parts: $s_t$, which encodes a finite state as a one-hot vector, $c^+_t$, which encodes the value of positive counters, and $c^-_t$, which encodes the absolute value of negative counters.
    Define $q_t$ as the state encoded by $s_t$ and $c_t = c^+_t - c^-_t$.
    We will show how to construct an MLP $f$ such that, applying $h_t = f(h_{t-1}, x_t)$, we will have $s_t, c^+_t, c^-_t$ such that
    \begin{align*}
        q_t &= \delta(q_{t-1}, \sigma_t, z(c_{t-1})) \\
        c_t &= u(q_{t-1}, \sigma_t, z(c_{t-1}))(c_{t-1}) .
    \end{align*}
    We first construct a ReLU network to compute $z(c_{t-1})$ as a function of $c^+_{t-1}$ and $c^-_{t-1}$ using the standard construction for equality checks with a tolerance of $\epsilon = 1/3$ \citep[Section 4.7]{yang2026transformerCookbook}.
    We then construct ReLU networks to compute $\delta$ and $u$ (with output encoded as one-hot vectors) given $q_{t-1}, \sigma_t, z(c_{t-1})$, which is possible since both are finite lookup tables \citep[Section 4.8]{yang2026transformerCookbook}.
    We compute $z$ with $\delta$ to get a one-hot encoding of $q_t = \delta(q_{t-1}, \sigma_t, z(c_{t-1}))$, i.e., our updated state vector $s_t$.
    Similarly, we compose $z$ with $u$ to get a one-hot encoding of the update function $u_t = u(q_{t-1}, \sigma_t, z(c_{t-1}))$ to apply to the counters.
    The new values of $c^+_t$ and $c^-_t$ can be obtained by a ReLU network that computes several continuous piecewise-linear functions of $c^+_{t-1}, c^-_{t-1}$ in parallel for different updates $u \in \{{\times}0, {+}0, {+}1, {-}1 \}$ \citep[Section 4.1]{yang2026transformerCookbook} and then applies a conditional selector for $u = u_t$ \citep[Section 4.9]{yang2026transformerCookbook}.

    We can then accept or reject a string of length $n$ based on $s_n, c^+_n, c^-_n$ using an MLP.
    The construction uses log precision because $q_t$ is in a finite set and $c^+_t, c^-_t = O(n)$ .
\end{proof}
\section{Lower Bounds for DPLR LRNNS} \label{app:dplr}

This section contains four simulation results, organized by architecture and target computation:
\begin{itemize}
  \item RWKV-7 simulates WFAs over $\mathbb{Q}$ (Subsection~\ref{sec:rwkv7-wfa}).
  \item RWKV-7 computes iterated $3\times 3$ matrix multiplication over $\mathbb{Q}$ (Subsection~\ref{sec:rwkv7-3x3-mm}).
  \item DeltaNet simulates WFAs over $\mathbb{Q}$ (Subsection~\ref{sec:deltanet-wfa}).
  \item DeltaNet computes iterated $3\times 3$ matrix multiplication over $\mathbb{Q}$ (Subsection~\ref{sec:deltanet-3x3-mm}). 
\end{itemize}

\subsection{Preliminaries} \label{sec:def-wfa}

\begin{definition}[WFA Path Form]
\label{def:wfa}
Let $\langle \K, \oplus, \otimes \rangle$ be a semiring.
A weighted finite automaton (WFA) is a tuple
$\mathcal{A}=(Q,\Sigma,\{M_\sigma\}_{\sigma\in\Sigma},\alpha,\omega)$ where
$Q=\{1,\dots,n\}$ is a finite state set,
each $M_\sigma\in\K^{n\times n}$ is a transition matrix,
and $\alpha, \omega \in \K^n$ are initial/final weight vectors, respectively.
The WFA assigns a score to a string $w=w_1\cdots w_n\in\Sigma^\ast$ via
\begin{equation*}
    f_{\mathcal A}(w) = \bigoplus_{\pi = \{(s_k, t_k)\}_{k=1}^n} \alpha_{\pi_1} \otimes \left( \bigotimes_{i=1} [M_{w_i}]_{\pi_k,\pi_{k+1}} \right) \otimes \omega_{\pi_n} ,
\end{equation*}
where $\pi$ iterates over all paths of states of length $n$.

Equivalently, defining matrix multiplication over $\langle \K, \oplus, \otimes \rangle$, the computation of a WFA can be defined according to
\[
f_{\mathcal{A}}(w) = \alpha^\top \otimes M_{w_1} \otimes \ldots \otimes M_{w_n} \otimes \omega \in \K.
\]

\end{definition}


\begin{definition}[Streaming iterated $3\times 3$ matrix multiplication over $\Z$]
\label{def:imm-3x3}
The input is a token stream of length $9N$ over alphabet $\Z$, partitioned into $N$ consecutive blocks
of length $9$.
The $t$-th block encodes a matrix $A^{(t)}\in\Z^{3\times 3}$ in row-major order:
\[
\bigl(A^{(t)}_{1,1},A^{(t)}_{1,2},A^{(t)}_{1,3},\;
      A^{(t)}_{2,1},A^{(t)}_{2,2},A^{(t)}_{2,3},\;
      A^{(t)}_{3,1},A^{(t)}_{3,2},A^{(t)}_{3,3}\bigr).
\]
The task is to output the product
\[
P_N := A^{(1)}A^{(2)}\cdots A^{(N)} \in \Z^{3\times 3}
\qquad
\text{(equivalently, its $9$ entries).}
\]
\end{definition}

\subsection{Simulating WFAs with RWKV-7}
\label{sec:rwkv7-wfa}

Let $\e_i\in\K^d$ denote the $i$th standard basis \emph{column} vector.

\begin{definition}[RWKV-style transition matrix \citep{peng2025rwkv}]
\label{def:rwkv-transition}
Fix a dimension $d$.
Given $w,a,\kappa\in\K^d$ and a scalar $\lambda\in\K$, define
\[
A(w,a,\kappa;\lambda)
\;:=\;
\mathrm{diag}(w)\;-\;\lambda\,\kappa\,(a\odot \kappa)^\top,
\]
where $\odot$ is elementwise product.
\end{definition}

To handle WFA transition matrices, we use a  primitive that 
overwrites one coordinate by a dot product of the current state against a prescribed coefficient vector.

\begin{definition}[Dot-product overwrite matrix]
\label{def:overwrite}
Fix a dimension $d$, an index $\mathrm{dst}\in\{1,\dots,d\}$, and a coefficient vector $c\in\K^d$ with
$c_{\mathrm{dst}}=0$. Define
\[
U(\mathrm{dst};c)
:=
I-\e_{\mathrm{dst}}\e_{\mathrm{dst}}^\top + c\,\e_{\mathrm{dst}}^\top
\in\K^{d\times d}.
\]
Equivalently, $U(\mathrm{dst};c)$ is the identity matrix with its $\mathrm{dst}$-th column replaced by $c$.
\end{definition}

\begin{lemma}[Effect on row vectors]
\label{lem:overwrite-action}
For any row vector $r\in\K^{1\times d}$,
\[
(r\,U(\mathrm{dst};c))_{\mathrm{dst}}
=
\sum_{i=1}^d r_i c_i,
\qquad
(r\,U(\mathrm{dst};c))_j = r_j\ \ (j\neq \mathrm{dst}).
\]
\end{lemma}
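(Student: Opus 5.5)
The claim reduces to a direct entrywise computation of the row vector $r\,U(\mathrm{dst};c)$. I will first write $U(\mathrm{dst};c)$ explicitly by columns. By \Cref{def:overwrite}, $U = I - \e_{\mathrm{dst}}\e_{\mathrm{dst}}^\top + c\,\e_{\mathrm{dst}}^\top$. For $j\neq \mathrm{dst}$, both rank-one terms $\e_{\mathrm{dst}}\e_{\mathrm{dst}}^\top$ and $c\,\e_{\mathrm{dst}}^\top$ have zero $j$-th column, since $\e_{\mathrm{dst}}^\top \e_j = 0$. So the $j$-th column of $U$ is $\e_j$. The $\mathrm{dst}$-th column is $\e_{\mathrm{dst}} - \e_{\mathrm{dst}} + c = c$. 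This justifies the ``equivalently'' sentence in the definition.

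Next I use $(rU)_j = r\,(U\e_j)$, i.e., the $j$-th entry of $rU$ is $r$ times the $j$-th column of $U$. For $j\neq\mathrm{dst}$ this gives $r\e_j = r_j$. For $j=\mathrm{dst}$ it gives $r c = \sum_{i=1}^d r_i c_i$. These are exactly the two claimed identities.

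The only point needing care is to note where $c_{\mathrm{dst}}=0$ enters. It is not needed for the identity itself. It only guarantees that the new $\mathrm{dst}$ coordinate does not depend on the old $r_{\mathrm{dst}}$, so $U$ is a genuine ``overwrite.'' The computation holds over any semiring $\K$, since only distributivity and the identities $0\otimes x=0$ and $1\otimes x = x$ are used. Over $\Q$ the subtraction in the definition is available directly; over a general semiring one would instead read $U$ via its column description.

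There is no real obstacle. The only step I would double-check is the column-versus-row convention, because $r$ is a row vector acting from the left, so columns of $U$ determine output coordinates.
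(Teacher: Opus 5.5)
Your proof is correct and follows essentially the paper's approach: the paper gives no separate proof and relies on the remark in \Cref{def:overwrite} that $U(\mathrm{dst};c)$ is the identity with its $\mathrm{dst}$-th column replaced by $c$. Your column computation plus $(rU)_j = r\,U\e_j$ is exactly what that remark leaves implicit. Your aside is also accurate: $c_{\mathrm{dst}}=0$ is not needed for this identity, but it is what makes $U$ a genuine overwrite, and it is used later in \Cref{lem:rwkv-overwrite} to get $\kappa_{\mathrm{dst}}=1$.
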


\begin{lemma}[Dot-product overwrite as an RWKV transition]
\label{lem:rwkv-overwrite}
Fix $d$, $\mathrm{dst}$, and $c\in\K^d$ with $c_{\mathrm{dst}}=0$.
Set
\[
w=\mathbf{1},
\qquad
a=\e_{\mathrm{dst}},
\qquad
\kappa=\e_{\mathrm{dst}}-c,
\qquad
\lambda=1.
\]
Then $A(w,a,\kappa;1)=U(\mathrm{dst};c)$.
\end{lemma}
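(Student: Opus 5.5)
We verify the identity $A(\mathbf 1,\e_{\mathrm{dst}},\e_{\mathrm{dst}}-c;1)=U(\mathrm{dst};c)$ by direct expansion of \Cref{def:rwkv-transition}. There is no real obstacle; the only point that needs care is that the elementwise mask $a\odot\kappa$ isolates the destination coordinate.

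\textbf{Diagonal part.} With $w=\mathbf 1$ we get $\mathrm{diag}(w)=I$, so it suffices to show
\[
\lambda\,\kappa\,(a\odot\kappa)^\top=\e_{\mathrm{dst}}\e_{\mathrm{dst}}^\top-c\,\e_{\mathrm{dst}}^\top .
\]
Then $A = I-\e_{\mathrm{dst}}\e_{\mathrm{dst}}^\top+c\,\e_{\mathrm{dst}}^\top = U(\mathrm{dst};c)$ by \Cref{def:overwrite}.

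\textbf{The mask.} Compute $a\odot\kappa=\e_{\mathrm{dst}}\odot(\e_{\mathrm{dst}}-c)$. This vector is zero in every coordinate except $\mathrm{dst}$, where its entry is $1-c_{\mathrm{dst}}$. This is exactly where the hypothesis $c_{\mathrm{dst}}=0$ enters: it gives $a\odot\kappa=\e_{\mathrm{dst}}$. Without it we would get a spurious factor $(1-c_{\mathrm{dst}})$.

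\textbf{The outer product.} With $\lambda=1$,
\[
\lambda\,\kappa\,(a\odot\kappa)^\top=(\e_{\mathrm{dst}}-c)\,\e_{\mathrm{dst}}^\top=\e_{\mathrm{dst}}\e_{\mathrm{dst}}^\top-c\,\e_{\mathrm{dst}}^\top ,
\]
which is the required expression and completes the proof.

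As a sanity check, we can compare columns. Both matrices agree with $I$ off column $\mathrm{dst}$, since the rank-one term is supported on that column. On column $\mathrm{dst}$, $A$ has $\e_{\mathrm{dst}}-(\e_{\mathrm{dst}}-c)=c$. This is consistent with the description of $U$ as the identity with its $\mathrm{dst}$-th column replaced by $c$, and with \Cref{lem:overwrite-action}.
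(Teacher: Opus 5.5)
Your proposal is correct and follows essentially the same argument as the paper: use $c_{\mathrm{dst}}=0$ to get $a\odot\kappa=\e_{\mathrm{dst}}$, then expand $I-(\e_{\mathrm{dst}}-c)\e_{\mathrm{dst}}^\top$ to obtain $U(\mathrm{dst};c)$. The column-wise sanity check is a correct addition but not needed.
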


\begin{proof}
Since $a=\e_{\mathrm{dst}}$, we have $a\odot\kappa=\kappa_{\mathrm{dst}}\e_{\mathrm{dst}}$.
Because $c_{\mathrm{dst}}=0$, $\kappa_{\mathrm{dst}}=1$, so $a\odot\kappa=\e_{\mathrm{dst}}$.
Thus
\[
A
=
I-\kappa\,\e_{\mathrm{dst}}^\top
=
I-(\e_{\mathrm{dst}}-c)\e_{\mathrm{dst}}^\top
=
I-\e_{\mathrm{dst}}\e_{\mathrm{dst}}^\top+c\,\e_{\mathrm{dst}}^\top
=
U(\mathrm{dst};c). \qedhere
\]
\end{proof}

We now show that any $n\times n$ matrix can be applied to a row vector using $\Theta(n)$ overwrite steps, provided
we have $n$ additional scratch coordinates.

\begin{lemma}[Apply an arbitrary $n\times n$ matrix using $2n$ overwrites]
\label{lem:apply-matrix-with-scratch}
Let $P\in\K^{n\times n}$. Consider row vectors in dimension $2n$ written as $[x\mid s]$ with
$x,s\in\K^{1\times n}$.
There exist $m:=2n$ overwrite matrices $G_1(P),\dots,G_m(P)\in\K^{2n\times 2n}$ such that for all $x,s$,
\[
[x\mid s]\;G_1(P)\cdots G_m(P)\;=\;[xP\mid xP].
\]
\end{lemma}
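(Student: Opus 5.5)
The construction runs in two phases of $n$ overwrites each. Phase one writes $xP$ into the scratch block, and phase two copies the scratch block back over the $x$ block. The reason for this two-phase design is that overwriting a coordinate of $x$ in place would destroy a value that later dot products still need. The scratch block never feeds into any dot product during phase one, so its initial contents $s$ are irrelevant.

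\textbf{Phase one.} For $j=1,\dots,n$, let
\[
G_j(P) := U\bigl(n+j;\,c^{(j)}\bigr), \qquad c^{(j)} := \sum_{i=1}^n P_{i,j}\,\e_i \in \K^{2n}.
\]
The vector $c^{(j)}$ is the $j$-th column of $P$, padded with zeros in the scratch coordinates. Since $c^{(j)}$ is supported on $\{1,\dots,n\}$, we have $c^{(j)}_{n+j}=0$, so \Cref{def:overwrite} applies.

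By \Cref{lem:overwrite-action}, applying $G_j(P)$ to a row vector $[x\mid s']$ has two effects:
\begin{itemize}
\item it replaces coordinate $n+j$ with $\sum_i x_i P_{i,j} = (xP)_j$;
\item it leaves every other coordinate unchanged, and in particular leaves the $x$ block unchanged.
\end{itemize}
A straightforward induction on $j$ then shows that after $G_1(P)\cdots G_j(P)$ the vector is $[x \mid (xP)_1,\dots,(xP)_j, s_{j+1},\dots,s_n]$. After all $n$ steps we have $[x\mid xP]$.

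\textbf{Phase two.} For $j=1,\dots,n$, let
\[
G_{n+j}(P) := U(j;\,\e_{n+j}).
\]
Here the coefficient vector is $c=\e_{n+j}$, and $c_j=0$ because $j\neq n+j$. By \Cref{lem:overwrite-action}, this step sets coordinate $j$ equal to coordinate $n+j$, which holds $(xP)_j$, and leaves all other coordinates fixed. The scratch block is never modified in this phase, so each copy reads the correct value no matter what order the steps are taken in. After these $n$ steps the vector is $[xP\mid xP]$. This uses $m=2n$ matrices in total, as claimed.

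The one point that needs care is the ordering. Phase one must finish before any coordinate of $x$ is overwritten, because each $(xP)_j$ depends on the whole original $x$. The scratch-then-copy design guarantees this.

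Each $G_k(P)$ is an overwrite matrix, so by \Cref{lem:rwkv-overwrite} it is realizable as an RWKV transition. This is what later constructions will use.
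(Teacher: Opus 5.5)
Your proposal is correct and is essentially the paper's proof: the paper also first applies $U(n+j;c^{(j)})$, with $c^{(j)}$ the $j$-th column of $P$ zero-padded, to write $xP$ into scratch. It then copies back with $U(j;\e_{n+j})$, using the same $c_{\mathrm{dst}}=0$ checks and \Cref{lem:overwrite-action}.
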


\begin{proof}
\emph{Phase 1 (compute $xP$ into scratch).}
For each $j\in\{1,\dots,n\}$, define $c^{(j)}\in\K^{2n}$ by
$c^{(j)}_i=P_{i,j}$ for $1\le i\le n$ and $c^{(j)}_i=0$ for $n+1\le i\le 2n$.
Let $G_j(P):=U(n+j;c^{(j)})$. By Lemma~\ref{lem:overwrite-action}, this overwrites scratch coordinate $n+j$ with
$\sum_{i=1}^n x_iP_{i,j}=(xP)_j$, leaving all other coordinates unchanged. After $n$ steps,
$[x\mid s]G_1(P)\cdots G_n(P)=[x\mid xP]$.

\emph{Phase 2 (copy scratch back to main).}
For each $j\in\{1,\dots,n\}$ let $d^{(j)}:=\e_{n+j}\in\K^{2n}$ and set $G_{n+j}(P):=U(j;d^{(j)})$.
This overwrites main coordinate $j$ with the dot product against $\e_{n+j}$, i.e.\ the current scratch coordinate
$n+j=(xP)_j$, leaving scratch unchanged. Thus after $n$ more steps,
$[x\mid xP]G_{n+1}(P)\cdots G_{2n}(P)=[xP\mid xP]$.
\end{proof}

\subsubsection{Simulation theorem}

\begin{theorem}[RWKV-7 simulates weighted finite automata over $\mathbb{Q}$]
\label{thm:rwkv-sim-wfa}
Fix $\mathbb{K}=\mathbb{Q}$.
Let $\mathcal{A}=(Q,\Sigma,\{M_\sigma\},\alpha,\omega)$ be an $n$-state WFA over $\mathbb{K}$
(Definition~\ref{def:wfa}), and let $w=w_1\cdots w_T$ be an input word.
Then there exists a $4$-layer RWKV-7 network whose output at position $T$ equals
$f_{\mathcal{A}}(w)=\alpha M_{w_1}\cdots M_{w_T}\omega\in\mathbb{Q}$.
\end{theorem}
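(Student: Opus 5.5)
The plan is to realize the WFA computation as a product of RWKV-7 transition matrices acting on a row-vector state. By Lemma~\ref{lem:apply-matrix-with-scratch}, each transition $M_{w_t}$ can be applied to a row vector $[x\mid s]$ of dimension $2n$ by a fixed product $G_1(M_{w_t})\cdots G_{2n}(M_{w_t})$ of overwrite matrices. By Lemma~\ref{lem:rwkv-overwrite}, each overwrite matrix is a legal RWKV transition $A(\mathbf 1,\e_{\mathrm{dst}},\e_{\mathrm{dst}}-c;1)$. The single obstacle is that an RWKV-7 layer applies \emph{one} transition per token, whereas we need $2n$ transitions per input symbol. The construction therefore splits into two parts: preprocessing layers that stretch each symbol into $2n$ micro-steps, and a final recurrence layer that multiplies the overwrite matrices.

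\textbf{Handling the one-transition-per-token constraint.} I would follow the paper's approach for the $3\times 3$ case and generalize the idea of \citet{peng2025rwkv}. One option is to assume, or enforce through padding, that each symbol $w_t$ is followed by $2n-1$ filler tokens, so that micro-step $k$ of symbol $t$ sits at position $2n(t-1)+k$. Layers~1--3 then have to compute at every position two things: the phase $k \bmod 2n$, and the symbol $\sigma$ whose matrix is currently being applied. For the phase I would use a diagonal-only RWKV head with a one-hot state, taking $w$ to be a cyclic permutation realized via overwrite or DPLR steps, or alternatively a modular counter that MLPs decode. To copy $\sigma$ forward to the filler tokens I would use a head whose state is overwritten at symbol tokens, with $\lambda=1$ and $a=\kappa=\e$ killing the old content, and left unchanged at fillers, with $w=\mathbf 1$ and $\lambda=0$. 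Querying this head then yields a one-hot encoding of $\sigma$. Feedforward sublayers then act as a finite lookup table from the pair (phase $k$, symbol $\sigma$) to the parameters $(w,a,\kappa,\lambda)$ of $G_k(M_\sigma)$, of which there are only finitely many. The layer-norm issue is handled with the standard trick of duplicating coordinates with opposite signs.

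\textbf{Final layer.} Layer~4 is the head whose transition at each position is the computed $G_k(M_\sigma)$. Since the state evolves as $\mathbf S_t=\mathbf A_t\mathbf S_{t-1}+\mathbf b_t$, acting on columns, I would run the transposed product and use the transposed overwrites $U^\top$. For this to work, $\kappa(a\odot\kappa)^\top$ has to be chosen accordingly, i.e.\ with the roles of $\kappa$ and $a\odot\kappa$ swapped. If swapping them is impossible because of the asymmetric $\kappa$ sharing, I would instead take $\mathbf S$ to be the transpose and read out with the query. The initial vector $\alpha$ is injected by $\mathbf b$ only at the $\$$ position, with $\mathbf b=0$ elsewhere. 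The readout $\mathbf q^\top \mathbf S$ uses $\mathbf q=(\omega,0)$, placed at the last position, which is an end-of-block micro-step. This gives $\alpha M_{w_1}\cdots M_{w_T}\omega$.

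\textbf{Expected difficulty.} The main obstacle I expect is the bookkeeping in layers 1--3. Computing the phase exactly with RWKV heads, without the time-dependent features the paper forbids, has to be made precise, and so does verifying that the transposed overwrite is still of the form $\diag(w)-\lambda\kappa(a\odot\kappa)^\top$. If padding is not allowed in the input format, I would instead factor each $M_\sigma$ across a block of consecutive real tokens, as the $3\times 3$ proof does. That variant would require the blocking logic to route all block symbols' matrices into the right micro-steps, which is the same lookup-table argument but over block contents.
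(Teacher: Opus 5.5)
Your primary route does not prove the stated theorem. The network must read $w=w_1\cdots w_T$ itself, one token per symbol, and output $f_{\mathcal A}(w)$ at position $T$. An RWKV-7 network cannot insert $2n-1$ filler tokens after each symbol, so simulating $\mathcal A$ on a stretched input is a different, weaker result.

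Your fallback is where the real proof lives, but as written it lacks the idea that makes the step budget fit. You cannot spend $2n$ overwrites per symbol at one transition per token, however you route the individual $M_\sigma$'s within a block: a block of $m$ symbols would need $2nm$ micro-steps and offers only $m$. The paper instead takes block length $m=2n$. For every block string $\tilde w\in(\Sigma\cup\{\bot\})^m$ it precomputes the \emph{single} matrix $P(\tilde w)=M_{\tilde w_1}\cdots M_{\tilde w_m}$. It factors that one $n\times n$ matrix into exactly $2n=m$ overwrites via \Cref{lem:apply-matrix-with-scratch}, and applies them one per token during the \emph{following} block. The router key $(t\bmod 2m,\,w_t,\dots,w_{t-2m+1})$ contains the phase and the whole previous block, so a finite lookup table can supply each factor.

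The second missing piece is the readout. Because of this one-block delay, at time $t=\ell m+\tau$ the layer-4 row state $p_t$ has absorbed three things: the products of all blocks before the previous one, only the first $\tau$ overwrite factors of the previous block's product, and nothing from the current block. Your fixed query $(\omega,\mathbf 0)$ therefore returns the wrong value, even when $T$ ends a block. The paper instead reads out with a key-dependent completion vector $\hat\omega_t=\bigl(\prod_{i=\tau+1}^{m}G_{\mathrm{prev},i}\bigr)\bigl[\begin{smallmatrix}R_t\omega\\ \mathbf 0\end{smallmatrix}\bigr]$, where $R_t$ is the product of the current block's matrices so far. Since $\hat\omega_t$ is a function of the router key, the lookup table can emit it for use as the query. \Cref{lem:apply-matrix-with-scratch} then gives $p_t\hat\omega_t=\alpha M_{w_1}\cdots M_{w_t}\omega$.

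Finally, your transpose worry disappears once you use RWKV-7's actual right-multiplicative update $S_t=S_{t-1}A_t$, as the appendix does. Then \Cref{lem:rwkv-overwrite} yields row overwrites directly.
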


\OMIT{
==================
\AS{moved proof-sketch here --- I think it's really valuable to retain the sketch to help the reader, but we may want to move it around}

\begin{proof}[Proof sketch; full proof follows later]
We follow the finite-window \emph{router + arithmetic} template used for regular languages by \citet{peng2025rwkv},
upgrading the arithmetic layer from DFA transitions to general WFA matrix products.

Fix a block length $m:=2n$ and work in dimension $2n$ (an $n$-dimensional ``main'' state plus $n$ scratch coordinates).
Write positions as $t=\ell m+\tau$ with $1\le\tau\le m$. Introduce a pre-sequence padding symbol $\bot$ and set $w_t:=\bot$ for all $t\le 0$, with $M_\bot:=I$. We read the network output at the true final position $T$.

\begin{enumerate}
\item \textbf{Layers 1--3 (router; cf.\ \citet{peng2025rwkv}).}
Using the standard RWKV-7 routing idea, the first three layers compute a finite-valued key consisting of:
(i) the residue class $t\bmod 2m$ and (ii) the last $2m$ tokens.
From this key, an MLP implements a lookup table that outputs (and writes into the residual stream):
\begin{enumerate}
    \item the next $2n\times 2n$ \emph{elementary update matrix} $G_{\ell,\tau}$ that should be applied by layer~4, and
    \item a readout vector $\hat{\omega}_t\in\K^{2n}$ used to decode the correct scalar output.
\end{enumerate}

\item \textbf{Offline factorization stored by the router.}
For any length-$m$ block string $\tilde{w}=\tilde{w}_1\cdots\tilde{w}_m$ define the block product
$P(\tilde{w}) := M_{\tilde{w}_1}\cdots M_{\tilde{w}_m}\in\K^{n\times n}$.
Lemma~\ref{lem:apply-matrix-with-scratch} shows that with $n$ scratch coordinates, there exists a fixed sequence of
$m=2n$ \emph{dot-product overwrite} matrices
$G_{\tilde{w},1},\dots,G_{\tilde{w},m}\in\K^{2n\times 2n}$ such that for all row vectors $[x\mid s]$,
\[
[x\mid s]\;G_{\tilde{w},1}\cdots G_{\tilde{w},m} \;=\; [xP(\tilde{w})\mid xP(\tilde{w})].
\]
The router's lookup table simply returns the appropriate factor $G_{\tilde{w},\tau}$ for the relevant block.

\item \textbf{Layer 4 (arithmetic via RWKV transitions).}
Layer~4 uses one WKV head with $d=2n$.
It initializes the first-row state to encode $[\alpha\mid \alpha]$, and then uses purely multiplicative updates
($v_t=\tilde{k}_t=\mathbf{0}$ after initialization) so that its state right-multiplies by the streamed factors
$G_{\ell,\tau}$.

Each factor $G_{\ell,\tau}$ is a dot-product overwrite matrix, and Lemma~\ref{lem:rwkv-overwrite} shows that \emph{any} such
overwrite is realizable as a single RWKV-7 multiplicative transition matrix
$A_t=\diag(w_t)-\kappa_t(a_t\odot\kappa_t)^\top$ (Definition~\ref{def:rwkv7-head-main}) by an appropriate choice of
$(w_t,a_t,\kappa_t)$.

\item \textbf{Decoding with a completion vector.}
The router also outputs a vector $\hat{\omega}_t$ that accounts for the remaining within-block factors and applies the
final WFA vector $\omega$. The output head returns the scalar product between the arithmetic row state and
$\hat{\omega}_t$, which equals $\alpha M_{w_1}\cdots M_{w_t}\omega$, in particular at $t=T$. \qedhere
\end{enumerate}
\end{proof}
=================
}

\begin{proof}
We use the ``3-layer router + 1-layer multiplicative simulator'' template of \citet{peng2025rwkv}
(Appendix~D), instantiated with a block length $m:=2n$ and arithmetic dimension $2n$.

\paragraph{Block parameters.}
Let $m:=2n$.
Write each position as $t=\ell m+\tau$ with $\ell\ge 0$ and $1\le\tau\le m$.
Introduce a padding token $\bot$ and set $w_t:=\bot$ for all $t\le 0$, with $M_\bot:=I$.
Let the \emph{previous block} string be
\[
\tilde w^{(\ell)} := w_{(\ell-1)m+1}\cdots w_{\ell m}\in(\Sigma\cup\{\bot\})^m,
\]
which is well-defined for all $\ell$ due to padding.

\paragraph{Offline factorization of block products.}
For each length-$m$ block string $\tilde w=\tilde w_1\cdots \tilde w_m$, define the block product
\[
P(\tilde w):=M_{\tilde w_1}\cdots M_{\tilde w_m}\in\K^{n\times n}.
\]
Apply Lemma~\ref{lem:apply-matrix-with-scratch} to obtain overwrite matrices
$G_{\tilde w,1},\dots,G_{\tilde w,m}\in\K^{2n\times 2n}$ such that for all $x,s\in\K^{1\times n}$,
\[
[x\mid s]\;\prod_{i=1}^{m} G_{\tilde w,i}
\;=\;
[xP(\tilde w)\mid xP(\tilde w)].
\]
Fix one such factorization for each $\tilde w$.

\paragraph{Router (layers 1--3).}
Define a lookup table $\Xi$ whose key is
\[
\Bigl(\;t\bmod 2m,\; w_t,w_{t-1},\dots,w_{t-(2m-1)}\;\Bigr),
\]
and whose outputs are:
\begin{enumerate}
\item the next factor $G_{\tilde w^{(\ell)},\tau}$ to apply at time $t$ (encoded so that the arithmetic layer
can set $(w,a,\kappa)$ accordingly, using Lemma~\ref{lem:rwkv-overwrite});
\item a ``completion'' readout vector $\hat\omega_t$ (defined below).
\end{enumerate}
Both depend only on the key: $\tilde w^{(\ell)}$ is contained in the last $2m$ tokens whenever $\ell\ge 1$
(and is all padding when $\ell=0$), and $\tau$ is determined by $t\bmod 2m$.
By the finite-window lookup construction of \citet{peng2025rwkv} (Appendix~D; instantiated with block length $m$),
a 3-layer RWKV-7 can compute $\Xi$ at every position and write its outputs into the residual stream.

\paragraph{Arithmetic layer (layer 4).}
Use a single wkv head of dimension $2n$.
Initialize the first row of the wkv matrix state using a single additive update on the first token:
set $v_1=\e_1$ and $\tilde k_1=[\alpha\mid\alpha]$, so the state becomes $\e_1^\top\tilde k_1$.
For all $t\ge 2$, set the additive term to $0$ (i.e.\ $v_t=\tilde k_t=\mathbf{0}$), so evolution is purely
multiplicative: $S_t=S_{t-1}\,A_t$.

At each time $t$, the router specifies the factor $G_{\tilde w^{(\ell)},\tau}$.
Choose RWKV transition parameters so that $A_t=G_{\tilde w^{(\ell)},\tau}$; since each factor is an overwrite matrix,
this is possible by Lemma~\ref{lem:rwkv-overwrite}.

\paragraph{Completion vector and correctness.}
Let $p_t$ denote the first row of the arithmetic layer's wkv state $S_t$.
As in \citet{peng2025rwkv} (Appendix~D), one checks by induction that for $t=\ell m+\tau$,
\[
p_t
=
[\alpha\mid\alpha]\;
\Bigl(\prod_{b=0}^{\ell-2} \prod_{i=1}^{m} G_{\tilde w^{(b)},i}\Bigr)\;
\Bigl(\prod_{i=1}^{\tau} G_{\tilde w^{(\ell-1)},i}\Bigr),
\]
with the convention that empty products are $I$.

Define the within-block prefix product
\[
R_t := M_{w_{\ell m+1}}\cdots M_{w_t},
\qquad v_t := R_t\,\omega\in\K^{n\times 1}.
\]
Now define the completion vector:
\[
\hat\omega_t
:=
\left(\prod_{i=\tau+1}^{m} G_{\tilde w^{(\ell-1)},i}\right)
\cdot
\begin{bmatrix} v_t\\ \mathbf{0}\end{bmatrix}
\in\K^{2n\times 1}.
\]
This $\hat\omega_t$ is a fixed function of the router key (position mod $2m$ and last $2m$ tokens), hence it can be
returned by the lookup table $\Xi$.

Finally, compute the scalar readout $p_t\hat\omega_t$:
\[
p_t\hat\omega_t
=
\alpha\,M_{w_1}\cdots M_{w_t}\,\omega
\]
using Lemma~\ref{lem:apply-matrix-with-scratch} on each completed block.
In particular, at $t=T$ the RWKV-7 output equals $f_{\mathcal{A}}(w)$.
\end{proof}

\subsection{Iterated $3\times 3$ Matrix Multiplication with RWKV-7}
\label{sec:rwkv7-3x3-mm}

\begin{theorem}[RWKV-7 computes iterated $3\times 3$ matrix products over $\Q$]
\label{thm:rwkv-3x3-mm}
Fix $\K = \Q$. There exists a $4$-layer RWKV-7 network such that, given the length-$9N$ stream encoding
$A^{(1)},\dots,A^{(N)}$, the network outputs the $9$ entries of
$P_N := A^{(1)}A^{(2)}\cdots A^{(N)}$ (equivalently, $\operatorname{vec}(P_N)\in\Q^9$) at the final
position $t=9N$.
\end{theorem}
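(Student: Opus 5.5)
The plan is to reuse the template of Theorem~\ref{thm:rwkv-sim-wfa}, with block length $m:=9$ aligned to the matrix encoding. The only difference from the WFA case is that the transition data is not a fixed finite table $M_\sigma$: it is the matrix $A^{(\ell)}$ spelled out by the nine tokens of block $\ell$. Since the paper's definition takes tokens in $\{-1,0,1\}$, each block is one of finitely many ($3^9$) strings. So the map from a completed block $\tilde w$ to its matrix $A(\tilde w)\in\Q^{3\times 3}$ is itself a finite lookup table, exactly the kind of object the router of \citet{peng2025rwkv} can store.

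\textbf{Layers 1--3 (router).} I would use the same three-layer finite-window construction as in the proof of Theorem~\ref{thm:rwkv-sim-wfa}, with key $(t \bmod 18,\, w_t,\dots,w_{t-17})$. Its outputs are:
\begin{itemize}
\item the overwrite factor $G_{\tilde w^{(\ell)},\tau}$ for the current step, where $\tilde w^{(\ell)}$ is the previous complete block;
\item a completion readout for each of the nine entries.
\end{itemize}
The factors come from Lemma~\ref{lem:apply-matrix-with-scratch} with $n=3$. That lemma gives six dot-product overwrites in dimension $6$ realizing $[x\mid s]\mapsto[xA\mid xA]$. To spread them over the nine positions of a block, I pad with three identity factors; the identity is itself an RWKV transition ($w=\mathbf 1$, $\lambda=0$). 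Lemma~\ref{lem:rwkv-overwrite} then turns every factor into valid RWKV-7 parameters $(w_t,a_t,\kappa_t,\lambda_t)$.

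\textbf{Layer 4 (arithmetic).} Here I must recover all nine entries of $P_N$, not one scalar. I would use three wkv heads, one per row index $i$, or equivalently three rows of a single state. Head $i$ is initialized on the first token to $[\e_i^\top\mid \e_i^\top]$ via $v_1=\e_1$, $\tilde k_1=[\e_i\mid\e_i]$. Afterwards it evolves purely multiplicatively under the routed factors, so by the induction in Theorem~\ref{thm:rwkv-sim-wfa} its row is $[\e_i^\top P\mid\cdot]$, where $P$ is the product over all completed blocks. Thanks to the one-block lag, the last block's product must be supplied by the completion vectors. For entry $(i,j)$ I take
\[
\hat\omega^{(j)}_t := \Bigl(\prod_{r=\tau+1}^{9} G_{\tilde w^{(\ell-1)},r}\Bigr)\begin{bmatrix} R_t\e_j\\ \mathbf 0\end{bmatrix},
\]
with $R_t$ the within-block product, which at $t=9N$ is just $A^{(N)}$. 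This vector depends only on the router key, so the query mechanism $\mathbf q_t^\top \mathbf S_{t-1}$ (plus a final linear map $\mathbf O$) can read out $\e_i^\top P_N \e_j$.

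\textbf{Main obstacle.} The hard part is the bookkeeping of the one-block delay together with multi-entry readout. I need to:
\begin{itemize}
\item check that the state at the readout time has absorbed exactly blocks $1,\dots,N-1$, accounting for the query reading $\mathbf S_{t-1}$ rather than $\mathbf S_t$;
\item check that the router's completion vector correctly accounts for the remaining factors of the lagged block plus $A^{(N)}$.
\end{itemize}
If the off-by-one bites, I would fall back to adding a dummy final padding step, or shifting the key window by one. Alignment of blocks with $t\bmod 9$ comes from the beginning-of-sequence token $\$$. The general-$\Q$ claim (unbounded integer tokens) is not finitely routable, so I would prove it for the finite-alphabet stream of Definition~\ref{def:imm-3x3-main}, which suffices for Proposition~\ref{prop:pnc1-complete}.
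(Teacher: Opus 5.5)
Your proposal is correct and uses the same overall template as the paper: a three-layer router keyed on $(t\bmod 18, w_t,\dots,w_{t-17})$, a purely multiplicative fourth layer lagging one block behind, and a completion readout at $t=9N$ that supplies $A^{(N)}$. Where you differ is the arithmetic layer.

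The paper vectorizes $P$ row-major and uses $B(A)=\diag(A,A,A)$, so that $\operatorname{vec}(PA)=\operatorname{vec}(P)\,B(A)$. A single head of dimension $18$ holds two $9$-dimensional halves. In each block, exactly nine overwrites write $x^{(b)}B(A^{(\ell-1)})$ into the inactive half, and the block-parity bit read off $t\bmod 18$ swaps the roles of the halves. This ping-pong needs no copy-back, so every token does useful work and one head carries all nine entries.

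You instead split $P$ into three independent row recurrences and reuse Lemma~\ref{lem:apply-matrix-with-scratch} with $n=3$ unchanged: three compute and three copy-back overwrites, padded by three identities. The induction and completion-vector argument of Theorem~\ref{thm:rwkv-sim-wfa} then transfer verbatim with no parity logic. The price is idle steps and more heads.

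Your choice also buys some robustness. The factor routed at $t=1$ acts on $S_0=\mathbf 0$ and is effectively discarded. This is harmless for your initialization $[\e_i^\top\mid\e_i^\top]$, since every factor for the padding block $A^{(0)}=I_3$ fixes rows of the form $[x\mid x]$. The paper's $[\operatorname{vec}(I_3)\mid\mathbf 0]$, as written, loses the first overwrite of block~1 unless both halves start at $\operatorname{vec}(I_3)$. Also, if you place the three identity factors at the end of each block, your $\mathbf S_{t-1}$-versus-$\mathbf S_t$ worry disappears at $t=9N$.

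Two small caveats:
\begin{enumerate}
\item Each head yields one scalar per query vector, so reading nine entries needs nine heads, one per $(i,j)$. The paper's ``nine parallel readouts'' tacitly does the same.
\item The ``three rows of a single state'' variant cannot be set up by one rank-one write $v_1\tilde k_1^\top$, so keep separate heads.
\end{enumerate}

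Finally, your explicit restriction to the alphabet $\{-1,0,1\}$ is exactly what the paper's proof also relies on, since it justifies the lookup table by finiteness of the key space.
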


\begin{proof}[Proof sketch; full proof follows later.]

Layers 1--2 compute the position modulo $18$.
    This yields (i) the \emph{within-block offset} $\tau\in\{1,\dots,9\}$ and (ii) a \emph{block-parity bit}
    $b\in\{0,1\}$ that we use to alternate between two halves of an arithmetic state. Layer 3 stores the last $18$ tokens (two consecutive length-$9$ blocks) and, from the key $(t\bmod 18,\;w_t,\dots,w_{t-17})$, look up the RWKV transition parameters needed at time $t$. Concretely, at each token it outputs (in the residual stream) the \emph{destination index} and \emph{coefficient vector} for a single dot-product overwrite update implementing one coordinate of the map $x \mapsto x\,B(A^{(\ell-1)})$, where $A^{(\ell-1)}$ is the \emph{previous} block's matrix. (We treat the pre-sequence padding as ``block $0$'' encoding $A^{(0)}:=I_3$.) Layer 4 maintains an $18$-dimensional row state
    \(
      [x^{(0)}\mid x^{(1)}]\in\Q^{1\times 18}
    \)
    inside the first row of a wkv matrix state, where each half has length $9$. During each length-$9$ block, layer~4 applies \emph{nine} dot-product overwrites that compute a new vector $x_{\text{new}} = x_{\text{old}}\,B(A^{(\ell-1)})$ in the \emph{inactive} half, using the parity bit $b$ to select which half is source vs.\ destination. At the final position $t=9N$, the destination half contains $\operatorname{vec}(A^{(1)}\cdots A^{(N-1)})$. A \emph{completion readout} (computed by the router from the final finite-window key, which contains the entire last block and hence determines $A^{(N)}$) outputs the $9$ coordinates of $\operatorname{vec}(A^{(1)}\cdots A^{(N)})$ via nine dot products.
\end{proof}

\begin{proof}
We follow the same ``router (layers 1--3) + multiplicative arithmetic (layer 4)'' template used in the RWKV-7
regular-language construction (Appendix~D of \citet{peng2025rwkv}).

\paragraph{Step 1: reduce $3\times 3$ matrix multiplication to a $9$-dimensional linear recurrence.}
Let $\operatorname{vec}:\Q^{3\times 3}\to\Q^{1\times 9}$ be row-major vectorization:
\[
\operatorname{vec}(P) := (P_{1,1},P_{1,2},P_{1,3},P_{2,1},P_{2,2},P_{2,3},P_{3,1},P_{3,2},P_{3,3}).
\]
Define the \emph{block-diagonal embedding}
\[
B(A)\;:=\;\begin{bmatrix}A&0&0\\[2pt]0&A&0\\[2pt]0&0&A\end{bmatrix}\in\Q^{9\times 9}.
\]
Right-multiplication by $A$ acts independently on the three rows of $P$, hence
\begin{equation}\label{eq:vec-right-mult}
\operatorname{vec}(PA)=\operatorname{vec}(P)\,B(A)\qquad\text{for all }P,A\in\Q^{3\times 3}.
\end{equation}
Let $P_t:=A^{(1)}\cdots A^{(t)}$ and $x_t:=\operatorname{vec}(P_t)$. Then
\[
x_0=\operatorname{vec}(I_3),\qquad x_t = x_{t-1}\,B(A^{(t)}).
\]
Thus it suffices to maintain the $9$-dimensional state $x_t$ and repeatedly apply the linear map $x\mapsto xB(A)$.

\paragraph{Step 2: implement $x\mapsto xB(A)$ with dot-product overwrites in dimension $18$.}
We use the dot-product overwrite primitive:
for any destination coordinate $\mathrm{dst}$ and coefficient vector $c$ with $c_{\mathrm{dst}}=0$,
the overwrite matrix $U(\mathrm{dst};c)$ replaces exactly one coordinate of a row vector with a dot product
(Lemmas~\ref{lem:overwrite-action} and \ref{lem:rwkv-overwrite}). In particular, each overwrite matrix can be realized
as a \emph{single} RWKV multiplicative transition (Lemma~\ref{lem:rwkv-overwrite}).

We maintain an arithmetic row state in dimension $18$ written as
\[
p=[x^{(0)}\mid x^{(1)}],\qquad x^{(0)},x^{(1)}\in\Q^{1\times 9},
\]
together with a bit $b\in\{0,1\}$ indicating the \emph{active} half $x^{(b)}$ (source for dot products).
Let $\pi(i,j):=3(i-1)+j$ be the row-major index map for $i,j\in\{1,2,3\}$.

Fix a $3\times 3$ matrix $A$. The coordinates of $xB(A)$ are length-$3$ dot products:
\begin{equation}\label{eq:rowwise-dot}
(xB(A))_{\pi(i,j)}=\sum_{k=1}^3 x_{\pi(i,k)}\,A_{k,j}.
\end{equation}
We compute these $9$ values into the inactive half $x^{(1-b)}$ using $9$ overwrite steps.
For each pair $(i,j)\in\{1,2,3\}^2$, define:
\begin{itemize}
    \item \emph{Destination coordinate} in the inactive half:
    \[
    \mathrm{dst}(b;i,j)\;:=\;9(1-b)+\pi(i,j)\in\{1,\dots,18\}.
    \]
    \item \emph{Coefficient vector} $c^{(b;i,j)}(A)\in\Q^{18}$ supported only on the three active
    coordinates corresponding to row $i$:
    \[
    c^{(b;i,j)}(A)_{\,9b+\pi(i,k)} \;:=\; A_{k,j}\quad(k=1,2,3),
    \qquad
    c^{(b;i,j)}(A)_u := 0\ \ \text{otherwise}.
    \]
\end{itemize}
Since $\mathrm{dst}(b;i,j)$ lies in the \emph{inactive} half, we have
$c^{(b;i,j)}(A)_{\mathrm{dst}(b;i,j)}=0$, so the overwrite is well-formed.
Applying $U(\mathrm{dst}(b;i,j);c^{(b;i,j)}(A))$ overwrites precisely the destination coordinate with the dot product
of the current row state against $c^{(b;i,j)}(A)$ (Lemma~\ref{lem:overwrite-action}), which equals the right-hand
side of \eqref{eq:rowwise-dot}. Moreover, because each coefficient vector is supported only on the active half,
these overwrites do not depend on (and hence do not interfere with) already-written coordinates in the inactive half.

If we apply these overwrites for all $(i,j)$ (in any order, e.g.\ row-major order), then the inactive half becomes
$x^{(b)}B(A)$ while the active half remains unchanged:
\begin{equation}\label{eq:one-step}
[x^{(0)}\mid x^{(1)}]\;\longmapsto\;[x^{(0)}\mid x^{(1)}]\cdot\Bigl(\prod_{(i,j)} U(\mathrm{dst}(b;i,j);c^{(b;i,j)}(A))\Bigr)
=
[x^{(0)}\mid x^{(1)}]\ \text{with }x^{(1-b)}\leftarrow x^{(b)}B(A).
\end{equation}

\paragraph{Step 3: align the $9$ overwrites with the token stream using a 3-layer router.}
We now explain how layers 1--3 produce, at each time $t$, exactly the overwrite parameters needed by layer~4.

Partition the input stream into $N$ blocks of length $9$.
For convenience, define an additional ``block $0$'' consisting of padding symbols before the sequence; the router
will interpret this as encoding $I_3$ and we write $A^{(0)}:=I_3$.
Let $A^{(\ell)}$ be the matrix encoded by block $\ell$ for $\ell=1,\dots,N$.
At any position $t$ in block $\ell$ (so $\ell\in\{1,\dots,N\}$), the model will apply the update corresponding to the
\emph{previous} matrix $A^{(\ell-1)}$ (one-block delay).

\textbf{Computing $(\tau,b)$ from position.}
Let $\tau\in\{1,\dots,9\}$ be the within-block offset and let $b\in\{0,1\}$ be the parity bit indicating whether the
active half is the first ($b=0$) or second ($b=1$) half.
Both are determined by $t\bmod 18$: specifically, $\tau = 1+((t-1)\bmod 9)$ and $b$ is the indicator of whether
$t\bmod 18\in\{10,\dots,18\}$.
By \citet[Lemma 6]{peng2025rwkv} instantiated with $n=9$, a 2-layer RWKV-7 can output $t\bmod 18$, hence can provide $\tau$
and $b$ to deeper layers.

\textbf{Recovering the previous block.}
At time $t$ in block $\ell$, the last $18$ tokens $(w_t,w_{t-1},\dots,w_{t-17})$ contain the entire previous block
$\ell-1$ (all $9$ tokens encoding $A^{(\ell-1)}$), together with a prefix of the current block.
Therefore, the tuple
\[
\bigl(t\bmod 18,\;w_t,w_{t-1},\dots,w_{t-17}\bigr)
\]
uniquely determines the matrix entries of $A^{(\ell-1)}$ and the current offset $\tau$.
By \citet[Lemma 7]{peng2025rwkv} instantiated with $n=9$, a 3-layer RWKV-7 can implement an arbitrary lookup table on this
finite key and write its outputs into the residual stream.

\textbf{What the router outputs.}
Define a lookup table $\Xi$ keyed by $(t\bmod 18,\;w_t,\dots,w_{t-17})$ whose output at time $t$ is:
\begin{enumerate}
    \item the destination index $\mathrm{dst}(b;i,j)$ and coefficient vector $c^{(b;i,j)}(A^{(\ell-1)})$ for the
    overwrite corresponding to $\tau=\pi(i,j)$; and
    \item (optionally) the bit $b$ for the final readout.
\end{enumerate}
The existence of such a lookup table is immediate because the key space is finite. By the previous paragraph,
layers~1--3 can compute $\Xi$ at every position.

Finally, using the weight-preparation linear maps in the time-mixing block, layer~4 converts
$(\mathrm{dst},c)$ into RWKV transition parameters realizing the overwrite matrix
$U(\mathrm{dst};c)$ via Lemma~\ref{lem:rwkv-overwrite}.

\paragraph{Step 4: arithmetic in layer 4 and correctness.}
Layer~4 uses a single wkv head of head dimension $18$.
Initialize the first row of the wkv matrix state at the first token by setting
\[
v_1 = \e_1,\qquad \tilde k_1 = [\operatorname{vec}(I_3)\mid \mathbf{0}] \in \Q^{1\times 18},
\]
so the wkv state becomes $\e_1^\top \tilde k_1$ and its first row equals
$p_1=[\operatorname{vec}(I_3)\mid \mathbf{0}]$.
For all subsequent positions $t\ge 2$, set the additive term to $0$ (i.e.\ $v_t=\tilde k_t=\mathbf{0}$), so the first
row evolves purely multiplicatively as $p_t = p_{t-1}A_t$.

At each position $t$ in block $\ell$, the router provides the overwrite parameters corresponding to the matrix
$A^{(\ell-1)}$ and the overwrite index $\tau$.
By the discussion above, we choose $A_t = U(\mathrm{dst};c)$ so that exactly one coordinate of $p_{t-1}$ is overwritten.
After consuming all $9$ tokens of block $\ell$, equation~\eqref{eq:one-step} implies that the inactive half has been
filled with
\[
x^{(1-b)} \;=\; x^{(b)}\,B(A^{(\ell-1)}).
\]
By \eqref{eq:vec-right-mult}, this corresponds to right-multiplying the current $3\times 3$ product by
$A^{(\ell-1)}$.

By induction over blocks, after finishing block $\ell$ the destination half contains
\[
\operatorname{vec}\!\left(A^{(1)}A^{(2)}\cdots A^{(\ell-1)}\right).
\]
In particular, after finishing the final block $\ell=N$, the destination half contains
$\operatorname{vec}(P_{N-1})$ where $P_{N-1}:=A^{(1)}\cdots A^{(N-1)}$.
We now remove the need for an additional padding block by applying the final multiplication by $A^{(N)}$ in the
\emph{readout} at the last token.

\paragraph{Completion readout at $t=T=9N$.}
Let $T:=9N$.
Let $p_T=[x_T^{(0)}\mid x_T^{(1)}]\in\Q^{1\times 18}$ denote the arithmetic layer's row state at time $T$,
and let $b_T\in\{0,1\}$ be the block-parity bit computed from $T\bmod 18$.
Define the destination-half index $h_T:=1-b_T\in\{0,1\}$ (equivalently, $h_T=N\bmod 2$).
Then $x_T^{(h_T)}=\operatorname{vec}(P_{N-1})$.

Since $T$ is the last token of block $N$, the router key $(T\bmod 18,\;w_T,\dots,w_{T-17})$ contains the entire block
$N$ and hence determines $A^{(N)}$.
For each $(i,j)\in\{1,2,3\}^2$, define the completion vector
\[
\hat\omega_T^{(i,j)}
:=
\sum_{k=1}^3 A^{(N)}_{k,j}\,e_{\,9h_T+\pi(i,k)}
\in\Q^{18},
\qquad\text{where }\pi(i,j)=3(i-1)+j.
\]
Then
\[
p_T\hat\omega_T^{(i,j)}
=
\sum_{k=1}^3 x^{(h_T)}_{T,\pi(i,k)}\,A^{(N)}_{k,j}
=
\operatorname{vec}(P_N)_{\pi(i,j)}.
\]
Each $\hat\omega_T^{(i,j)}$ is a fixed function of the router key, so it can be returned by the finite-window lookup
in layer~3 at time $T$.
Using nine parallel readouts at the final position (one per $(i,j)$), the network outputs the nine scalars
$p_T\hat\omega_T^{(i,j)}$ in row-major order, which equal $\operatorname{vec}(P_N)$.
\end{proof}

\subsection{Simulating WFAs with DeltaNet}
\label{sec:deltanet-wfa}

This subsection shows that a $4$-layer stacked DeltaNet can simulate any weighted finite automaton over $\mathbb{Q}$. 
Throughout this subsection we fix $\mathbb{K}=\mathbb{Q}$ and interpret all arithmetic in $\mathbb{Q}$.

\subsubsection{DeltaNet memory update and the multiplicative step matrix}
We consider a DeltaNet layer with matrix state $S_t\in\K^{d\times d}$ updated by
\begin{equation}
S_t \;=\; S_{t-1}\bigl(I-\beta_t k_t k_t^\top\bigr)\;+\;\beta_t v_t k_t^\top,
\label{eq:deltanet-update}
\end{equation}
where $\beta_t\in\K$ and $k_t,v_t\in\K^d$ are computed from the layer input $x_t$
(via linear maps / an MLP, as in standard stacked DeltaNet blocks).

We will repeatedly use the \emph{multiplicative step matrix}
\begin{equation}
H(\beta,k) \;:=\; I-\beta kk^\top \in \K^{d\times d}.
\label{eq:H-def}
\end{equation}
When $v_t=\mathbf{0}$, the update is purely multiplicative:
\[
S_t \;=\; S_{t-1}H(\beta_t,k_t).
\]

\subsubsection{Router primitives: position modulo and token buffering}

\paragraph{Notation.}
Let $e_i$ denote the standard basis \emph{column} vector.
We assume the input stream contains an explicit BOS token at $t=1$.

\paragraph{Computing $t \bmod 2m$.}
The router needs a cyclic write index $\tilde t \equiv t \pmod{2m}$ to address a $2m$-slot token buffer.

\begin{lemma}[DeltaNet can compute position modulo $2m$]\label{lem:dnet-mod-2m}
For any positive integer $m$, there exists a $2$-layer stacked DeltaNet whose output includes a one-hot encoding of
$\tilde t := t \bmod 2m$ at every position $t$.
\end{lemma}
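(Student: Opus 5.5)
Layer~1 will produce a quantity that encodes the position $t$, and layer~2 will turn it into a one-hot residue. The natural mechanism is to use the DeltaNet transition $H(\beta,k)=I-\beta kk^\top$ with $\beta=2$ and a unit vector $k$, which is a Householder reflection. Products of two reflections across lines at angle $\theta/2$ give a rotation by $\theta$, so I would implement a rotation by $2\pi/(2m)$ per step. Over $\Q$ exact $\cos(\pi/m)$ is unavailable, so I would instead use the reflection construction only as a cyclic \emph{permutation}: products of transposition-Householder matrices suffice. Concretely, with $k=(e_i-e_j)/\sqrt2$ and $\beta=2$, we get $H=I-(e_i-e_j)(e_i-e_j)^\top$, the transposition swapping $i$ and $j$. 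This matrix is rational, so I choose $k_t=e_i-e_j$ and $\beta_t=1$.

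The plan therefore has three steps. \emph{First}, I fix the $2m$-cycle $\sigma=(1\,2\,\cdots\,2m)$. A cycle is a product of $2m-1$ transpositions, but each token gives only one DeltaNet step, so a fixed per-token transition would have to be the cycle itself. To handle this, layer~1 will use $2m-1$ heads, or equivalently head dimension $2m$, and I apply a single multiplicative step per token. The obstacle is that one token yields a single rank-one update, not a product. I would resolve this by letting the input-independent transition alternate between two fixed involutions, $\rho_1$ and $\rho_2$. The dihedral fact is that $\rho_2\rho_1$ is a $2m$-cycle when $\rho_1,\rho_2$ are the two standard reflections of a $2m$-gon. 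Each $\rho$ is a product of disjoint transpositions, and those commute. Using separate heads, each realizing one transposition, does not compose within a layer, though. The workaround is to exploit that disjoint transpositions $(i\,j)$ and $(k\,l)$ have orthogonal $k$-vectors, so their product equals $I-\sum_r u_ru_r^\top$, which is \emph{not} rank one. This is the main obstacle.

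\emph{Second}, to avoid that obstacle I would drop the permutation route and use counting with a reset. Layer~1 runs an additive DeltaNet head with $k_t=e_1$, $\beta_t=1$, and $v_t$ set by BOS, which stores a constant. This gives access to the BOS indicator. A second head computes the running average $1/t$ as follows. With $\beta_t=1/t$ one cannot read $t$ from the token. Instead, using $k_t=e_1$ and $\beta_t=1/2$ gives the state $2^{-t}$, which is rational but needs linear precision. That is acceptable here, since the datatype is $\Q$ with polynomial precision. The parity of $t$ is then available in another way. With $k_t=e_1$ and $\beta_t=2$, the matrix $H=\diag(-1,1,\dots)$ flips sign each step, so a state initialized to $1$ at BOS reads $(-1)^{t}$, which gives $t \bmod 2$.

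\emph{Third}, for general $2m$ I would follow \citet[Lemma 6]{peng2025rwkv}, replacing their rotation by a rational rotation. A matrix such as $R=\frac{1}{5}\begin{pmatrix}3&-4\\4&3\end{pmatrix}$ has infinite order, which does not help. Instead, I apply the permutation matrix of $\sigma$ in layer~2, factored across time: layer~1 supplies $t\bmod 2$, and layer~2 alternately applies $\rho_1$ and $\rho_2$ on even and odd steps. Each $\rho$ is realized by one DeltaNet step only if $\rho$ is a single reflection, and in the orthogonal (not permutation) setting this holds. I take the $2m$-gon reflections in $\R^2$, and rationality then requires $\cos(\pi/m)\in\Q$, which fails in general. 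As a final fallback I would embed into $\R^{2m}$ with reflections across hyperplanes whose normals are $e_i-e_{i+1}$, and track one-hot states. A one-hot vector $e_j$ is moved only by the transposition touching $j$, so sequentially applying, over a window of length $2m$, the chain of adjacent transpositions selected by $t\bmod 2$ via an MLP gate realizes the cycle on the tracked basis vector. I expect verifying this gating, so that the one-hot encoding of $t\bmod 2m$ is read out at every position (not just at window ends), to be the main difficulty.
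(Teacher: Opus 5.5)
Your proposal does not reach a working construction, and the mechanism you finally settle on is circular. A DeltaNet transition $H(\beta_t,k_t)$ depends only on the layer input $x_t$, never on the state $S_{t-1}$. The gate you describe is driven by $t\bmod 2$ (plus BOS). So on, say, a constant token stream it can choose between just two fixed transitions.

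It therefore cannot walk the one-hot state through the chain $(1\,2),(2\,3),\dots,(2m\;1)$. Picking the transposition that moves the current coordinate $e_j$ requires knowing $j=t\bmod 2m$, which is the quantity being computed. Two alternating transpositions generate at most $S_3$, so the period is at most $6$.

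A cleverer choice of the two rational steps does not help either. Lift the additive term $\beta_t v_t k_t^\top$ to an extra coordinate; then every DeltaNet step is the identity plus a rank-one matrix. Hence the two-step map $M$ satisfies $\operatorname{rank}(M-I)\le 2$. On the span of an exactly periodic orbit, the eigenvalues of $M$ are roots of unity, and those different from $1$ are roots of a rational quadratic. So they have order $2,3,4$ or $6$. Consequently every exactly periodic, parity-gated layer-2 trajectory over $\mathbb{Q}$ has period dividing $24$, and, e.g., $t\bmod 10$ is unreachable this way.

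The paper's proof takes exactly the route you began with and then dropped. Layer~1 produces parity (your $H(2,e_1)$ sign flip). Layer~2 alternates two fixed Householder reflections $H(2,\kappa_1)$, $H(2,\kappa_2)$ with $v_t=\mathbf 0$; with mirrors at angle $\pi/m$, the 2-D state rotates by $2\pi/m$ every two steps. For a generic start vector, the $2m$ states form a free dihedral orbit, i.e.\ $2m$ distinct, well-separated points. A fixed MLP over linear readouts can then output the one-hot code at every position, which settles the readout issue you flagged.

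Your objection that $\kappa_2$ is irrational for most $m$ is correct if one insists on exact arithmetic in $\mathbb{Q}$. The paper treats the layer-2 state as continuous (real-valued) and does not address this. The rank argument above shows it cannot be repaired within exact parity-gated rational dynamics. But identifying the obstacle is not a proof: as written, your proposal establishes neither the real-parameter version (which you discarded) nor a rational one.
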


\begin{proof}
The construction mirrors a standard ``two-reflections-make-a-rotation'' argument.
Layer~1 produces the parity bit of $t$ (and can also detect $t=1$ via the BOS token).
Layer~2 maintains a $2$-dimensional continuous state that advances by a fixed angle every two steps:
it alternates between two Householder reflections of the form $I-2\kappa\kappa^\top$, which is exactly
$H(\beta,k)$ with $\beta=2$ and $k=\kappa$ (and $v_t=\mathbf{0}$).
The state therefore cycles through $2m$ distinct values, one for each residue class modulo $2m$.
A finite MLP on top of a bank of linear readouts can map these $2m$ discrete states to a one-hot encoding of
$\tilde t$. 
\end{proof}

\paragraph{Exact token buffering by column overwrite.}
We store the last $2m$ tokens by overwriting one designated column per step.

\begin{lemma}[Exact column overwrite]\label{lem:dnet-col-overwrite}
Fix $d$ and an index $\mathrm{dst}\in\{1,\dots,d\}$. If $\beta_t=1$ and $k_t=e_{\mathrm{dst}}$, then
\[
S_t \;=\; S_{t-1}(I-e_{\mathrm{dst}}e_{\mathrm{dst}}^\top) + v_t e_{\mathrm{dst}}^\top,
\]
so all columns of $S_{t-1}$ are unchanged except column $\mathrm{dst}$, which becomes exactly $v_t$.
\end{lemma}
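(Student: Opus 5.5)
This is a direct algebraic verification: substitute $\beta_t = 1$ and $k_t = e_{\mathrm{dst}}$ into the DeltaNet update \eqref{eq:deltanet-update}, then read off the resulting matrix one column at a time.

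\textbf{Step 1 (substitution).} With $\beta_t=1$ and $k_t=e_{\mathrm{dst}}$, the multiplicative factor becomes $H(1,e_{\mathrm{dst}})=I-e_{\mathrm{dst}}e_{\mathrm{dst}}^\top$, and the additive term becomes $v_t e_{\mathrm{dst}}^\top$. This gives the displayed identity immediately.

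\textbf{Step 2 (column reading).} For any column index $j$, compute $S_t e_j$.
\begin{itemize}
\item If $j\neq \mathrm{dst}$, then $e_{\mathrm{dst}}^\top e_j=0$. So $(I-e_{\mathrm{dst}}e_{\mathrm{dst}}^\top)e_j=e_j$ and $v_t e_{\mathrm{dst}}^\top e_j=\mathbf 0$, which gives $S_te_j=S_{t-1}e_j$.
\item If $j=\mathrm{dst}$, then $(I-e_{\mathrm{dst}}e_{\mathrm{dst}}^\top)e_{\mathrm{dst}}=\mathbf 0$ and $v_t e_{\mathrm{dst}}^\top e_{\mathrm{dst}}=v_t$, which gives $S_te_{\mathrm{dst}}=v_t$.
\end{itemize}
In words, right-multiplication by the projector $I-e_{\mathrm{dst}}e_{\mathrm{dst}}^\top$ zeroes column $\mathrm{dst}$ and leaves every other column unchanged. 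The rank-one term then writes $v_t$ into exactly that column.

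There is no real obstacle here. The only point to be careful about is the convention: the state is right-multiplied by $H$, so the projector acts on columns, not rows. I will also note that the overwrite is \emph{exact}, independent of the previous contents of column $\mathrm{dst}$, because $\beta_t=1$ makes $H$ an exact projector rather than a contraction. This exactness is what the token-buffering router relies on.
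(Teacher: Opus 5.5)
Your proposal is correct and follows essentially the same route as the paper: substitute $\beta_t=1$, $k_t=e_{\mathrm{dst}}$ into the update, then observe that right-multiplication by $I-e_{\mathrm{dst}}e_{\mathrm{dst}}^\top$ zeroes column $\mathrm{dst}$ and the rank-one term $v_te_{\mathrm{dst}}^\top$ writes $v_t$ into it. Your explicit column-by-column computation of $S_te_j$ simply spells out the verification the paper states in one line.
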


\begin{proof}
Right-multiplying by $(I-e_{\mathrm{dst}}e_{\mathrm{dst}}^\top)$ zeros column $\mathrm{dst}$ and leaves all other
columns unchanged; adding $v_t e_{\mathrm{dst}}^\top$ writes $v_t$ into that column.
\end{proof}

\paragraph{Buffer layout.}
Let $L:=2m$.
In the buffer layer, use a DeltaNet head dimension
\[
d_{\text{buf}} \;:=\; (|\Sigma|+1) + L.
\]
We interpret:
\begin{itemize}
\item the first $|\Sigma|+1$ coordinates as a one-hot token space (including padding/BOS as needed);
\item the last $L$ coordinates as buffer-slot selectors.
\end{itemize}
At time $t$, let $\tilde t\in\{1,\dots,L\}$ be the residue class from Lemma~\ref{lem:dnet-mod-2m}.
Write the current token into the $\tilde t$-th buffer slot by setting
\[
\beta_t=1,\qquad
k_t = e_{(|\Sigma|+1)+\tilde t},\qquad
v_t = e_{w_t}\in\K^{d_{\text{buf}}}.
\]
By Lemma~\ref{lem:dnet-col-overwrite}, column $(|\Sigma|+1)+\tilde t$ stores exactly the one-hot token vector $e_{w_t}$.
Using $L$ read heads with fixed queries $q^{(j)}=e_{(|\Sigma|+1)+j}$, the layer can read out all $L$ stored tokens
(each as a one-hot vector in the first $|\Sigma|+1$ coordinates) into the residual stream.

\paragraph{Finite lookup.}
Given the one-hot encoding of $\tilde t$ and the last $2m$ tokens (read from the buffer), the key space is finite.
Thus, an (exponentially wide) feedforward network can implement an arbitrary lookup table on this key and output:
(i) the next arithmetic-step parameters $(\beta,k)$ for Layer~4, and (ii) a readout vector used to decode the desired
scalar output.

\subsubsection{Arithmetic primitives realizable by multiplicative DeltaNet steps}

We now show that the multiplicative matrix $H(\beta,k)=I-\beta kk^\top$ can realize the linear operations needed to
apply arbitrary matrices.

\begin{lemma}[Coordinate scaling / clearing]\label{lem:dnet-scale}
Fix $j\in\{1,\dots,d\}$ and $s\in\K$. Let $k=e_j$ and $\beta=1-s$. Then
\[
H(\beta,e_j) \;=\; I-(1-s)e_je_j^\top
\]
scales coordinate $j$ by $s$ and leaves all other coordinates unchanged.
In particular, $s=0$ clears coordinate $j$.
\end{lemma}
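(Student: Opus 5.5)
The plan is to compute $H(\beta,e_j)$ directly and read off its action on the state. The statement is a one-line matrix identity, so a direct calculation suffices.

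First, substitute $k=e_j$ and $\beta=1-s$ into the definition \eqref{eq:H-def}. This gives $H(1-s,e_j)=I-(1-s)e_je_j^\top$. The matrix $e_je_j^\top$ has a single nonzero entry, a $1$ in position $(j,j)$. Hence $H$ is diagonal, with diagonal entries $1$ everywhere except at index $j$, where the entry is $1-(1-s)=s$. In other words, $H=\diag(1,\dots,1,s,1,\dots,1)$.

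Next, record the action on the state; I will do this for both multiplication conventions. For a row vector $r$ (the convention used throughout, since the DeltaNet state evolves as $S_t=S_{t-1}H$), we have $(rH)_i=r_i$ for $i\neq j$ and $(rH)_j=s\,r_j$. Equivalently, right-multiplying a matrix state $S_{t-1}$ by $H$ scales its $j$-th column by $s$ and leaves every other column unchanged. Because $H$ is diagonal, hence symmetric, the same statement holds for column vectors under left multiplication. Setting $s=0$ gives $\beta=1$, so coordinate (column) $j$ is cleared; this matches the $v_t=\mathbf 0$ special case of \Cref{lem:dnet-col-overwrite}.

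There is no real obstacle here. The one point worth flagging for later use is a mismatch with the earlier remark that the range of $\beta$ matters. The scaling $s$ is unrestricted in $\K=\Q$, so $\beta=1-s$ may lie outside $(0,2)$. This is harmless, because in this section $\beta_t$ is produced by the router's lookup table as an arbitrary rational, and I would note this explicitly.
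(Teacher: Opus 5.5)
Your proposal is correct and follows essentially the same direct computation as the paper, which expands $rH(\beta,e_j)=r-(1-s)(re_j)e_j^\top$ to read off $(rH)_j=s\,r_j$ and $(rH)_\ell=r_\ell$ for $\ell\neq j$; writing $H=\diag(1,\dots,1,s,1,\dots,1)$ is the matrix form of that calculation. Your side remark is also consistent with the paper: it later uses $\beta=1-s$ for arbitrary rational $s$, including values outside $(0,2)$, for example $\beta=1-\lambda$ in Lemma~\ref{lem:dnet-scaled-add}.
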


\begin{proof}
For any row vector $r\in\K^{1\times d}$,
\[
rH(\beta,e_j)=r-(1-s)(re_j)e_j^\top,
\]
so $(rH)_j=r_j-(1-s)r_j=sr_j$ and $(rH)_\ell=r_\ell$ for $\ell\neq j$.
\end{proof}

\begin{definition}[Unit transvection]\label{def:unit-transvection}
For $\mathrm{src}\neq\mathrm{dst}$ define the (column) unit transvection
\[
T(\mathrm{src}\to\mathrm{dst})
\;:=\;
I + e_{\mathrm{src}}e_{\mathrm{dst}}^\top.
\]
Right-multiplication by $T(\mathrm{src}\to\mathrm{dst})$ updates a row vector $r$ by
$r_{\mathrm{dst}}\leftarrow r_{\mathrm{dst}}+r_{\mathrm{src}}$ and leaves all other coordinates unchanged.
\end{definition}

\begin{lemma}[Unit transvection as three multiplicative DeltaNet steps]\label{lem:dnet-unit-transvection}
Assume $\tfrac12,\tfrac13\in\K$.
Fix $\mathrm{src}\neq\mathrm{dst}$ and define (in dimension $d$)
\[
u := e_{\mathrm{src}}+e_{\mathrm{dst}},
\qquad
w := e_{\mathrm{src}}+2e_{\mathrm{dst}}.
\]
Then
\[
H(2,u)\; H\!\left(\tfrac12,e_{\mathrm{src}}\right)\; H\!\left(\tfrac13,w\right)
\;=\;
T(\mathrm{src}\to\mathrm{dst})
\;=\;
I+e_{\mathrm{src}}e_{\mathrm{dst}}^\top.
\]
\end{lemma}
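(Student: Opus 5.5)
The plan is a direct matrix computation, reduced to a $2\times 2$ calculation. Every factor has the form $H(\beta,k)=I-\beta kk^\top$ with $k\in\mathrm{span}\{e_{\mathrm{src}},e_{\mathrm{dst}}\}$: indeed $u$, $e_{\mathrm{src}}$ and $w$ all lie in this span. Each factor therefore acts as the identity on the orthogonal complement of $V:=\mathrm{span}\{e_{\mathrm{src}},e_{\mathrm{dst}}\}$ and maps $V$ into itself. The same holds for $T(\mathrm{src}\to\mathrm{dst})=I+e_{\mathrm{src}}e_{\mathrm{dst}}^\top$. So, after reordering coordinates so that $\mathrm{src},\mathrm{dst}$ come first, both sides are block diagonal with an identity block on the remaining $d-2$ coordinates. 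It then suffices to compare the $2\times 2$ blocks in the ordered basis $(e_{\mathrm{src}},e_{\mathrm{dst}})$. The only field facts used are that $\tfrac12$ and $\tfrac13$ exist in $\K$, as assumed.

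Next I compute the three $2\times 2$ blocks. For $u=(1,1)^\top$ we have $uu^\top=\begin{bmatrix}1&1\\1&1\end{bmatrix}$, so $H(2,u)$ restricts to $\begin{bmatrix}-1&-2\\-2&-1\end{bmatrix}$. The middle factor $H(\tfrac12,e_{\mathrm{src}})$ restricts to $\mathrm{diag}(\tfrac12,1)$; this is also a special case of \Cref{lem:dnet-scale} with $s=\tfrac12$. For $w=(1,2)^\top$ we have $ww^\top=\begin{bmatrix}1&2\\2&4\end{bmatrix}$, so $H(\tfrac13,w)$ restricts to $\begin{bmatrix}2/3&-2/3\\-2/3&-1/3\end{bmatrix}$.

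Finally I multiply in the stated order. The first two factors give $\begin{bmatrix}-1/2&-2\\-1&-1\end{bmatrix}$. Right-multiplying by the third block gives first row $(-\tfrac13+\tfrac43,\ \tfrac13+\tfrac23)=(1,1)$ and second row $(-\tfrac23+\tfrac23,\ \tfrac23+\tfrac13)=(0,1)$. The result is $\begin{bmatrix}1&1\\0&1\end{bmatrix}$, which is exactly the restriction of $I+e_{\mathrm{src}}e_{\mathrm{dst}}^\top$ to $V$ and proves the identity. There is no real obstacle here. The only points needing care are keeping the multiplication order as written (the factors do not commute) and using the convention that $e_{\mathrm{src}}e_{\mathrm{dst}}^\top$ puts its single $1$ in row $\mathrm{src}$, column $\mathrm{dst}$. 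That convention is consistent with \Cref{def:unit-transvection}'s description of the right action $r_{\mathrm{dst}}\leftarrow r_{\mathrm{dst}}+r_{\mathrm{src}}$.
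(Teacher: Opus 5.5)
Your proposal is correct and takes the same route as the paper: reduce to the $2\times 2$ block on $\mathrm{span}\{e_{\mathrm{src}},e_{\mathrm{dst}}\}$, then multiply the three restricted matrices in the given order to obtain $\begin{bmatrix}1&1\\0&1\end{bmatrix}$. Your version adds a short justification of the block reduction and writes out the intermediate product $\begin{bmatrix}-1/2&-2\\-1&-1\end{bmatrix}$; otherwise it is the paper's argument, and your arithmetic checks out.
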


\begin{proof}
All three factors act as the identity outside the 2D subspace spanned by
$e_{\mathrm{src}},e_{\mathrm{dst}}$, so it suffices to verify the $2\times 2$ restriction.
In the ordered basis $(e_{\mathrm{src}},e_{\mathrm{dst}})$, we have:
\[
H(2,(1,1))=\begin{bmatrix}-1&-2\\-2&-1\end{bmatrix},\quad
H(\tfrac12,(1,0))=\begin{bmatrix}\tfrac12&0\\0&1\end{bmatrix},\quad
H(\tfrac13,(1,2))=\begin{bmatrix}\tfrac23&-\tfrac23\\[2pt]-\tfrac23&-\tfrac13\end{bmatrix}.
\]
Multiplying gives
\[
\begin{bmatrix}-1&-2\\-2&-1\end{bmatrix}
\begin{bmatrix}\tfrac12&0\\0&1\end{bmatrix}
\begin{bmatrix}\tfrac23&-\tfrac23\\[2pt]-\tfrac23&-\tfrac13\end{bmatrix}
=
\begin{bmatrix}1&1\\0&1\end{bmatrix}
=
I+e_{\mathrm{src}}e_{\mathrm{dst}}^\top.
\]
Embedding back into the full $d$-dimensional space proves the claim.
\end{proof}

\begin{lemma}[Scaled add via a temp register]\label{lem:dnet-scaled-add}
Assume $\tfrac12,\tfrac13\in\K$.
Fix distinct indices $\mathrm{src},\mathrm{dst},\mathrm{tmp}$ and a scalar $\lambda\in\K$.
There exists a length-$8$ sequence of multiplicative matrices of the form $H(\beta,k)$ such that for every row vector
$r$ with $r_{\mathrm{tmp}}=0$, the resulting row vector $r'$ satisfies:
\[
r'_{\mathrm{dst}} = r_{\mathrm{dst}} + \lambda r_{\mathrm{src}},
\qquad
r'_j=r_j \ (j\neq \mathrm{dst}),
\qquad
r'_{\mathrm{tmp}}=0.
\]
\end{lemma}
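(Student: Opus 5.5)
The plan is to route the scaled copy of $r_{\mathrm{src}}$ through the temporary coordinate using only the primitives already established: the unit transvection of Lemma~\ref{lem:dnet-unit-transvection}, which costs three $H$-steps, and the coordinate scaling or clearing of Lemma~\ref{lem:dnet-scale}, which costs one $H$-step. Because row vectors are right-multiplied, the product $G_1G_2\cdots G_8$ applies the factors in left-to-right order. So I will describe the sequence as four consecutive stages acting on $r$ and track the coordinates $\mathrm{src},\mathrm{dst},\mathrm{tmp}$ after each stage.

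\emph{Stage 1 (three steps):} apply $T(\mathrm{src}\to\mathrm{tmp})$, realized as $H(2,e_{\mathrm{src}}+e_{\mathrm{tmp}})\,H(\tfrac12,e_{\mathrm{src}})\,H(\tfrac13,e_{\mathrm{src}}+2e_{\mathrm{tmp}})$ by Lemma~\ref{lem:dnet-unit-transvection}. Since $r_{\mathrm{tmp}}=0$ by hypothesis, this gives $r_{\mathrm{tmp}}\leftarrow r_{\mathrm{src}}$ and leaves every other coordinate unchanged.

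\emph{Stage 2 (one step):} apply $H(1-\lambda,e_{\mathrm{tmp}})$. By Lemma~\ref{lem:dnet-scale} this makes $r_{\mathrm{tmp}}=\lambda r_{\mathrm{src}}$, and $\beta=1-\lambda$ lies in $\K$.

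\emph{Stage 3 (three steps):} apply $T(\mathrm{tmp}\to\mathrm{dst})$, again via Lemma~\ref{lem:dnet-unit-transvection} with $u=e_{\mathrm{tmp}}+e_{\mathrm{dst}}$ and $w=e_{\mathrm{tmp}}+2e_{\mathrm{dst}}$. This gives $r_{\mathrm{dst}}\leftarrow r_{\mathrm{dst}}+\lambda r_{\mathrm{src}}$.

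\emph{Stage 4 (one step):} apply $H(1,e_{\mathrm{tmp}})$. By Lemma~\ref{lem:dnet-scale} with $s=0$ this clears $\mathrm{tmp}$ back to $0$.

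The total length is $3+1+3+1=8$. Each factor acts as the identity outside the coordinates it names, so every coordinate other than $\mathrm{dst}$ and $\mathrm{tmp}$ is untouched, and $r_{\mathrm{src}}$ itself is never modified. This gives exactly $r'_{\mathrm{dst}}=r_{\mathrm{dst}}+\lambda r_{\mathrm{src}}$, $r'_{\mathrm{tmp}}=0$, and $r'_j=r_j$ otherwise.

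There is no real obstacle. The points needing care are the order of composition, since the row-vector convention means left-to-right, and the roles of the two hypotheses. The hypothesis $r_{\mathrm{tmp}}=0$ is what makes Stage~1 a copy rather than an add. The assumption $\tfrac12,\tfrac13\in\K$ is needed only to invoke Lemma~\ref{lem:dnet-unit-transvection}. Distinctness of $\mathrm{src},\mathrm{dst},\mathrm{tmp}$ is required so that each transvection has $\mathrm{src}\neq\mathrm{dst}$ and so that Stages~2 and~4 do not disturb $\mathrm{src}$ or $\mathrm{dst}$.
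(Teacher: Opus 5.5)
Your proposal is correct and follows the paper's proof essentially verbatim: the same four stages (copy $\mathrm{src}\to\mathrm{tmp}$ by a three-step transvection, scale by $H(1-\lambda,e_{\mathrm{tmp}})$, add $\mathrm{tmp}\to\mathrm{dst}$ by a second transvection, clear with $H(1,e_{\mathrm{tmp}})$), for $3+1+3+1=8$ steps. One wording point: the individual factors inside each transvection do temporarily change $r_{\mathrm{src}}$ (or $r_{\mathrm{tmp}}$), so ``$r_{\mathrm{src}}$ is never modified'' holds only at stage granularity, which is all your stage-by-stage tracking needs.
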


\begin{proof}
Use the following four-stage program, each stage implementable by multiplicative DeltaNet steps:
\begin{enumerate}
\item Copy $\mathrm{src}$ into $\mathrm{tmp}$ by a unit transvection
$T(\mathrm{src}\to\mathrm{tmp})$, implemented using $3$ steps via Lemma~\ref{lem:dnet-unit-transvection}.
\item Scale $\mathrm{tmp}$ by $\lambda$ using one coordinate scaling:
apply $H(1-\lambda,e_{\mathrm{tmp}})$ (Lemma~\ref{lem:dnet-scale}).
\item Add $\mathrm{tmp}$ into $\mathrm{dst}$ by a unit transvection
$T(\mathrm{tmp}\to\mathrm{dst})$, implemented using $3$ steps via Lemma~\ref{lem:dnet-unit-transvection}.
\item Clear $\mathrm{tmp}$ by scaling it by $0$: apply $H(1,e_{\mathrm{tmp}})$ (Lemma~\ref{lem:dnet-scale}).
\end{enumerate}
Starting from $r_{\mathrm{tmp}}=0$, the net effect is
$r_{\mathrm{dst}}\leftarrow r_{\mathrm{dst}}+\lambda r_{\mathrm{src}}$ and $\mathrm{tmp}$ returns to $0$.
The total number of multiplicative steps is $3+1+3+1=8$.
\end{proof}

\paragraph{Applying an arbitrary matrix with scratch coordinates.}
We now show that any $n\times n$ matrix can be applied to a row vector using $O(n^2)$ multiplicative DeltaNet steps,
given $n$ scratch coordinates and one additional temp coordinate.

\begin{lemma}[Apply any $n\times n$ matrix with scratch in $O(n^2)$ multiplicative steps]
\label{lem:dnet-apply-matrix}
Assume $\tfrac12,\tfrac13\in\K$.
Let $P\in\K^{n\times n}$.
Consider row vectors in dimension $2n+1$ written as
\[
[x\mid s\mid t],\qquad x,s\in\K^{1\times n},\ t\in\K.
\]
There exists an explicit sequence of
\[
m \;:=\; 8n^2+5n+1
\]
matrices $G_1(P),\dots,G_m(P)\in\K^{(2n+1)\times(2n+1)}$, each of the form $H(\beta,k)$, such that for all
$x,s,t$,
\[
[x\mid s\mid t]\;G_1(P)\cdots G_m(P)
\;=\;
[xP\mid xP\mid 0].
\]
\end{lemma}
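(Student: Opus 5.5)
The plan is to build the sequence $G_1(P),\dots,G_m(P)$ by chaining the primitives already established: coordinate scaling/clearing (Lemma~\ref{lem:dnet-scale}, one step each), unit transvections (Lemma~\ref{lem:dnet-unit-transvection}, three steps each), and scaled adds through a temp register (Lemma~\ref{lem:dnet-scaled-add}, eight steps each). Every primitive is a product of matrices of the form $H(\beta,k)$, so their concatenation is too. The work is to order the primitives so that $[x\mid s\mid t]$ is carried to $[xP\mid xP\mid 0]$ for \emph{arbitrary} initial $s,t$. The same ordering should also give exactly $8n^2+5n+1$ factors.

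\textbf{Stage 0 (sanitize scratch and temp, $n+1$ steps).} Since $s$ and $t$ are arbitrary, I first clear them. For each $j\in\{1,\dots,n\}$ I apply $H(1,\e_{n+j})$, and then I apply $H(1,\e_{2n+1})$. By Lemma~\ref{lem:dnet-scale}, this yields $[x\mid \mathbf 0\mid 0]$. Clearing the temp coordinate is essential, because Lemma~\ref{lem:dnet-scaled-add} needs $r_{\mathrm{tmp}}=0$.

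\textbf{Stage 1 (accumulate $xP$ into scratch, $8n^2$ steps).} For every pair $(i,j)$, in any fixed order, I apply the scaled add of Lemma~\ref{lem:dnet-scaled-add} with $\mathrm{src}=i$, $\mathrm{dst}=n+j$, $\mathrm{tmp}=2n+1$ and $\lambda=P_{i,j}$. Each such step leaves every coordinate other than $n+j$ unchanged, including all of $x$, and returns the temp coordinate to $0$. So the hypothesis of Lemma~\ref{lem:dnet-scaled-add} is restored before the next call. It follows that the sources $x_i$ are never modified, and after all $n^2$ adds the state is $[x\mid xP\mid 0]$, since $(xP)_j=\sum_i x_iP_{i,j}$.

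\textbf{Stage 2 (copy back, $4n$ steps).} For each $j$ I clear $x_j$ with $H(1,\e_j)$ ($n$ steps in total). Then for each $j$ I apply $T(n+j\to j)$ via Lemma~\ref{lem:dnet-unit-transvection} ($3n$ steps in total). This sets $x_j\leftarrow 0+s_j=(xP)_j$ and leaves the scratch untouched. The clears must all happen after Stage 1 is complete, because Stage 1 reads the $x_i$ as sources.

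The step count is $(n+1)+8n^2+n+3n=8n^2+5n+1=m$, matching the statement. I expect the main obstacle to be the bookkeeping rather than any single computation. The two points to check are that each scaled add truly fixes all non-destination coordinates, including the other $x_i$ still needed as sources, and that the temp register is zero at the start of every call. I would handle both by stating an explicit invariant after each stage, namely that the temp coordinate is $0$ and the $x$-block is unchanged until Stage 2, and verifying it by induction over the ordered list of $(i,j)$ pairs.
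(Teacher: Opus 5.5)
Your proposal is correct and follows the paper's proof essentially step for step. The paper likewise clears scratch and temp ($n+1$ steps), accumulates $xP$ via $n^2$ scaled adds through the temp register ($8n^2$ steps), clears the main block ($n$ steps), and copies back with unit transvections ($3n$ steps); your Stage 2 simply merges the paper's last two phases, and the count $8n^2+5n+1$ is the same.
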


\begin{proof}
Let the coordinates be indexed as:
\[
\text{main: }1,\dots,n,\quad
\text{scratch: }n+1,\dots,2n,\quad
\text{temp: }2n+1.
\]

\paragraph{Phase 1 (clear scratch and temp).}
For each scratch coordinate $n+j$ apply a clear:
\[
H(1,e_{n+j}) \quad (j=1,\dots,n),
\]
and clear temp by $H(1,e_{2n+1})$.
This uses $n+1$ steps and yields $[x\mid 0\mid 0]$.

\paragraph{Phase 2 (accumulate $xP$ into scratch).}
For each destination $j\in\{1,\dots,n\}$ and source $i\in\{1,\dots,n\}$, update scratch coordinate $n+j$ by
\[
(\text{scratch }j)\ \leftarrow\ (\text{scratch }j) + P_{i,j}\cdot(\text{main }i)
\]
using Lemma~\ref{lem:dnet-scaled-add} with
\[
\mathrm{src}=i,\qquad \mathrm{dst}=n+j,\qquad \mathrm{tmp}=2n+1,\qquad \lambda=P_{i,j}.
\]
Each scaled add costs $8$ multiplicative steps and leaves temp equal to $0$.
After all $n^2$ updates, scratch equals $xP$ while main remains $x$.

\paragraph{Phase 3 (clear main).}
Clear the main coordinates by applying $H(1,e_i)$ for $i=1,\dots,n$.
This uses $n$ steps and yields $[0\mid xP\mid 0]$.

\paragraph{Phase 4 (copy scratch back to main).}
For each $j=1,\dots,n$, add scratch coordinate $n+j$ into main coordinate $j$ using a unit transvection
$T(n+j\to j)$, implemented by $3$ multiplicative steps via Lemma~\ref{lem:dnet-unit-transvection}.
Since main is $0$, after these $3n$ steps we have main $=xP$ and scratch remains $xP$.

Summing the step counts gives
\[
(n+1) + 8n^2 + n + 3n = 8n^2+5n+1.
\]
\end{proof}

\subsubsection{Simulation theorem: a $4$-layer DeltaNet simulates any WFA}

\begin{theorem}[DeltaNet simulates weighted finite automata]\label{thm:dnet-sim-wfa}
Fix $\K=\Q$.
Let $\mathcal{A}=(Q,\Sigma,\{M_\sigma\},\alpha,\omega)$ be an $n$-state WFA over $\K$
(Definition~\ref{def:wfa}), and let $w=w_1\cdots w_T$ be an input word.
Then there exists a $4$-layer stacked DeltaNet whose scalar output at every position $t$ equals
\[
\alpha\,M_{w_1}M_{w_2}\cdots M_{w_t}\,\omega \in \K.
\]
In particular, at $t=T$ the output equals $f_{\mathcal{A}}(w)$.
\end{theorem}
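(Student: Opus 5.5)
The plan is to mirror the proof of Theorem~\ref{thm:rwkv-sim-wfa}: a 3-layer router followed by a 1-layer multiplicative arithmetic layer. The router's block length is now $m := 8n^2+5n+1$, matching the number of multiplicative steps in Lemma~\ref{lem:dnet-apply-matrix}, and the arithmetic dimension is $2n+1$. Pad with $\bot$ and set $M_\bot := I$. Write $t=\ell m+\tau$ and let $\tilde w^{(\ell)}$ be the previous length-$m$ block. For each possible block string $\tilde w$, fix offline the factorization $G_{\tilde w,1},\dots,G_{\tilde w,m}$ of $P(\tilde w)=M_{\tilde w_1}\cdots M_{\tilde w_m}$ from Lemma~\ref{lem:dnet-apply-matrix}. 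Each factor has the form $H(\beta,k)$, so it is directly a DeltaNet multiplicative step with $v_t=\mathbf 0$.

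\textbf{Router.} Layers 1--2 compute $t \bmod 2m$ by Lemma~\ref{lem:dnet-mod-2m}. Layer 3 buffers the last $2m$ tokens exactly by column overwrite (Lemma~\ref{lem:dnet-col-overwrite}) and reads them out with $2m$ fixed-query heads. A finite lookup MLP on the key $(t\bmod 2m, w_t,\dots,w_{t-2m+1})$ then outputs two things:
\begin{enumerate}
\item the parameters $(\beta_t,k_t)$ of $G_{\tilde w^{(\ell-1)},\tau}$;
\item a completion vector
\[
\hat\omega_t := \Bigl(\prod_{i=\tau+1}^m G_{\tilde w^{(\ell-1)},i}\Bigr)\begin{bmatrix}R_t\omega\\ \mathbf 0\\ 0\end{bmatrix},
\]
where $R_t$ is the within-block prefix product.
\end{enumerate}
Both are functions of the finite key, so the lookup table exists.

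\textbf{Arithmetic layer.} Layer 4 keeps a row $p_t$ of its state, which I read out with a fixed query $e_1$ via $q_t^\top S_{t-1}$ (modulo the off-by-one indexing of the readout). On the BOS token, initialize it with one additive write. Using $\beta_1=1$, $k_1=e_j$, $v_1=\alpha_j e_1$ loads one coordinate per step. Since only one column can be written per step, I will either spend the first $2n+1$ padding positions loading $[\alpha\mid\alpha\mid 0]$, or load $\alpha$ via a rank-one additive term. The cleaner route is to extend the padding so that initialization steps happen before block $0$ ends. After that, set $v_t=\mathbf 0$, so $p_t = p_{t-1}H(\beta_t,k_t)$.

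\textbf{Correctness.} By induction, $p_t$ equals $[\alpha\mid\alpha\mid 0]$ times the completed block factorizations times the first $\tau$ factors of the current one. The full-block products act as $[x\mid s\mid t']\mapsto[xP\mid xP\mid 0]$, independently of the scratch contents. Hence $p_t\hat\omega_t=\alpha M_{w_1}\cdots M_{w_t}\omega$ at every $t$, and the output head computes this bilinear product of the state row and the router-supplied vector.

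\textbf{Main obstacle.} The delicate step is the induction, specifically the partial-block completion. The remaining factors $\prod_{i>\tau}G_i$ applied to $[x'\mid s'\mid t'']$ must yield $xP$ in the main coordinates regardless of the intermediate scratch and temp values. This holds because Lemma~\ref{lem:dnet-apply-matrix} clears scratch and temp first, but only for the final product. For intermediate $\tau$, I will verify that the remaining factors, applied to the actual partially updated state reached by the first $\tau$ factors, still produce $[xP\mid xP\mid 0]$. That is true because the full composition does so for \emph{all} inputs, and $\hat\omega_t$ is defined exactly as that composition applied to the column vector, so no invariance argument is needed beyond associativity.

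The remaining care is the BOS/initialization bookkeeping and checking that $\beta=2$ reflections (Lemma~\ref{lem:dnet-mod-2m}) together with $\tfrac12,\tfrac13\in\Q$ are within the allowed $\beta$ range.
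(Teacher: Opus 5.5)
Your construction is the paper's: the same block length $m=8n^2+5n+1$, offline factorization via Lemma~\ref{lem:dnet-apply-matrix}, router, and completion vector $\hat\omega_t$. Your associativity argument for partial blocks is exactly the paper's telescoping step. However, two steps fail as you have written them.

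The first is the readout. You extract the state row with the fixed query $e_1$ and then let ``the output head'' form $p_t\hat\omega_t$. In this architecture the final output is a fixed linear map of the residual stream. A ReLU feedforward sublayer has only finitely many linear pieces. So neither can realize $(p,\text{key})\mapsto p\cdot\hat\omega(\text{key})$ exactly once $p_t$ is unbounded. The paper instead computes the product inside the head. Layer~3 writes $\hat\omega_t$ into the residual stream, and layer~4 uses it as its query, $q_t:=\hat\omega_t$, which is a linear projection of its input. Since $S_t=e_1p_t$ has only its first row nonzero, the readout $S_tq_t=e_1\,(p_t\hat\omega_t)$ already carries the answer in its first coordinate.

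The second is initialization. Your premise that only one column can be written per step is false. The additive term $\beta_tv_tk_t^\top$ is an arbitrary rank-one matrix, and keys need not be one-hot. So from $S_0=0$, the choice $\beta_1=1$, $v_1=e_1$, $k_1=p_0^\top$ gives $S_1=e_1p_0$ in a single step. This is the paper's initialization, the option you mention only in passing. The route you call cleaner is not available. Positions $t\le 0$ exist only as padding inside the router key and are never processed. Spending real positions on one-coordinate writes instead makes the outputs at those positions wrong, since they see a partially loaded $\alpha$, and it displaces factors of block~$0$'s schedule.

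Finally, mind the choice of initial row. Suppose the initializing step is also the first scheduled step, as in the paper's convention with BOS at $t=1$. Its multiplicative part then acts on $S_0=0$, so $G_{\bot^m,1}$ is effectively skipped, and your induction base needs $p_0G_{\bot^m,1}=p_0$. For the paper's $p_0=[\alpha\mid0\mid0]$ this holds, because $G_{\bot^m,1}=H(1,e_{n+1})$ merely clears an already-zero scratch coordinate. For your $[\alpha\mid\alpha\mid0]$ it fails. Scratch coordinate $n+1$ keeps $\alpha_1$, block~$0$ ends with first main coordinate $2\alpha_1$, and this in general corrupts every output. Either give BOS its own unscheduled step or, more simply, initialize with zero scratch as the paper does.
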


\begin{proof}[Proof sketch; full proof follows later.]
We use the same finite-window \emph{router + arithmetic} template as in Theorem~\ref{thm:rwkv-sim-wfa} (compute $t\bmod 2m$, buffer the last $2m$ tokens, and look up the next arithmetic step), but implement these router components using DeltaNet primitives.

The arithmetic layer now uses only multiplicative DeltaNet steps $H(\beta,k)=I-\beta kk^\top$
(Definition~\ref{def:deltanet-head-main}) over $\K=\Q$, which guarantees
$\tfrac12,\tfrac13\in\K$. 
Lemma~\ref{lem:dnet-scale} shows that $H(\beta,k)$ can implement coordinate scalings, and Lemma~\ref{lem:dnet-unit-transvection} shows that constant-length products of such steps implement unit transvections $r_{\mathrm{dst}}\leftarrow r_{\mathrm{dst}}+r_{\mathrm{src}}$. Combining these primitives, Lemma~\ref{lem:dnet-apply-matrix} gives an explicit program applying an arbitrary $n\times n$ matrix in $m=8n^2+5n+1$ multiplicative steps (using scratch space and one temporary coordinate). The router streams the corresponding $(\beta_t,k_t)$ parameters to the arithmetic layer exactly as in
Theorem~\ref{thm:rwkv-sim-wfa}, and the same completion-vector decoding yields $f_{\mathcal{A}}(w)$.
\end{proof}

\begin{proof}
Let
\[
m \;:=\; 8n^2+5n+1
\]
be the step budget from Lemma~\ref{lem:dnet-apply-matrix}.
For each position $t\ge 1$, write $t=\ell m+\tau$ with $\ell=\lfloor (t-1)/m\rfloor\ge 0$ and $1\le\tau\le m$.

\paragraph{Offline factorization of block products.}
Introduce a padding symbol $\bot$ and set $w_t=\bot$ for $t\le 0$ with $M_\bot:=I$.
For any length-$m$ block string $\tilde w=\tilde w_1\cdots\tilde w_m\in(\Sigma\cup\{\bot\})^m$, define the block product
\[
P(\tilde w)\;:=\;M_{\tilde w_1}M_{\tilde w_2}\cdots M_{\tilde w_m}\in\K^{n\times n}.
\]
Apply Lemma~\ref{lem:dnet-apply-matrix} to $P(\tilde w)$ to obtain a fixed sequence
\[
G_{\tilde w,1},\dots,G_{\tilde w,m}\in\K^{(2n+1)\times(2n+1)},\qquad
G_{\tilde w,\tau}=H(\beta_{\tilde w,\tau},k_{\tilde w,\tau}),
\]
such that for all $[x\mid s\mid t]$,
\[
[x\mid s\mid t]\;\prod_{\tau=1}^m G_{\tilde w,\tau}
\;=\;
[xP(\tilde w)\mid xP(\tilde w)\mid 0].
\]
Fix one such factorization for each possible $\tilde w$.

\paragraph{Layer 1 (parity / first-position features).}
Layer~1 produces a finite-valued encoding that distinguishes the first position (via BOS) and encodes parity.
This can be done by a 1D multiplicative recurrence that toggles sign each step, together with the BOS token.

\paragraph{Layer 2 (compute $t\bmod 2m$).}
By Lemma~\ref{lem:dnet-mod-2m}, Layer~2 outputs a one-hot encoding of $\tilde t:=t\bmod 2m$ in the residual stream.

\paragraph{Layer 3 (buffer last $2m$ tokens and perform a finite lookup).}
Set $L:=2m$ and build a $2m$-slot cyclic token buffer using Lemma~\ref{lem:dnet-col-overwrite}, with the
buffer layout described earlier.
At time $t$, the buffer contains the last $2m$ tokens
\[
(w_t,w_{t-1},\dots,w_{t-(2m-1)}),
\]
and Layer~2 provides $\tilde t=t\bmod 2m$.
Define a lookup table $\Xi$ whose key is
\[
\Bigl(\tilde t,\; w_t,w_{t-1},\dots,w_{t-(2m-1)}\Bigr),
\]
and whose outputs are:
\begin{enumerate}
\item the arithmetic parameters $(\beta^{(4)}_t,k^{(4)}_t)$ specifying the matrix
$G_{\tilde w^{(\ell-1)},\tau}$ to apply in Layer~4 (defined below);
\item a completion/readout vector $\hat\omega_t\in\K^{2n+1}$ (also defined below).
\end{enumerate}
Since the key space is finite, an MLP can implement $\Xi$ exactly and write these outputs into the residual stream.

\paragraph{Layer 4 (arithmetic: purely multiplicative DeltaNet simulation).}
Layer~4 uses one DeltaNet head of dimension $2n+1$.
We maintain only the \emph{first row} of $S_t$ as the arithmetic row vector.

\emph{Initialization.}
Let $p_0=[\alpha\mid 0\mid 0]\in\K^{1\times(2n+1)}$.
On the first token, set $S_0=0$ and write $S_1=e_1p_0$ by choosing
\[
\beta_1=1,\qquad v_1=e_1,\qquad k_1=p_0^\top.
\]
Then $S_1=\beta_1 v_1 k_1^\top=e_1p_0$.

\emph{Updates.}
For all $t\ge 2$, set $v_t=\mathbf{0}$.
Let $\tilde w^{(\ell-1)}:=w_{(\ell-1)m+1}\cdots w_{\ell m}$ be the previous \emph{completed} block string
(using $\bot$ padding when $\ell=0$), and let $\tau$ be the within-block offset of $t$.
Layer~3 outputs $(\beta^{(4)}_t,k^{(4)}_t)$ so that
\[
H(\beta^{(4)}_t,k^{(4)}_t)\;=\;G_{\tilde w^{(\ell-1)},\tau}.
\]
Thus, for $t\ge 2$ we have $S_t=S_{t-1}G_{\tilde w^{(\ell-1)},\tau}$.
Let $p_t$ denote the first row of $S_t$; then $p_t=p_{t-1}G_{\tilde w^{(\ell-1)},\tau}$.

\paragraph{Correctness via a completion vector.}
Unrolling the recurrence, if $t=\ell m+\tau$ then
\[
p_t
=
[\alpha\mid 0\mid 0]\;
\Bigl(\prod_{b=0}^{\ell-2}\prod_{s=1}^{m}G_{\tilde w^{(b)},s}\Bigr)\;
\Bigl(\prod_{s=1}^{\tau}G_{\tilde w^{(\ell-1)},s}\Bigr),
\]
with the convention that empty products are identity.

Define the current-block prefix product
\[
R_t := M_{w_{\ell m+1}}\cdots M_{w_t}\in\K^{n\times n},
\qquad
v_t := R_t\,\omega\in\K^{n\times 1}.
\]
Define the completion vector
\[
\hat\omega_t
:=
\left(\prod_{s=\tau+1}^{m}G_{\tilde w^{(\ell-1)},s}\right)
\begin{bmatrix} v_t\\ \mathbf{0}\\ 0\end{bmatrix}
\in\K^{2n+1}.
\]
This depends only on the router key (position mod $2m$ and the last $2m$ tokens), hence can be output by $\Xi$.

Now compute the scalar $p_t\hat\omega_t$:
multiplying $p_t$ by the remaining factors $G_{\tilde w^{(\ell-1)},\tau+1}\cdots G_{\tilde w^{(\ell-1)},m}$
completes the application of the previous block product.
By Lemma~\ref{lem:dnet-apply-matrix}, the full block product
$\prod_{s=1}^m G_{\tilde w^{(\ell-1)},s}$ maps the first $n$ coordinates $x$ to $xP(\tilde w^{(\ell-1)})$
(and duplicates it into scratch), so telescoping over completed blocks yields
\[
p_t\hat\omega_t \;=\; \alpha\,M_{w_1}\cdots M_{w_t}\,\omega.
\]

\paragraph{Readout.}
At position $t$, set a DeltaNet query vector $q_t:=\hat\omega_t$ (routed from Layer~3) and read $o_t=S_t q_t$.
Because $S_t=e_1p_t$ has only the first row nonzero, we have
$o_t=e_1\cdot(p_t\hat\omega_t)$.
An output head returns the first coordinate, which equals $\alpha M_{w_1}\cdots M_{w_t}\omega$.
\end{proof}

\subsection{Iterated $3\times 3$ Matrix Multiplication with DeltaNet}
\label{sec:deltanet-3x3-mm}

We now show that DeltaNet can solve the streaming iterated $3\times 3$ matrix multiplication problem
(Definition~\ref{def:imm-3x3}) over a bounded-precision modulus, using the same router+arithmetic template as
Theorem~\ref{thm:dnet-sim-wfa}. The main difference is that the arithmetic state dimension is constant
($n=9$ after vectorization), so the required per-superblock step budget is a constant.

\begin{theorem}[DeltaNet computes iterated $3\times 3$ matrix products over $\Q$]
\label{thm:dnet-3x3-mm}
Fix $\K = \Q$.
There exists a $4$-layer stacked DeltaNet such that, given the stream encoding
$A^{(1)},\dots,A^{(N)}\in\Q^{3\times 3}$ (Definition~\ref{def:imm-3x3}),
the network outputs the $9$ entries of
\[
P_N := A^{(1)}A^{(2)}\cdots A^{(N)} \in \Q^{3\times 3}
\]
at the final position $t=9N$.
\end{theorem}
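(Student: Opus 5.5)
The plan is to run the RWKV-7 argument of \Cref{thm:rwkv-3x3-mm} again, but with the DeltaNet building blocks from \Cref{sec:deltanet-wfa} in place of the RWKV ones. There is one simplification compared with \Cref{thm:dnet-sim-wfa}. We do not factor the product of an entire block of WFA matrices. Instead, each length-$9$ input block already encodes one matrix $A^{(\ell)}$. By \eqref{eq:vec-right-mult}, the update $P \mapsto PA^{(\ell)}$ is the linear map $x \mapsto x\,B(A^{(\ell)})$ on the row-major vectorization $x = \operatorname{vec}(P) \in \Q^{1\times 9}$. We apply this map with \Cref{lem:dnet-apply-matrix} at $n=9$. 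The arithmetic state is $[x\mid s\mid t]$ of dimension $19$, and one application takes $m_0 := 8\cdot 81+5\cdot 9+1 = 694$ multiplicative steps $H(\beta,k)$.

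The difficulty is that a block only supplies $9$ tokens, while one matrix application needs $694$ steps. I would fix this with superblocks. Group $r := \lceil 694/9\rceil$ consecutive input matrices into one superblock of length $m := 9r$. The superblock then supplies at least $694$ tokens and so has enough steps; any leftover steps are filled with $H(0,\mathbf 0)=I$. The matrix applied during superblock $\ell$ is the product $Q^{(\ell-1)}$ of the $r$ matrices in the \emph{previous} superblock. A cheaper option would be a factorization of $B(Q)$ using only $O(1)$ steps per token that exploits the block-diagonal structure (three independent $3\times3$ applications). Since all we need is constant dimension, I would keep the crude constant budget.

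The layers mirror the proof of \Cref{thm:dnet-sim-wfa}, with $2m$ now a constant. Layers~1--2 compute $t \bmod 2m$ as in \Cref{lem:dnet-mod-2m}. Layer~3 keeps the last $2m$ tokens in a cyclic buffer using \Cref{lem:dnet-col-overwrite}. Layer~3 also evaluates a finite lookup table on the key $(t\bmod 2m, w_t,\dots,w_{t-2m+1})$. This key determines the within-superblock offset $\tau$ and the entries of every matrix in the previous superblock, so it determines $Q^{(\ell-1)}$. The table's first output is $(\beta_t,k_t)$ realizing the factor $G_{\tau}(B(Q^{(\ell-1)}))$ from \Cref{lem:dnet-apply-matrix}. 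Its second output is a completion readout. The input entries lie in $\{-1,0,1\}$ (or, more generally, in a finite token alphabet), so the key space is finite and an MLP can implement the table exactly. Layer~4 stores the row state $p_t$ in the first row of $S_t$. It is initialized on the first token to $[\operatorname{vec}(I_3)\mid \mathbf 0\mid 0]$ by setting $\beta_1=1$, $v_1=\e_1$, $k_1=p_0^\top$, and afterwards evolves purely multiplicatively with $v_t=\mathbf 0$. The padding superblock $0$ encodes $I_3$.

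For correctness, I would prove by induction over superblocks that, after superblock $\ell$, the main coordinates hold $\operatorname{vec}(Q^{(0)}\cdots Q^{(\ell-1)})$. At the final position $T=9N$, the state is part of the way through applying the previous superblock's product, and the current superblock is also only partly read. To handle this, the router outputs nine completion vectors
\[
\hat\omega^{(i,j)}_T := \Bigl(\prod_{s=\tau+1}^{m} G_s\bigl(B(Q^{(\ell-1)})\bigr)\Bigr) \bigl(B(R_T)\,\e_{\pi(i,j)}\bigr)\text{ padded with zeros},
\]
where $R_T$ is the product of the matrices in the current superblock up to $A^{(N)}$. All of these depend only on the key. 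Nine read heads with $q=\hat\omega^{(i,j)}_T$ then return $p_T\hat\omega^{(i,j)}_T = \operatorname{vec}(P_N)_{\pi(i,j)}$, by the same telescoping as in \Cref{thm:dnet-sim-wfa}. I expect the main obstacle to be this bookkeeping: the mismatch between the $694$-step budget and the $9$-token block length, the one-superblock delay, and showing that the completion vectors really are functions of the finite window. I would settle it by putting the whole argument in terms of superblocks, so that \Cref{thm:dnet-sim-wfa} applies almost verbatim with $B(Q)$ taking the role of $P(\tilde w)$.
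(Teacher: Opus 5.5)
Your proposal is correct and is essentially the paper's own proof. It uses the same $19$-dimensional arithmetic state from \Cref{lem:dnet-apply-matrix} with $n=9$ and $694$ steps, superblocks of $\lceil 694/9\rceil = 78$ matrices ($702$ tokens, $8$ identity steps), a one-superblock-delayed router keyed on $t \bmod 1404$ and the last $1404$ tokens, an additive first-token initialization in layer~4, and completion vectors $\bigl(\prod_{s>\tau} G_{B-1,s}\bigr)[\Pi_B e_j;\mathbf 0;0]$, where the paper's $\Pi_B$ is exactly your $B(R_T)$ because $B(\cdot)$ is multiplicative.
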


\begin{proof}
Work over $\K=\Q$. 
\paragraph{Step 1: reduce right-multiplication to a $9$-dimensional linear update.}
Let $\operatorname{vec}:\Q^{3\times 3}\to\Q^{1\times 9}$ be row-major vectorization.
Define the block-diagonal embedding
\[
B(A)\;:=\;\begin{bmatrix}A&0&0\\[2pt]0&A&0\\[2pt]0&0&A\end{bmatrix}\in\Q^{9\times 9}.
\]
Then for all $P,A\in\Q^{3\times 3}$,
\[
\operatorname{vec}(PA)=\operatorname{vec}(P)\,B(A).
\]
Let $x_t:=\operatorname{vec}(A^{(1)}\cdots A^{(t)})$. Then
\[
x_0=\operatorname{vec}(I_3),
\qquad
x_t = x_{t-1}\,B(A^{(t)}).
\]
Thus it suffices to maintain a $9$-dimensional row state and repeatedly apply the linear map
$x\mapsto xB(A)$.

\paragraph{Step 2: choose a superblock length and step budget.}
We will apply Lemma~\ref{lem:dnet-apply-matrix} with $n=9$, i.e.\ we maintain an arithmetic state of dimension
$2n+1=19$ (main $9$, scratch $9$, temp $1$), and can apply any $9\times 9$ matrix in
\[
m_{\text{arith}}:=8n^2+5n+1 \;=\; 8\cdot 81 + 45 + 1 \;=\; 694
\]
multiplicative steps.

Each input matrix $A^{(t)}$ arrives as $9$ tokens.
We group matrices into superblocks of length
\[
L := 78,
\qquad
\text{so each superblock has } 9L = 702 \text{ tokens}.
\]
Since $702 \ge 694$, we can schedule the $694$ arithmetic steps for a superblock and use the remaining
$702-694=8$ token times as \emph{identity} steps (i.e.\ apply $H(0,k)=I$).

We handle the one-superblock delay
by a \emph{completion readout} at the final position $T=9N$, analogous to the RWKV completion-vector trick:
the router computes a readout vector that (i) finishes the remaining multiplicative steps for the previous
superblock and (ii) applies the final superblock product in the readout.

\paragraph{Step 3: define superblock transition matrices.}
Partition the matrix sequence into superblocks of length $L=78$, except that the final superblock may be shorter.
Let $B:=\lceil N/L\rceil$ and let
\[
r := N-(B-1)L \in \{1,\dots,L\}.
\]
For each superblock index $b<B$ define the full-superblock product
\[
\Pi_b \;:=\; B(A^{((b-1)L+1)})\cdots B(A^{(bL)}) \in \Q^{9\times 9}.
\]
For the final superblock define the (possibly partial) product
\[
\Pi_B \;:=\; B(A^{((B-1)L+1)})\cdots B(A^{(N)}) \in \Q^{9\times 9}.
\]
Then
\[
\operatorname{vec}(P_N) \;=\; x_0\,\Pi_1\Pi_2\cdots \Pi_B,
\qquad x_0=\operatorname{vec}(I_3).
\]

Partition the matrix sequence into superblocks of length $L=78$, except that the final superblock may be shorter.
Let $B:=\lceil N/L\rceil$ and let
\[
r := N-(B-1)L \in \{1,\dots,L\}.
\]
For each superblock index $b<B$ define the full-superblock product
\[
\Pi_b \;:=\; B(A^{((b-1)L+1)})\cdots B(A^{(bL)}) \in \Q^{9\times 9}.
\]
For the final superblock define the (possibly partial) product
\[
\Pi_B \;:=\; B(A^{((B-1)L+1)})\cdots B(A^{(N)}) \in \Q^{9\times 9}.
\]

\paragraph{Step 4: offline factorization of each $\Pi_b$ into multiplicative DeltaNet steps.}
By Lemma~\ref{lem:dnet-apply-matrix} (with $n=9$), for each \emph{full} superblock product $\Pi_b$ with $b<B$
there exists a length-$694$ sequence of $19\times 19$ matrices of the form $H(\beta,k)$.
Extend this to a length-$702$ sequence by appending $8$ identity steps.
Thus each full superblock $b<B$ is associated with a fixed sequence
\[
G_{b,1},\dots,G_{b,702}\in\Q^{19\times 19},
\qquad
G_{b,\tau}=H(\beta_{b,\tau},k_{b,\tau}).
\]
(We also define $G_{0,\tau}:=I$ for all $\tau$, corresponding to $\Pi_0:=I$.)

By Lemma~\ref{lem:dnet-apply-matrix} (with $n=9$), for each $\Pi_b$ there exists a length-$694$ sequence of
$19\times 19$ matrices of the form $H(\beta,k)$ that maps
\[
[x\mid s\mid t]\;\mapsto\;[x\Pi_b\mid x\Pi_b\mid 0].
\]
Extend this to a length-$702$ sequence by appending $8$ identity steps.
Thus each superblock $b$ is associated with a fixed sequence
\[
G_{b,1},\dots,G_{b,702}\in\Q^{19\times 19},
\qquad
G_{b,\tau}=H(\beta_{b,\tau},k_{b,\tau})
\]
such that after applying all $702$ steps,
\[
[x\mid s\mid t]\;\prod_{\tau=1}^{702}G_{b,\tau} \;=\; [x\Pi_b\mid x\Pi_b\mid 0].
\]
Fix one such factorization for each possible superblock token string.

\paragraph{Step 5: the 4-layer DeltaNet construction (router + arithmetic).}

\emph{Layers 1--2 (position modulo).}
Let $m_{\text{tok}}:=702$ be the token length per superblock.
Use Layers~1--2 to compute a one-hot encoding of $t\bmod (2m_{\text{tok}})=t\bmod 1404$ (Lemma~\ref{lem:dnet-mod-2m}
with $m=m_{\text{tok}}$) and write it to the residual stream.

\emph{Layer 3 (token buffer + lookup).}
Maintain a cyclic buffer of the last $2m_{\text{tok}}=1404$ tokens using the column overwrite mechanism
(Lemma~\ref{lem:dnet-col-overwrite}) with $L=1404$ slots.
At time $t$, the last $1404$ tokens contain:
\begin{itemize}
\item the entire previous superblock (the last $702$ tokens), encoding the $L=78$ matrices whose product is $\Pi_{b-1}$,
\item and a prefix of the current superblock.
\end{itemize}
Define a lookup table $\Xi$ keyed by
\[
\Bigl(t\bmod 1404,\; w_t,w_{t-1},\dots,w_{t-1403}\Bigr)
\]
whose output is the next arithmetic-step parameters $(\beta^{(4)}_t,k^{(4)}_t)$ specifying the factor
$G_{b-1,\tau}$ to apply at this time (one-superblock delay).
Because the key space is finite (and of size at most $m^{1404}\le N^{O(1)}$ under $m\le N^c$), an MLP can implement
$\Xi$ exactly.

\emph{Layer 4 (arithmetic in dimension $19$).}
Layer~4 uses one DeltaNet head of dimension $19$ and interprets the first row of $S_t$ as the arithmetic row vector
$[x\mid s\mid t]$, where $x,s\in\Q^{1\times 9}$ and $t\in\Q$.

\textbf{Initialization (first token).}
Write $[\operatorname{vec}(I_3)\mid 0\mid 0]$ into the first row of $S_1$ using one additive write:
set $\beta_1=1$, $v_1=e_1$, and $k_1=[\operatorname{vec}(I_3)\mid 0\mid 0]^\top$.

\textbf{Purely multiplicative evolution (all later tokens).}
For all $t\ge 2$, set $v_t=\mathbf{0}$ so $S_t=S_{t-1}H(\beta_t,k_t)$.
Layer~3 provides $(\beta_t,k_t)$ so that $H(\beta_t,k_t)=G_{b-1,\tau}$, i.e.\ the $\tau$-th factor for the previous
superblock's product $\Pi_{b-1}$.
During the first superblock, take $\Pi_0=I$ and apply identity steps.

Let $T:=9N$ be the final position. Write
\[
T=(B-1)\,m_{\text{tok}}+\tau,\qquad m_{\text{tok}}:=702,\qquad 1\le\tau\le m_{\text{tok}}.
\]
(So $\tau=9r$, where $r=N-(B-1)L$ is the number of matrices in the final superblock.)
By construction, at time $T$ the arithmetic layer has applied the first $\tau$ factors of the previous full superblock
$G_{B-1,1},\dots,G_{B-1,\tau}$ (or identity factors if $B=1$), but it has \emph{not} applied $\Pi_B$.
We therefore decode $\operatorname{vec}(P_N)=x_0\Pi_1\cdots\Pi_B$ using a completion readout at time $T$.

\paragraph{Final readout via completion vectors.}
Let $p_T\in\Q^{1\times 19}$ denote the arithmetic layer's first-row state at time $T$.
For each output coordinate $j\in\{1,\dots,9\}$ define the completion vector
\[
\hat\omega_T^{(j)}
\;:=\;
\left(\prod_{s=\tau+1}^{m_{\text{tok}}} G_{B-1,s}\right)
\begin{bmatrix}
\Pi_B e_j\\[2pt]\mathbf{0}\\[2pt]0
\end{bmatrix}
\in\Q^{19\times 1},
\]
where the product is interpreted as identity when $B=1$ or $\tau=m_{\text{tok}}$.
This $\hat\omega_T^{(j)}$ is a fixed function of the finite router key
\(
(T\bmod 1404,\; w_T,w_{T-1},\dots,w_{T-1403}),
\)
so it can be returned by the layer-3 lookup.

The network outputs the nine scalars
\[
p_T\hat\omega_T^{(j)} \;=\; \bigl(\operatorname{vec}(P_N)\bigr)_j,\qquad j=1,\dots,9,
\]
i.e.\ the $9$ entries of $P_N$ in row-major order, at the final position $T=9N$.
\end{proof}

\section{Upper Bounds on PD LRNNs} \label{app:lrnn-finegrained}

\pdssm*
\begin{proof}
    Similar to the proof of \cref{thm:general-ub-pnc1}, specifically the convolutional form of LRNNs in \cref{eq:lrnn-convolutional-form},
    this proof reduces to showing that the product of a sequence $P_1 D_1, \ldots, P_n D_n$ of PD matrices can be computed in $\FO$-uniform $\NC^1$.
    \citet{terzic2025structured} observe PD matrices are closed under multiplication, so this product also has PD form.
    Let $\prod_{i=1}^n P_i D_i = \tilde P_n \tilde D_n$.
    We will provide a closed form for $\tilde P_i$ and $\tilde D_i$, and then argue that it can be easily computed using $\FO$-uniform $\NC^1$ circuits.

    Recall that $D_i$ is a diagonal matrix. Each $P_i$ is a column one-hot matrix in $\{0, 1\}^{d \times d}$ and thus represents a function $\pi_i: \{1, 2, \ldots, d\} \to \{1, 2, \ldots, d\}$. When $P_i$ is full rank, $\pi_i$ represents a permutation of $d$ elements. In general, $\pi_i$ may map two inputs to the same output, i.e., collapse two elements into one. We will thus refer to it as a \emph{relaxed permutation}.

    We first observe that for any $d \times d$ matrix $A$, both $A P_i$ and $A D_i$ have a very simple form: $A P_i$ is the matrix obtained by applying the relaxed permutation $\pi_i$ to the columns of $A$, and $A D_i$ is the matrix obtained by scaling the $j$-th column of $A$ by the $j$-th diagonal entry of $D_i$. It follows that $A P_1 D_1, \ldots, P_n D_n$ is nothing but an alternating sequence of relaxed permutations and scaling of columns of $A$. Intuitively, as long as we adjust the scaling terms appropriately, we can switch the order, i.e., first apply all relaxed permutations to the columns of $A$ and then apply appropriate scaling; this, as we will shortly see, will make the matrix product easy to compute using uniform $\NC^1$ circuits. Formalizing this, we will indeed show that $\tilde{P}_n = \prod_{i=1}^n P_i$ and $\tilde{D}_n = \prod_{j=1}^n \tilde{\pi}_{n,j} (D_j)$, where $\tilde{\pi}_{n,j} = \prod_{i=n}^{j+1} \pi_i$ is the composition of the last $n-j$ relaxed permutations (by convention, $\tilde{\pi}_{n,n}$ is the identity permutation) and $\pi(D_j)$ denotes the relaxed permutation of the diagonal entries of $D_j$ according to $\pi$, leading to another diagonal matrix.

    Conveniently, it is well-known that the product of $n$ fixed size matrices that take only finitely many values (in particular fixed size 0-1 matrices) can be computed in $\FO$-uniform $\NC^1$~\cite{immerman1992complexity}. Thus $\tilde{P}_n$ can be computed using such circuits. The relaxed permutation $\tilde{\pi}_j$ can also be similarly computed, and applied to $D_j$. Lastly, $\tilde{D}_n$, being a product of $n$ diagonal matrices, consists of $n$ products of $n$ rationals each, which again can be computed in $\FO$-uniform $\NC^1$.

    All that remains to show is that $\tilde{P}_n = \prod_{i=1}^n P_i$ and $\tilde{D}_n = \prod_{j=1}^n \tilde{\pi}_{n,j} (D_j)$. We prove this by induction on $n$. The base case of $n = 1$ holds trivially. For $n \geq 2$, assume by induction that $\prod_{i=1}^{n-1} P_i D_i = \tilde{P}_{n-1} \tilde{D}_{n-1}$ where $\tilde{P}_{n-1} = \prod_{i=1}^{n-1} P_i$ and $\tilde{D}_{n-1} = \prod_{j=1}^{n-1} \tilde{\pi}_{n-1,j} (D_j)$. Then $\prod_{i=1}^n P_i D_i = \tilde{P}_{n-1} \tilde{D}_{n-1} P_n D_n$.
    
    We will use a ``swap'' property of P and D matrices, as well as the column one-hot nature of the P matrices, to show that $\tilde{D}_{n-1} P_n D_n = P_n \tilde{D}_n$, which will finish the proof.

    For this last part, observe that left-multiplying $P_n$ by the diagonal matrix $\tilde{D}_{n-1}$ amounts to scaling the $j$-th row of $P_n$ by the $j$-th diagonal entry of $\tilde{D}_{n-1}$. Since $P_n$ is column one-hot and thus has at most one non-zero entry in each column, this operation is the same as right-multiplying $P_n$ with a relax-permuted version of the diagonal matrix, namely $\pi_n(\tilde{D}_{n-1})$. That is,
    \begin{align*}
        \tilde{D}_{n-1} P_n
          & = P_n \, \pi_n ( \tilde{D}_{n-1} ) \\
          & = P_n \, \pi_n \left( \prod_{j=1}^{n-1} \tilde{\pi}_{n-1,j} (D_j) \right) \\
          & = P_n \, \prod_{j=1}^{n-1} \pi_n \tilde{\pi}_{n-1,j} (D_j)
          \, \, 
           = \, \, P_n \, \prod_{j=1}^{n-1} \tilde{\pi}_{n,j} (D_j) \\
    \end{align*}
    Right-multiplying both sides by $D_n$, we obtain:
    \begin{align*}
        \tilde{D}_{n-1} P_n D_n 
          & = P_n \, \left( \prod_{j=1}^{n-1} \tilde{\pi}_{n,j} (D_j) \right) D_n
            = P_n \prod_{j=1}^n \tilde{\pi}_{n,j} (D_j) = P_n \tilde{D}_n
    \end{align*}
    as desired, finishing the proof.
\OMIT{
    === OLD TEXT, undergoing updates ===

    To this end, we first observe that $P D = D' P$ where $D'$ is a diagonal matrix obtained by permuting the diagonal entries of $D$ using the permutation represented by $P$. Formally, $D' = \mathrm{diag}^{-1}(P\, \mathrm{diag}(D))$, where $\mathrm{diag}(\cdot)$ denotes the function that maps diagonal matrices to the corresponding vector, i.e., the $i$-th entry of $\mathrm{diag}(D)$ is the $i$-th diagonal element of $D$.

    We will prove by induction that $\tilde{P}_n = \prod_{i=1}^n P_i$, $\tilde{D_1} = D_1$, and for $n \geq 2$, $\tilde{D}_n = \mathrm{diag}^{-1}(P_n \, \mathrm{diag}(\tilde{D}_{n-1})) D_n$.
    
    The base case of $n = 1$ holds trivially. For $n \geq 2$, assume by induction that $\prod_{i=1}^{n-1} P_i D_i = \tilde{P}_{n-1} \tilde{D}_{n-1}$. 
    Then $\prod_{i=1}^n P_i D_i = \tilde{P}_{n-1} \tilde{D}_{n-1} P_n D_n$.
    Applying our ``swap'' observation above to the two elements in the middle, this equals $\tilde{P}_{n-1} P_n \tilde{D}'_{n-1} D_n$, where $\tilde{D}'_{n-1} = \mathrm{diag}^{-1}(P_n\, \mathrm{diag}(\tilde{D}_{n-1}))$. By construction, $\tilde{P}_{n-1} P_n = \tilde{P}_n$ and $\tilde{D}'_{n-1} D_n = \tilde{D}_n$. This makes the overall product simplify to $\tilde{P}_n \tilde{D}_n$, as desired.  

    We now prove that $\tilde{P}_n$ and $\tilde{D}_n$ can be computed in $\FO$-uniform $\NC^1$. First, note that $\tilde P_i = \prod_{i=1}^n P_i$ is the product of $n$ fixed size 0-1 matrices. These matrices can take only finitely many different values. More specifically, they form a finite monoid, making their product computable via a standard binary tree construction using an $\FO$-uniform $\NC^1$ circuit~\cite{immerman1992complexity}.

    It's not immediately obvious how to compute $\tilde{D}_n$ in $\FO$-uniform $\NC^1$. For this, we will rewrite $\tilde{D}_n$ as a product of \emph{permuted} variants of $D_1, D_2, \ldots, D_n$. Let $\pi_i$ denote the permutation represented by $P_i$. For such a permutation $\pi$ and a diagonal matrix $D$, let $\pi(D)$ denote the diagonal matrix obtained by applying the permutation $\pi$ to the diagonal entries of $D$. Note that $\pi(D_1 D_2) = \pi(D_1) \pi(D_2)$ and that $\pi_1(\pi_2(D)) = (\pi_2 \pi_1)(D)$. In this notation, we can rewrite $\tilde{D}_n$ (for $n \geq 2$) simply as $\pi_n(\tilde{D}_{n-1}) D_n$. Expanding this out, we obtain:
    \begin{align*}
        \tilde{D}_n & = \pi_n( \pi_{n-1}(\tilde{D}_{n-2}) D_{n-1}) D_n \\
         & = (\pi_n \circ \pi_{n-1})(\tilde{D}_{n-2}) \cdot \pi_n(D_{n-1}) \cdot D_n \\
         & = (\pi_n \circ \pi_{n-1} \circ \pi_{n-2})(\tilde{D}_{n-3}) \cdot (\pi_n \circ \pi_{n-1})(D_{n-2}) \cdot \pi_n(D_{n-1}) \cdot D_n
    \end{align*}
    where $\circ$ denotes standard function composition, which will write below simply as a product. Expanding this further, we obtain
    \begin{align*}
        \tilde{D}_n & = \prod_{j=1}^n \left( \prod_{i=n}^{j+1} \pi_i \right) (D_j) \\
         & = \prod_{j=1}^n \tilde{\pi}_j (D_j)
    \end{align*}
    where $\tilde{\pi}_j$ is the permutation composition $\prod_{i=n}^{j+1} \pi_i$. As with $\tilde{P}_n$ earlier, all permutation compositions $\tilde{\pi}_j$ can be computed in $\NC^1$. From this, we can easily compute the permuted diagonal matrices $\tilde{\pi}_i(D_j)$. The product of $n$ resulting (fixed size) diagonal matrices can again be computed in $\FO$-uniform $\NC^1$ as before~\cite{immerman1992complexity}.
    \OMIT{
    \AS{old text... the form is slightly different}

    Next, we will describe how to compute $\tilde D_i$.
    Recall any diagonal matrix $D$ can be viewed as a matrix representation of a vector $d$ such that $D = \diag(d)$.
    Let $d_i$ be this vector for $D_i$ and let $\tilde \pi_i$ be the permutation that $\tilde P_i$ represents, with $\tilde \pi_i(d)$ denoting the application of this permutation to a vector in a way that reorders its elements.
    We can express $\tilde D_i$ as the product of diagonal matrices permuted on the diagonal, i.e.,
    \begin{equation*}
        \tilde D_i = \prod_{j=1}^i \diag \left( \tilde \pi_j \left(d_j \right) \right) .
    \end{equation*}
    Since $\tilde D_i$ is the product of diagonal matrices, it is commutative and can be computed in $\TC^0$.

    Overall, we have two stages: first compute each $\tilde P_i$ in $\NC^1$, and then (using the output of the previous computation), compute each $\tilde D_i$ in $\TC^0$.
    By composing these two circuits, we have an $\NC^1$ circuit that computes the product of a sequence of PD matrices.
    }
}
\end{proof}
\section{Single-Layer LRNNs} \label{appendix:single-layer}

\subsection{Inexpressibility of Deterministic Graph Connectivity} \label{sec:single-layer-conn}

Our main results (\Cref{sec:non-linear}) show that, assuming $\PNC^1 \neq \L$, multilayer linear RNNs cannot solve the sorted deterministic graph connectivity problem.
The same result holds \emph{unconditionally} in the single layer case, since a single-layer linear RNN is a WFA and no WFA can represent the sorted deterministic graph connectivity problem (cf.~\Cref{sec:non-linear}):

\begin{theorem}
    No weighted finite automaton can recognize sorted deterministic graph connectivity. 
\end{theorem}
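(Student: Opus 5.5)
The plan is a rank (more precisely, VC-dimension) argument on the Hankel matrix of the WFA. Suppose for contradiction that an $n$-state WFA $\mathcal A=(Q,\Sigma,\{M_\sigma\},\alpha,\omega)$ over $\Q$ recognizes sorted deterministic graph connectivity under the acceptance condition $f_{\mathcal A}(w)>0$ (a nonzero threshold $\theta$ is handled the same way by absorbing it into one extra coordinate). For any split $w=uv$ we have, by the matrix form in \Cref{def:wfa},
\[
f_{\mathcal A}(uv) = \bigl(\alpha^\top M_{u}\bigr)\cdot\bigl(M_{v}\,\omega\bigr) = \langle a_u, b_v\rangle ,
\qquad a_u,b_v\in\Q^n .
\]
So acceptance of $uv$ is determined by a homogeneous halfspace test on a vector depending only on $u$ and a vector depending only on $v$. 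Fixing a set of prefixes $u_1,\dots,u_N$, each suffix $v$ therefore induces the labeling $x\mapsto \mathbb 1[\langle a_{u_x},b_v\rangle>0]$, which is a halfspace labeling of the points $a_{u_1},\dots,a_{u_N}\in\Q^n$. Since open homogeneous halfspaces in $\R^n$ cannot shatter more than $n$ points (VC dimension $n$), at most the labelings of some $n$-point subset can all be realized. It therefore suffices to exhibit, for every $N$, prefixes $u_1,\dots,u_N$ and, for every subset $S\subseteq\{1,\dots,N\}$, a suffix $v_S$ such that $u_xv_S$ is a yes-instance iff $x\in S$. Taking $N=n+1$ then gives the contradiction.

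The construction I would use is as follows. Use source $s=1$ and shift node names so that prefix $u_x$ is the encoding of ``$s=1$, then the single edge $(1,x+1)$'' (unary blocks with separators, ending right after that edge). After reading $u_x$, the walk from $s$ sits at node $x+1$. Fix a target $t=N+2$. The suffix $v_S$ lists, in increasing order of source, the edges $(i+1,N+2)$ for each $i\in S$, and then the target block $t=N+2$. All sources $i+1\ge 2$ exceed the prefix source $1$, so the full edge list is strictly increasing (topologically sorted), and each source has at most one outgoing edge (deterministic). Moreover $j\ge i$ holds for every edge. In $u_xv_S$ the only path from $1$ goes to $x+1$, and it continues to $N+2=t$ exactly when $x\in S$. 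So $u_xv_S$ is in the language iff $x\in S$, as required.

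The main thing to check carefully is that the input format matches the definition: $s$ and $t$ encoded in unary at the start and end, with separators, and the string starting with the $\$$ symbol. The prefix/suffix split must fall at a token boundary so that $f(uv)=\langle a_u,b_v\rangle$ holds verbatim; splitting right after the separator following the $x$-block ensures this. I would also remark that the same argument gives the stronger statement that even the sign-rank of the Hankel matrix of the language's indicator is unbounded. This is the genuine obstacle relative to the usual Hankel-rank argument: recognition is by positivity rather than exact value, so plain rank bounds do not apply, and the VC-dimension of halfspaces is what replaces them.
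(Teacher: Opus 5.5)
Your proof is correct, and it takes a genuinely different route from the paper's.

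The paper argues by Hankel rank. If a WFA recognized the language, its Hankel matrix would have finite rank by Carlyle--Paz. Yet the block indexed by source prefixes $s$ and target suffixes $t$, over edge-free instances, is an identity matrix of unbounded size. That argument is shorter and uses a much simpler witness family. However, it only rules out WFAs that compute the characteristic function \emph{exactly}.

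Under the positivity acceptance the paper itself adopts ($f(w)>0$ for WFAs, and $\mathbf o^\top \mathbf y>0$ for RNNs, which the follow-up corollary on single-layer LRNNs needs), the identity block gives no contradiction. Its sign pattern is realized by $1-(s-t)^2 = (1-s^2)\cdot 1 + 2s\cdot t + (-1)\cdot t^2$, a rank-$3$ bilinear form. Indeed a WFA can compute this polynomial of the block lengths.

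You identified exactly this obstacle and replaced rank with the VC dimension of homogeneous halfspaces. That forces a richer witness family:
\begin{itemize}
\item prefixes $u_x$ that park the walk at node $x+1$;
\item suffixes $v_S$ that route an arbitrary subset of these nodes to $t=N+2$.
\end{itemize}
Together these realize all $2^N$ labelings. Your checks are sound: the sources are strictly increasing, each source has at most one outgoing edge, $j\ge i$ holds for every edge, and $t$ is reachable exactly when $x\in S$.

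Your version is strictly more general. Exact recognition is the special case of threshold $\theta=1/2$, which your extra-coordinate remark covers at the cost of using $N=n+2$ prefixes. So your argument covers the acceptance notion actually needed downstream, whereas the paper's Hankel-rank argument covers only exact computation.
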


\begin{proof}
    Assume by way of contradiction that we can represent sorted deterministic graph connectivity.
    Then the Hankel matrix for this language has finite rank \citep{carlyle1971realizations}.
    But we can easily construct an unbounded-rank subblock of the Hankel matrix where the rows are $s$ and columns are $t$, and there are no edges in the graph (this yields the identity).
    Thus, we have reached a contradiction and this problem must not be representable.
\end{proof}

A single-layer linear RNN 
can be simulated by a weighted finite automaton \citep[Theorem 4]{merrill-etal-2020-formal}.
Therefore, no linear RNN with just one layer can solve sorted deterministic graph connectivity, in the following sense:

\begin{corollary}
    The hidden state of a single linear RNN layer cannot solve sorted deterministic graph connectivity.
\end{corollary}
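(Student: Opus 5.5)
The plan is to reduce the corollary to the preceding theorem by composition. A single LRNN layer is simulated by a WFA \citep[Theorem 4]{merrill-etal-2020-formal}, and no WFA recognizes sorted deterministic graph connectivity. Concretely, I will fix a single-layer LRNN with recurrence $\mathbf S_t = \mathbf A_t \mathbf S_{t-1} + \mathbf b_t$. Since each input token comes from the finite vocabulary $\Sigma$, the matrices $\mathbf A_t$ and vectors $\mathbf b_t$ take only finitely many values, one per symbol. I will then take any linear readout of the hidden state, say $\mathbf o^\top \mathbf q^\top \mathbf S_n$ (or a fixed linear functional of $\mathbf S_n$). This gives a function $g:\Sigma^*\to\Q$ that is realized by a WFA.

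The WFA is built by the standard affine-to-linear lift. Vectorize $\mathbf S_t$ and append a constant coordinate equal to $1$, so that each update becomes multiplication by a fixed block matrix $M_\sigma = \begin{bmatrix} \mathbf A_\sigma \otimes I & \mathrm{vec}(\mathbf b_\sigma) \\ 0 & 1\end{bmatrix}$. The initial vector $\alpha$ encodes $\mathbf S_0$ together with the constant coordinate, and the final vector $\omega$ encodes the readout. Writing this as a row-vector product then gives $g(w) = \alpha^\top M_{w_1}\cdots M_{w_n}\omega$, which is exactly the WFA form.

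Next, suppose the hidden state could decide the problem, in the sense that membership is determined by thresholding $g$ at $0$, or more generally by $g$ equaling some indicator. In the exact sense the theorem uses, $g$ would coincide with the characteristic function of the language. The theorem then rules this out, because the Hankel matrix of $g$ has rank at most the WFA dimension, while the $s$-versus-$t$ subblock with no edges is an identity matrix of unbounded size.

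The main obstacle is making ``solve'' precise. If acceptance is a sign test $g(w)>0$ rather than exact equality, the Hankel argument does not apply directly, since the sign pattern of a finite-rank function need not itself have finite rank. I would first try to interpret the corollary the way the theorem is stated, namely that the hidden-state readout represents the language's indicator, and argue via Carlyle--Paz finite rank. If a threshold version is needed, I would restrict attention to the identity subblock: inputs $0^s\,\#\,0^t$ with no edges. On these inputs the WFA output is $\alpha^\top M_0^s M_\# M_0^t\omega$, a linear recurrence sequence in $t$ for each fixed $s$. The idea would be to use Skolem--Mahler--Lech-type structure to show the sign pattern ``positive iff $s=t$'' cannot hold for all $s,t$. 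This last step is the one I expect to require the most care.
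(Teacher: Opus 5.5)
Your main line (lift the affine recurrence to a WFA with a constant coordinate, then apply Carlyle--Paz to the unbounded identity block of sources $s$ versus targets $t$ with no edges) is exactly the paper's proof. It is correct when ``solve'' means the readout equals the language's indicator, which is the sense in which the paper's theorem is argued.

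The gap is in your fallback for the sign-test notion of acceptance, which is the paper's own definition of recognition ($\mathbf o^\top\mathbf y_n>0$). The step you plan there is not just delicate but false, so no Skolem--Mahler--Lech analysis of $\alpha^\top M_0^sM_\#M_0^t\omega$ can succeed. On that block, the equality pattern is the sign pattern of a rank-$3$ series:
\begin{itemize}
\item While reading $0^s$, a WFA can carry the coordinates $(1,s,s^2)$.
\item At $\#$ it moves them into a second block $(1,s,s^2,t,t^2,st)$. The same $M_0$ acts linearly on this block by $t\mapsto t+1$, $t^2\mapsto t^2+2t+1$, $st\mapsto st+s$.
\item It outputs $\tfrac12-s^2+2st-t^2=\tfrac12-(s-t)^2$, which is positive exactly when $s=t$.
\end{itemize}

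What the threshold version needs is a subblock of unbounded sign-rank, and the problem supplies one.
\begin{itemize}
\item Let the prefix $u_s$ encode only the source $s\in\{1,\dots,N\}$.
\item Let the suffix $v_A$ encode the sorted edges $(x,N+1)$ for $x\in A\subseteq\{1,\dots,N\}$, followed by the target $N+1$.
\end{itemize}
This is a valid sorted deterministic instance, and it is a yes-instance iff $s\in A$.

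Suppose a $D$-state WFA $g$ decided the language by $g>0$. Write $M_u$ for the product of transition matrices along $u$. Then $g(u_sv_A)=(\alpha^\top M_{u_s})(M_{v_A}\omega)$. So the row vectors $p_s:=\alpha^\top M_{u_s}\in\mathbb{Q}^{1\times D}$ would be shattered by the classifiers $p\mapsto\mathbf 1[p\,r>0]$, where $r$ ranges over column vectors in $\mathbb{Q}^D$.

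For $N>D$, take a nontrivial dependence $\sum_{s\le D+1}c_sp_s=0$, negated if necessary so that some $c_s>0$, and choose $A=\{s\le D+1:c_s>0\}$. Then
$0=\sum_s c_s\,g(u_sv_A)>0$.
The inequality holds because terms with $c_s>0$ are positive, and terms with $c_s<0$ are nonnegative since $g(u_sv_A)\le 0$ for $s\notin A$.

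This contradiction proves the corollary under the threshold reading, and hence also under the exact one. The Hankel-rank argument alone does not give the threshold reading.
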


The connection between LRNNs and WFAs shown here motivates considering a connection with an older line of work on rational recurrences \citep{peng2018rational}.
The central idea here, before the emergence of transformer, was that constraining RNNs so that their state was computable by a WFA was desirable because it made them more computationally efficient.
Our results here reveal why: one-layer LRNNs are a kind of rational recurrence, and all rational recurrences can be computed in $\PNC^1$, i.e., in a highly parallel fashion.
While stacking multiple LRNN layers may not remain a WFA \citep{merrill-etal-2020-formal,mohri2026rationaltransductors}, it will remain in $\PNC^1$ by \Cref{thm:general-ub-pnc1}.

\subsection{Single-Layer PD LRNNs and Deterministic WFAs} \label{sec:pdssm-dwfa}

We say that a WFA~(\Cref{def:wfa}) is \emph{deterministic} if the initial state vector $\alpha$ and each column over every transition matrix $A_\sigma$ have at most one non-zero entry.

\pdssmDwfa*

\begin{proof}
    We can represent an arbitrary deterministic WFA transition matrix in PD form as follows. Let $d$ be the number of states.
    We set $b_t$ to be a one-hot encoding of the initial state at $\$$ and the zero vector elsewhere.
    For each $\sigma$ and state $q$, we set $P_\sigma$ to be 1 at the unique $q'$ to which $q$ transitions.
    We set the entry in $D_\sigma$ corresponding to $q$ to have the weight of this transition.
    Thus, a single-layer PD LRNN is equivalent to a deterministic WFA.
\end{proof}

\section{Arithmetic Circuits over $\Q$}
\label{sec:arithQ}

We define arithmetic circuits over the rational field $\langle \Q; +, -,
\times,0,1\rangle$, which we often write simply as $\Q$. We represent a rational
number $a/b$ as a pair $(a,b)$ of integers. An arithmetic circuit $C$ 
over $\Q$ with $n$ input nodes gives rise to a function $f: \{0,1\}^n \to \Z
\times \Z$. That is, the circuit will output a representation $a/b$ of a 
rational number, but it might not be normalized. We show below that no problems
arise in this way.

\begin{proposition}
    Any arithmetic circuit family $\circuitClass = \{C_n\}_{n \geq 1}$ over $\Q$
    with logarithmic depth gives rise to a $\GapNC^1$ function. Therefore, it
    can be expressed as a boolean bounded-fan-in circuit family with 
    $O(\log n\log^* n)$ depth.
    \label{prop:arithQ}
\end{proposition}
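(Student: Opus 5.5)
We simulate each $\Q$-gate by a constant-size gadget of $\Z$-gates acting on unnormalized fraction representations. Replace every gate $g$ of $C_n$ by two $\Z$-gates computing integers $(a_g, b_g)$ with $b_g \neq 0$ and value of $g$ equal to $a_g/b_g$. Inputs and the constants $0,1$ become $(x,1)$, $(0,1)$, $(1,1)$. Addition and subtraction become $(a_g b_h \pm a_h b_g,\; b_g b_h)$, and multiplication becomes $(a_g a_h,\; b_g b_h)$. If division is present (it is needed for layer-norm), it becomes $(a_g b_h,\; b_g a_h)$; we assume, as in the paper's use, that no division by zero occurs.

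Each $\Q$-gate is replaced by $O(1)$ $\Z$-gates arranged in depth $O(1)$, so the new circuit has depth $O(\log n)$ and polynomial size. We never normalize fractions. Normalizing would require gcd computations, which we do not know how to do in $\GapNC^1$, and it is unnecessary because equality and sign can be read off unnormalized pairs. Uniformity is preserved because the gadget replacement is a local, $\FO$-definable rewriting of the gate and wire relations.

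The output is therefore a pair $(a,b)$ of $\GapNC^1$ functions, and the function $a\cdot b$ is also in $\GapNC^1$ since we can add one multiplication gate. The sign of $a/b$ equals the sign of $ab$, and $a/b = 0$ iff $a = 0$. Hence positivity and zero checks on the $\Q$-circuit output reduce to positivity and zero checks on a single $\GapNC^1$ function. This is exactly what is needed to place the language in $\PNC^1$ or $\CeqNC^1$, as used in \cref{thm:general-ub-pnc1,thm:general-ub-enc1}.

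For the second claim we invoke the known simulation of $\GapNC^1$, and hence of $\PNC^1$, by bounded fan-in boolean circuits of depth $O(\log n \log^* n)$ from \citet{jung85loglogstar}, which is already cited in \cref{subsec:circuits}. The main obstacle is the bit-size growth of the entries. Along a depth-$D$ path, the bit-length of the numerators and denominators can multiply by $2$ at each multiplication, giving a bound of $2^{O(D)} = \poly(n)$ bits for $D = O(\log n)$. This bound is what keeps the integers within the regime where the Chinese-remainder-based simulation of \citet{jung85loglogstar} applies, so we must check that $\GapNC^1$ values have polynomially many bits, which holds for this reason. Apart from that, the argument is only bookkeeping.
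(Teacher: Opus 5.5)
Your proposal is correct and follows essentially the same route as the paper: represent each rational as an unnormalized integer pair, replace every $\Q$-gate by a constant-size, constant-depth gadget of $\Z$-gates, and then invoke the $O(\log n \log^* n)$ boolean simulation of $\GapNC^1$. Your denominator $b_g b_h$ for addition is the right one (the paper's displayed $bc$ is a typo), and your further remarks---reading the sign off the single product $ab$, the $2^{O(\log n)}$ bit-length bound, and the optional division gadget---make explicit details the paper leaves implicit.
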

\begin{proof}
    To prove this, simply note that over $\Q$:
    \[
        \frac{a}{b} + \frac{c}{d} = \frac{ad + bc}{bc}
    \]
    and 
    \[
        \frac{a}{b} \times \frac{c}{d} = \frac{ac}{bd}.
    \]
    Similarly, $-\frac{a}{b} = \frac{-a}{b}$. Therefore, each $\Q$-operation
    can be simulated with at most four $\Z$-operations. Therefore, if
    $\circuitClass$ has log depth, then so are the $\Z$-circuits simulating
    $\circuitClass$. 

    So, as a corollary, $\circuitClass$ can be expressed as a boolean 
    bounded-fan-in circuit family with $O(\log n\log^* n)$ depth. \qedhere
\end{proof}

\section{Additional Details for Experiments}
\label{sec:app_exp}

\paragraph{Benchmark.}
We evaluate models on a suite of synthetic algorithmic tasks designed to test length generalization.

\textbf{Deterministic graph reachability.}
We construct a synthetic reachability dataset over directed graphs with $n$ vertices. Vertices are assigned to two buckets under label-dependent constraints: for positive samples, vertices $0$ and $n$ lie in the same bucket, while for negative samples, they lie in different buckets. All remaining vertices are assigned independently with probability $p$. Directed edges are then added deterministically by connecting consecutive vertices within each bucket, yielding two disjoint directed paths. Each instance encodes a reachability query from vertex $0$ to vertex $n$ as a binary-serialized sequence.

\textbf{Iterated matrix multiplication with modulus.}
This task requires computing prefix products of a sequence of $3\times 3$ matrices over the finite ring $\mathbb{Z}_m$, where $m$ is a fixed prime. Each example has the form
\texttt{SOS $T \mid m \mid q_k \mid M_1 \mid \cdots \mid M_T$ EOS},
where $T$ denotes the sequence length, $q_k$ is a fixed query index, and each matrix $M_t \in \{-1,0,1\}^{3\times 3}$ is sampled independently and rejected until invertible modulo $m$. Let $P_t = M_1 \cdots M_t \bmod m$ denote the prefix product at step $t$. The target sequence is
\texttt{$v_1 \mid \cdots \mid v_T$},
where $v_t$ equals the $q_k$-th entry of $P_t$ modulo $m$. Models are trained with a stepwise classification loss to predict $v_t$ at each step. The minimum sequence length for this task is $1$.

\textbf{Iterated matrix multiplication without modulus.}
In this task, models multiply a sequence of $3\times 3$ integer matrices over $\mathbb{Z}$ and predict a single binary label. Each example has the form
\texttt{SOS $T \mid M_1 \mid \cdots \mid M_T$ EOS},
where $T$ is the number of matrices and each $M_t \in \{-1,0,1\}^{3\times 3}$ is sampled i.i.d.\ with probabilities $(0.45, 0.10, 0.45)$ for entries $(-1, 0, 1)$. Let $P_T = M_1 M_2 \cdots M_T$ denote the product over $\mathbb{Z}$; the target label is defined as $y=1$ iff $(P_T)_{0,0}=0$, and $y=0$ otherwise. We optionally construct balanced splits via rejection sampling to mitigate length-dependent artifacts. Since integer products may overflow for large $T$, labeling can be performed with an optional clipping cap that truncates intermediate values during multiplication. The minimum sequence length of this task is $1$.

Across both deterministic graph connectivity and iterated matrix multiplication, we use a largely standardized training setup to enable fair comparison across architectures. As shown in Table \ref{tab:train-combined}, all models are trained with AdamW, a fixed batch size of 128, and gradient clipping at 1.0. Learning rates and weight decay are kept consistent within each task, with minor adjustments for architectures that require stronger regularization, such as DeltaNet. Training is capped at 60k steps for deterministic graph connectivity and 60k steps for iterated matrix multiplication, ensuring comparable optimization budgets while isolating architectural inductive biases as the primary source of performance differences.

\begin{table}[htbp]
\centering
\small
\setlength{\tabcolsep}{5pt}
\begin{tabular}{@{}l l l r r l@{}}
\toprule
\textbf{Task} & \textbf{Model} & \textbf{Configuration} & \textbf{LR} & \textbf{Steps} \\
\midrule

\multirow{5}{*}{DGC}
& RNN
& $d_{m}{=}256$, $L{=}2$, $d_{h}=256$,
& $3\!\times\!10^{-4}$  & 60k \\

& Transformer
& $d_{m}{=}256$, $L{=}2$, $h=4$
& $3\!\times\!10^{-4}$   & 60k \\

& Mamba
& $d_{m}{=}256$, $L{=}2$, $d_{\text{state}}{=}128$
& $3\!\times\!10^{-4}$ & 60k \\

& RWKV-7
& $d_{m}{=}256$, $L{=}2$, $h= 4$
& $3\!\times\!10^{-4}$   & 60k \\

& DeltaNet
& $d_{m}{=}256$, $L{=}2$, $h= 4$
& $3\!\times\!10^{-4}$   & 60k \\

\midrule

\multirow{5}{*}{IMM}
& RNN
& $d_{m}{=}256$, $L{=}2$, $d_{h}=256$,
& $3\!\times\!10^{-4}$   & 60k \\

& Transformer
& $d_{m}{=}256$, $L{=}2$, $h= 4$
& $3\!\times\!10^{-4}$  & 60k \\

& Mamba
& $d_{m}{=}256$, $L{=}2$, $d_{\text{state}}{=}64$
& $3\!\times\!10^{-4}$  & 60k \\

& RWKV-7
& $d_{m}{=}256$, $L{=}2$, $h= 4$
& $3\!\times\!10^{-4}$ & 60k \\

& DeltaNet
& $d_{m}{=}256$, $L{=}2$, $h = 4$
& $3\!\times\!10^{-4}$  & 60k \\

\bottomrule
\end{tabular}
\caption{Training hyperparameters for deterministic graph connectivity (DGC) and iterated matrix multiplication (IMM). $d_m$, $L$, $d_h$, $h$, $d_{\text{state}}$, and $LR$ denote the model dimension, number of layers, hidden size, number of attention heads, SSM state expansion factor, and learning rate, respectively.
All models are trained with AdamW, batch size 128, and gradient clipping at 1.0.}
\label{tab:train-combined}
\end{table}

\end{document}